\documentclass{article}

\usepackage{microtype}
\usepackage{graphicx}
\usepackage{enumitem} 
\usepackage{xspace}
\usepackage{booktabs} 
\usepackage{hyperref}
\usepackage{bbm}
\usepackage{calc}
\usepackage{multirow}
\usepackage{booktabs}   
\usepackage{graphicx}   
\usepackage[normalem]{ulem} 
\usepackage{etoc}

\usepackage[accepted]{icml2026}
\usepackage{amsmath}
\usepackage{amssymb}
\usepackage{mathtools}
\usepackage{amsthm}
\usepackage{subcaption}
\usepackage{xcolor}

\newtheorem{theorem}{Theorem} 
\newtheorem{lemma}{Lemma}
\newtheorem{assumption}{Assumption}

\usepackage[capitalize,noabbrev]{cleveref}

\theoremstyle{plain}

\theoremstyle{definition}

\theoremstyle{remark}

\numberwithin{equation}{section}

\usepackage[textsize=tiny]{todonotes}

\begin{document}

\twocolumn[
\icmltitle{When Does Sparsity Mitigate the Curse of Depth in LLMs}
  \icmlsetsymbol{equal}{*}

  \begin{icmlauthorlist}
    \icmlauthor{Dilxat Muhtar}{equal,mpi,ellis,ai_center}
    \icmlauthor{Xinyuan Song}{equal,emory}
    \icmlauthor{Sebastian Pokutta}{zuse,tech}
    \icmlauthor{Max Zimmer}{zuse,tech}
    \icmlauthor{Nico Pelleriti}{zuse,tech} \\
    \icmlauthor{Thomas Hofmann}{eth}
    \icmlauthor{Shiwei Liu}{mpi,ellis,ai_center}
  \end{icmlauthorlist}

  \icmlaffiliation{mpi}{Max Planck Institute for Intelligent Systems}
  \icmlaffiliation{ellis}{ELLIS Institute Tübingen}
  \icmlaffiliation{eth}{ETH Zürich}
  \icmlaffiliation{ai_center}{Tübingen AI Center}
  \icmlaffiliation{zuse}{Zuse Institute Berlin}
  \icmlaffiliation{tech}{Technical University of Berlin}
  \icmlaffiliation{emory}{Emory University}

  \icmlcorrespondingauthor{Shiwei Liu
  }{sliu@tue.ellis.eu}
  \icmlcorrespondingauthor{Dilxat Muhtar}{pumpkindilxat@gmail.com}


]



\printAffiliationsAndNotice{} 

\begin{abstract}

Recent work has demonstrated the \emph{curse of depth} in large language models (LLMs), where later layers contribute less to learning and representation than earlier layers. 
Such under-utilization is linked to the accumulated growth of variance in Pre-Layer Normalization, which can push deep blocks toward near-identity behavior.
In this paper, we provide evidence that \textbf{\emph{sparsity}}-like mechanisms can dampen variance propagation and are associated with \textbf{\emph{improved depth utilization}}
Our investigation covers two sources of sparsity: (i) implicit sparsity, which emerges from training and data conditions, including weight sparsity induced by weight decay and attention sparsity induced by long-context inputs; and (ii) explicit sparsity, which is enforced by architectural design, including key/value-sharing in Grouped-Query Attention and expert-activation sparsity in Mixture-of-Experts.
Our claim is thoroughly supported by controlled depth-scaling experiments and targeted layer effectiveness interventions. 
Across settings, we observe a consistent relationship: mechanisms with reduced effective interaction density tend to exhibit lower output variance and better layer differentiation.
We eventually distill our findings into a practical rule-of-thumb recipe for training depth-effective LLMs, yielding a notable \textbf{\emph{4.6 accuracy improvement}} on downstream tasks. 
Our results suggest that sparsity-like design choices are an important and previously underemphasized factor in effective depth scaling for LLMs.
Code is available at~\url{https://github.com/pUmpKin-Co/SparsityAndCoD}.
\end{abstract}
\vspace{-2em}
\section{Introduction}
\vspace{-0.5em}

\looseness=-1 Although LLMs exhibit remarkable capabilities, growing evidence shows that later Transformer layers are frequently under-utilized, contributing little to final performance~\citep{gromov2024unreasonable,men2025shortgpt,csordas2025language}. 
For instance, recent studies demonstrate that skipping layers in LLMs incurs negligible performance degradation~\citep{lad2024remarkable,yang2024laco,ma2024affinequantaffinetransformationquantization}. 
This phenomenon reveals layer redundancy that, while enabling model compression through layer pruning~\citep{li2405owlore,dumitru2024layer,yin2023outlier,zheng2025informationtheoryinspiredstrategyautomatic}, indicates inefficient utilization of training resources~\citep{du2024stacking,kapl2025depth,kamigaito2025diversity}.

\begin{figure}[tb]
  \centering
      \begin{subfigure}[t]{0.95\columnwidth}
        \centering
        \includegraphics[width=\linewidth]{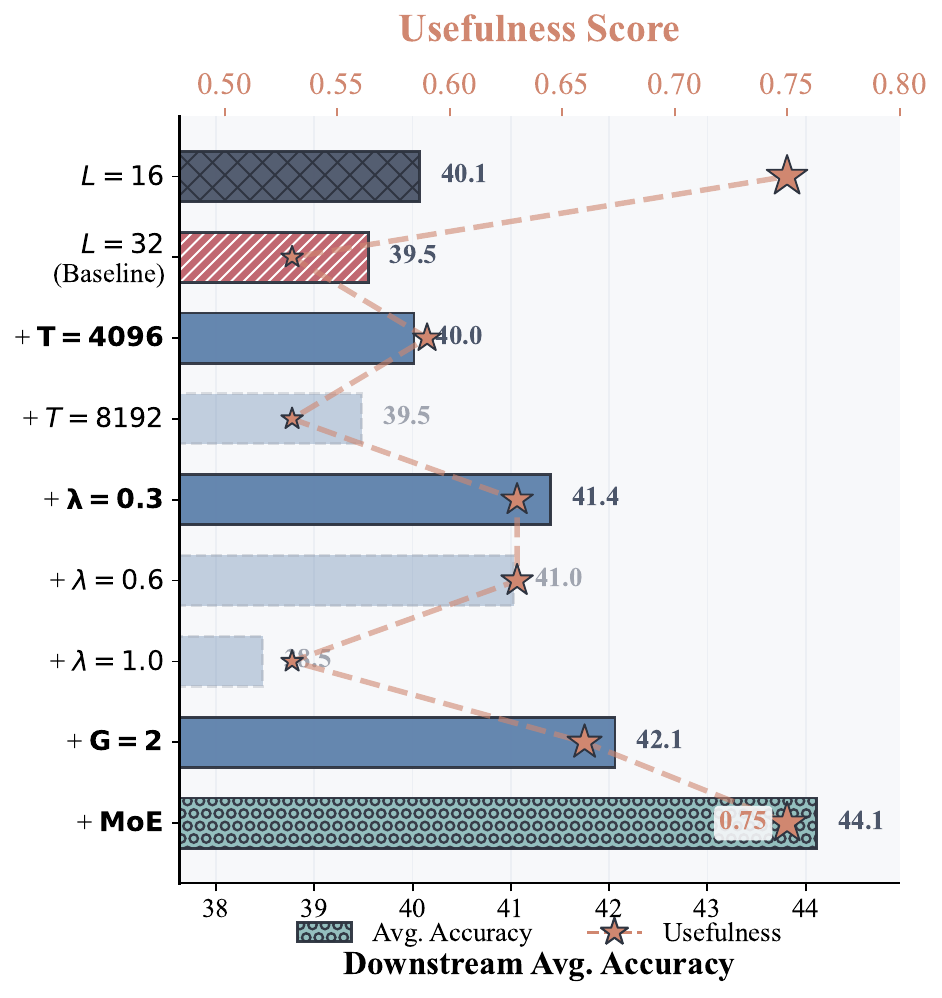}
      \end{subfigure}
      \vspace{-3mm}
    \caption{Progressive performance gains when scaling the 1.2B model to depth $L=32$ via stacking with various ''sparsity'' modules. }
    \label{fig:ablation}
    \vspace{-5mm}
\end{figure}
\citet{sun2025curse} have recently summarized this phenomenon as the \emph{curse of depth (CoD)} and identified variance propagation as a key underlying cause of this ineffectiveness. 
In widely adopted Pre-Layer Normalization (Pre-LN) architectures~\citep{xiong2020layer,kan2025stability,wang2203deepnet}, output variance tends to grow sub-exponentially with model depth~\citep{sun2025curse,takase2023spike}. 
As variance accumulates, the magnitude of the residual stream dwarfs the updates provided by individual layers, causing deep layers to become functionally ineffective as their Jacobians approach the identity. 
Consequently, the community has largely focused on explicit variance control to mitigate this accumulation, such as Scaled Initialization~\citep{zhang2019improving,luther2019variancepreserving,takase2023spike}, LayerNorm Scaling~\citep{sun2025curse}, advanced residual connections \cite{zhu2025hyperconnections,xie2025mhc}, and alternative normalization like Mix-LN~\citep{li2024mix,cai2025seednorm,ding2021cogview,wang2024deepnet}.

In parallel, a second trend has emerged in modern LLMs: the widespread adoption of \textit{sparse computation}.
We use the term sparsity broadly here to include mechanisms that reduce effective interaction density, parameter participation, activation mass, or independent computational paths.
Contemporary architectures increasingly incorporate sparsity at multiple levels: Mixture of Experts (MoE) activates only parameter subsets~\citep{liu2025deepseek,yang2025qwen3}, Grouped Query Attention (GQA) reduces attention density~\citep{ainslie2023gqa,shazeer2019fasttransformerdecodingwritehead}, and extended sequence lengths naturally induce sparse attention patterns~\citep{yuan2025native,xiao2023efficient}.
While these innovations are typically justified by efficiency gains, their impact on variance propagation dynamics remains poorly understood. 
Intriguingly, these two approaches may be more deeply connected than previously recognized.
Prior study has documented "signal collapse" in certain sparse networks, where variance diminishes as connection density decreases~\citep{dey2024sparse,dey2025don}, suggesting that sparsity might inherently regulate variance.
This observation raises a compelling question: 
\emph{could sparsity—whether explicitly enforced through architecture or implicitly induced through training—serve as an intrinsic mechanism for mitigating the CoD by regulating variance propagation?}

In this work, we provide both theoretical and empirical evidence that sparsity serves as an intrinsic variance regulator that mitigates the CoD. 
\underline{We begin} by characterizing the CoD through controlled experiments. 
We train models from scratch across varying depths (12 to 32 layers) while holding all other hyperparameters constant. 
To quantify layer effectiveness, we introduce three metrics: (1) Causal Score measures how much removing a layer disrupts subsequent layer representations; (2) Permutation Score quantifies layer interchangeability; and (3) Usefulness Score evaluates each layer's contribution to final performance.
\underline{We then} provide a formalization of how sparsity counteracts depth-induced variance accumulation. We analyze two distinct paradigms: \textbf{\emph{implicit sparsity}}, i.e., weight sparsity induced through weight decay and attention sparsity caused by long-context input; and \textbf{\emph{explicit sparsity}}, i.e.,  enforced via GQA-style key/value sharing and sparse MoE routing. We systematically compare variance propagation and layer effectiveness across: (a) weight decay strengths, (b) sequence length scaling, (c) different GQA configurations (varying number of query groups), and (d) two MoE model scales (2B and 7B parameters) alongside their dense counterparts. 
Across all settings, we observe a consistent pattern: \emph{increased sparsity correlates with reduced output variance and improved layer effectiveness.}
\underline{Finally}, we distill our findings into a practical rule of thumb for training depth-effective LLMs.
By integrating complementary sparsity mechanisms, we train a 32-layer, 1.2B-parameter model that achieves stronger performance and improved layer effectiveness compared with a naively trained 32-layer baseline (\cref{fig:ablation}).

The main contributions are summarized as follows:
\begin{itemize}
    \item We leverage three metrics, i.e., Causal Score, Permutation Score, and Usefulness Score, to quantify layer effectiveness. Using controlled depth-scaling experiments, we show that deeper models exhibit degraded layer utilization, providing empirical evidence of the \emph{curse of depth}.
    \item We show that both implicit sparsity (e.g., weight decay and long-context inputs) and explicit sparsity (e.g., Mixture of Experts and Grouped Query Attention) mitigate residual-stream variance propagation, consistently reducing variance accumulation and improving layer effectiveness.
    \item We distill our findings into a simple rule-of-thumb for training depth-effective LLMs. combining complementary sparsity mechanisms yields a notable \textbf{4.6 accuracy gain} on downstream tasks.
\end{itemize}

\vspace{-1em}

\section{Variance Propagation and Curse of Depth}
\label{sec:variance_cod}
\vspace{-0.5em}


\begin{figure*}[t]
  \centering
      \begin{subfigure}[t]{0.3\textwidth}
        \centering
        \includegraphics[width=\linewidth]{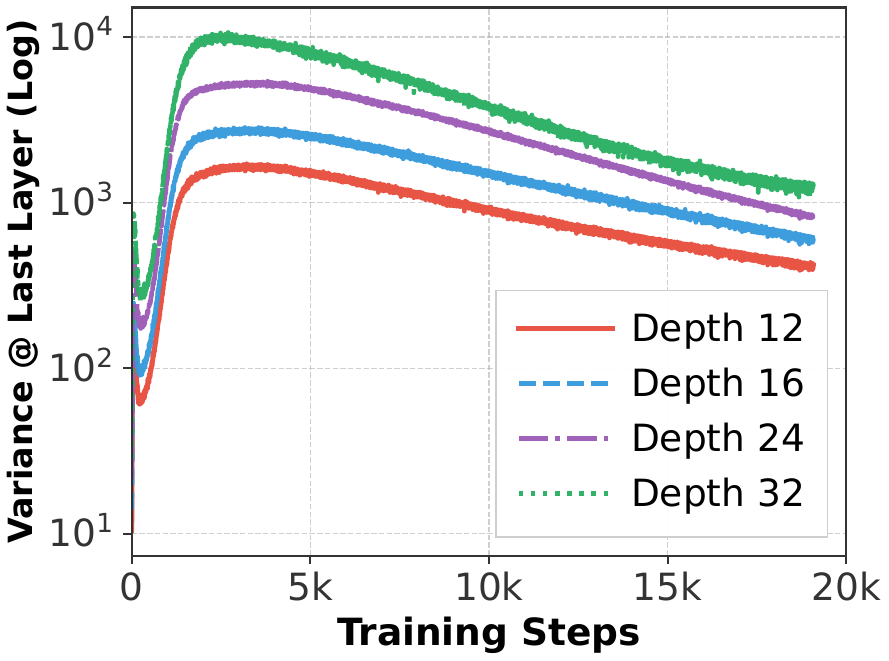}
        \caption{Last-layer variance}
        \label{sub_fig:variance_last}
      \end{subfigure}
      \begin{subfigure}[t]{0.3\textwidth}
        \centering
        \includegraphics[width=\linewidth]{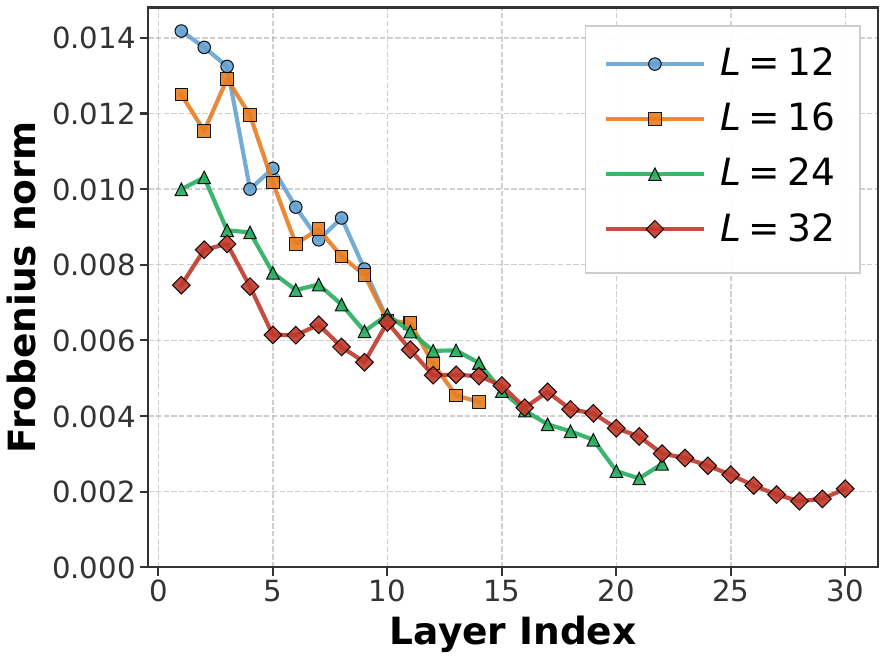}
        \caption{$||J-I||_F$ (Frobenius norm)}
        \label{sub_fig:jacobian_norm}
      \end{subfigure}
      \begin{subfigure}[t]{0.3\textwidth}
        \centering
        \includegraphics[width=\linewidth]{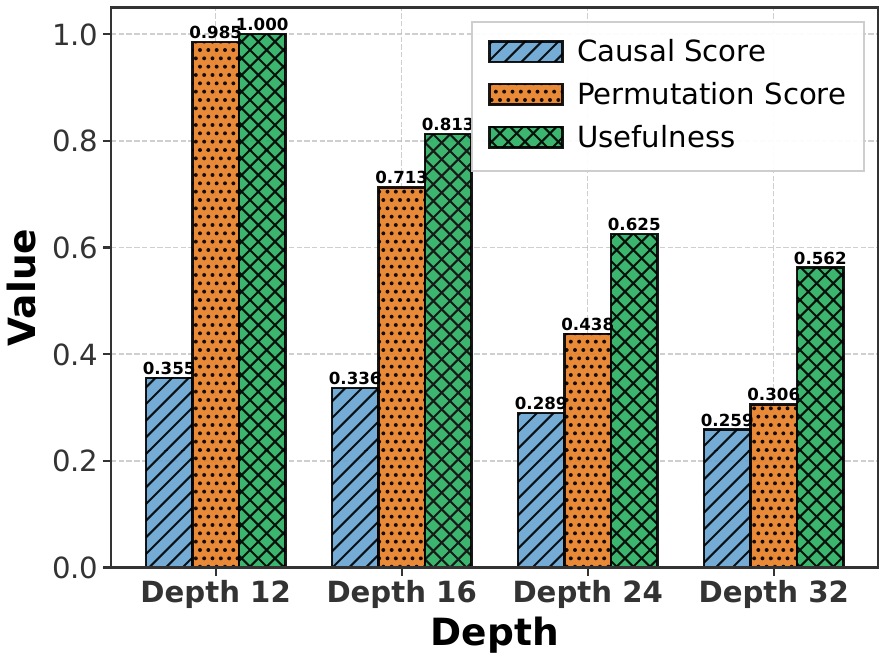}
        \caption{Layer effectiveness}
        \label{sub_fig:effectiveness}
    \end{subfigure}
    \caption{
    \textbf{(a)} Last-layer variance increases with depth under controlled width (note: total parameter count varies across depths).
    \textbf{(b)} Jacobian Frobenius norm $||J-I||_F$.  
    \textbf{(d)} Layers in deeper models exhibit lower effectiveness (usefulness and causality) and higher redundancy (permutation scores).}
    \label{fig:depth_compare_control_flops}
\end{figure*}
\begin{figure*}[t]
  \centering
      \begin{subfigure}[t]{0.23\textwidth}
        \centering
        \includegraphics[width=\linewidth]{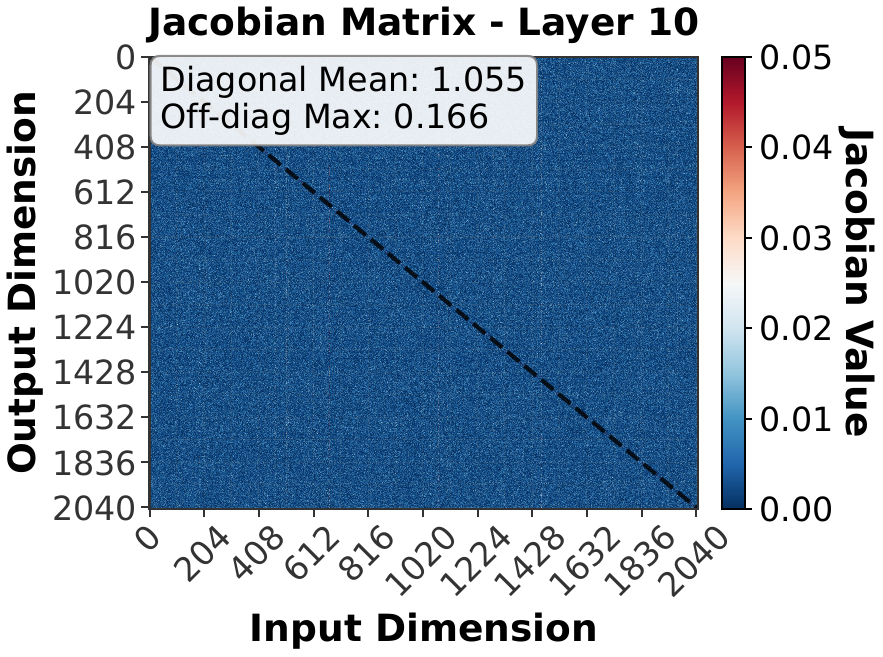}
        \caption{$L=12$}
      \end{subfigure}
      \begin{subfigure}[t]{0.23\textwidth}
        \centering
        \includegraphics[width=\linewidth]{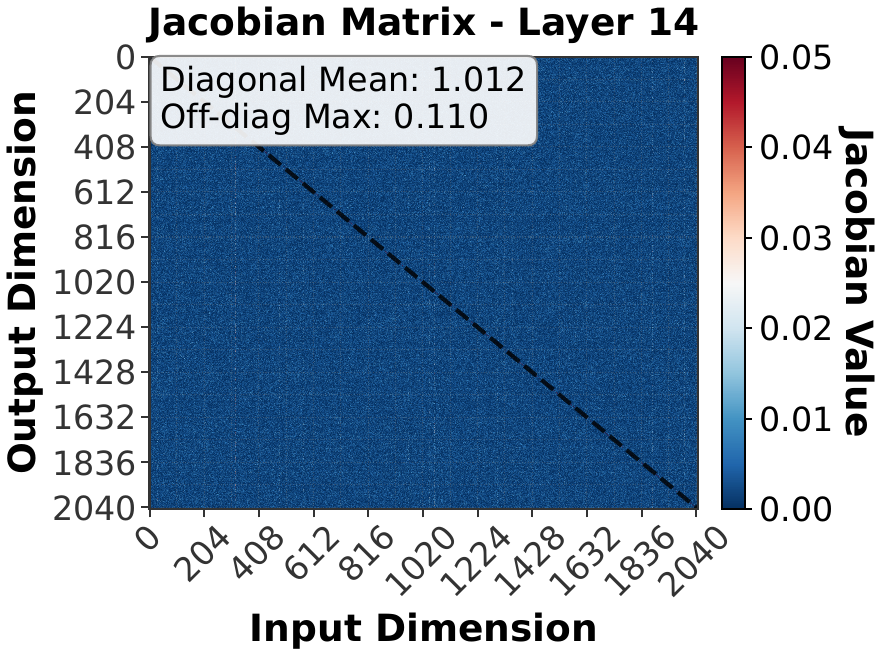}
        \caption{$L=16$}
      \end{subfigure}
      \begin{subfigure}[t]{0.23\textwidth}
        \centering
        \includegraphics[width=\linewidth]{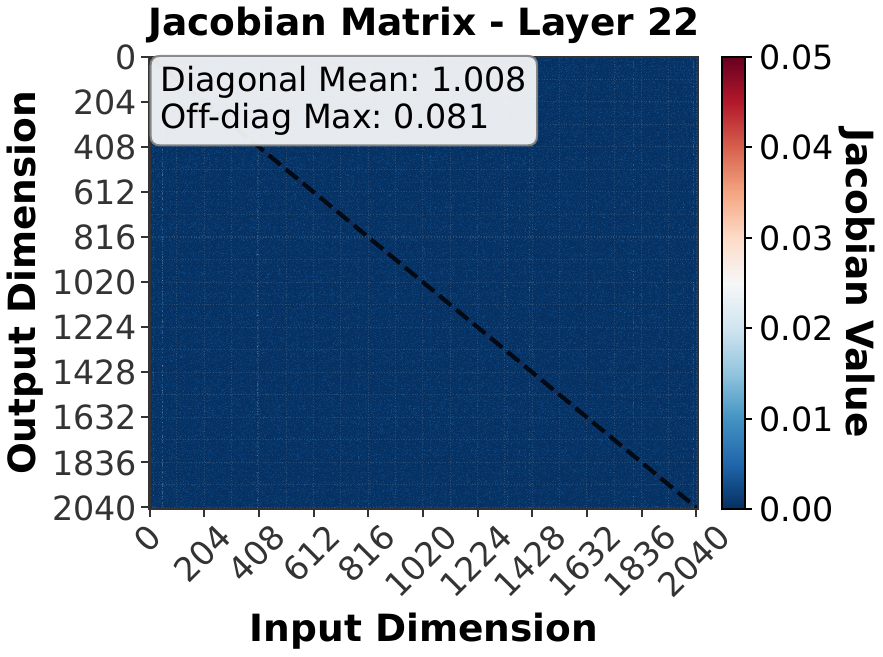}
        \caption{$L=24$}
      \end{subfigure}
      \begin{subfigure}[t]{0.23\textwidth}
        \centering
        \includegraphics[width=\linewidth]{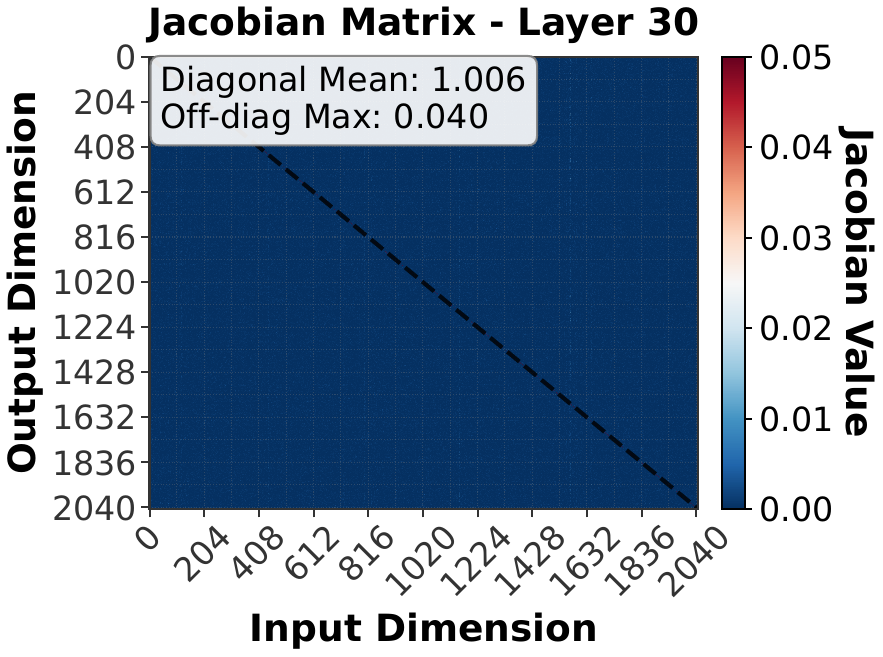}
        \caption{$L=32$}
      \end{subfigure}
       \caption{Jacobian matrices at layers $\{10, 14, 22, 30\}$ for $L \in \{12, 16, 24, 32\}$.}
    \label{fig:jacobian_diff_depth_head}
    \vspace{-5mm}
\end{figure*}


In a Pre-LN~\citep{xiong2020layer,wang2024deepnet} Transformer block at layer $\ell$, the forward pass applies layer normalization before the transformation:
\begin{equation}
\mathbf{x}_{\ell+1} = \mathbf{x}_\ell + \mathcal{F}(\text{LN}(\mathbf{x}_\ell)),
\label{eq:pre_ln_forward}
\end{equation}
where $\mathbf{x}_\ell \in \mathbb{R}^d$ is the input to layer $\ell$, $\mathcal{F}(\cdot)$ denotes either a Multi-Head Attention (MHA) or FFN module, and $\text{LN}(\cdot)$ is layer normalization. 


\begin{lemma}[gradient converge to identity~\citep{sun2025curse} Theorem 3.3]\label{lemma:identity}
For a Pre-LN Transformer with $L$ layers using Equations~\eqref{eq:pre_ln_forward}, 
assuming that the input vectors, intermediate vectors, and parameter follow independent zero-mean Gaussian distributions, and that $\sigma^2_{x_\ell}$ grows exponentially, then the partial derivative $\frac{\partial y_L}{\partial x_1}$ can be written as:
\begin{equation}\label{eq:3}
\frac{\partial y_L}{\partial x_1} = \prod_{\ell=1}^{L-1} \left( \frac{\partial y_\ell}{\partial x^\prime_\ell} \cdot \frac{\partial x^\prime_\ell}{\partial x_\ell} \right).
\end{equation}
We define the intermediate state $x'_\ell \in \mathbb{R}^{d}$ as the post-attention, pre-FFN residual state: $x'_\ell := x_\ell + \mathrm{Attn}_\ell(x_\ell)$, and the block output as
$y_{\ell} := x'_\ell + \mathrm{FFN}_\ell(x'_\ell)$. The Euclidean norm of the upper bound for Equation~\eqref{eq:3} is given as follows:
\begin{equation}
    \sigma^2_{x_\ell} \sim \exp(\ell), \quad \left\| \frac{\partial y_L}{\partial x_1} \right\|_2 \leq M,
\end{equation}
\end{lemma}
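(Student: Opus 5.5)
The plan is to follow the argument of \citet{sun2025curse}: first establish the chain-rule factorization \eqref{eq:3}, then bound each factor's operator norm in terms of the incoming variance, and finally show that exponential variance growth makes the product converge.

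\textbf{Factorization.} Apply the chain rule to the composition $x_1 \mapsto x'_1 \mapsto y_1 = x_2 \mapsto \cdots \mapsto y_L$. Since $y_\ell = x_{\ell+1}$, composing the per-sublayer Jacobians yields \eqref{eq:3} directly. By submultiplicativity,
\begin{equation*}
\Bigl\| \tfrac{\partial y_L}{\partial x_1} \Bigr\|_2 \le \prod_{\ell} \Bigl\| \tfrac{\partial y_\ell}{\partial x'_\ell}\Bigr\|_2 \Bigl\|\tfrac{\partial x'_\ell}{\partial x_\ell}\Bigr\|_2 .
\end{equation*}

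\textbf{Per-layer bounds.} Each factor has the form $I + J_{\mathcal F}\, J_{\mathrm{LN}}$, where $J_{\mathrm{LN}}$ is the Jacobian of layer normalization at the current residual state. Because LN divides by the standard deviation of its input, $\|J_{\mathrm{LN}}(x)\|_2 \lesssim \|\gamma\|_\infty / \sigma_{x_\ell}$. The Jacobian of the sublayer acting on the normalized input has bounded expected norm under the Gaussian assumptions, since its input has unit scale regardless of $\ell$. For the FFN with weights $W_1, W_2$ of variance $\sigma_W^2$, this gives roughly $\|J_{\mathrm{FFN}}\|_2 \lesssim c\,\sigma_W^2 d$. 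For attention, I would use the value/output product $W_V W_O$ plus the softmax-derivative term, both of which are bounded because the query/key inputs are normalized. Together these give
\begin{equation*}
\Bigl\|\tfrac{\partial x'_\ell}{\partial x_\ell}\Bigr\|_2 \le 1 + \frac{A}{\sigma_{x_\ell}}, \qquad \Bigl\|\tfrac{\partial y_\ell}{\partial x'_\ell}\Bigr\|_2 \le 1 + \frac{B}{\sigma_{x'_\ell}},
\end{equation*}
where $A$ and $B$ are constants independent of $\ell$. Here I use $\sigma_{x'_\ell} \ge \sigma_{x_\ell}$, which holds because the residual addition of independent zero-mean terms does not decrease variance.

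\textbf{Convergent product.} With $\sigma^2_{x_\ell} \sim e^{\ell}$, we have $\sigma_{x_\ell} \gtrsim e^{\ell/2}$. Using $1+t \le e^t$,
\begin{equation*}
\Bigl\|\tfrac{\partial y_L}{\partial x_1}\Bigr\|_2 \le \exp\Bigl(\textstyle\sum_{\ell} (A+B)\, e^{-\ell/2}\Bigr) \le \exp\Bigl(\tfrac{A+B}{\sqrt e - 1}\Bigr) =: M,
\end{equation*}
which is uniform in $L$. The same estimate shows $\|\partial y_\ell/\partial x_\ell - I\|_2 \to 0$, which gives the identity-convergence interpretation.

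\textbf{Main obstacle.} The hardest step is the attention Jacobian bound. The softmax derivative couples tokens, and Gaussian independence only controls the bound in expectation or with high probability, not deterministically. I would first try to bound it by $\|W_V W_O\|_2(1 + \|W_Q W_K^\top\|_2/\sqrt{d})$ on unit-scale inputs. I would then state $M$ as a bound holding in expectation or with high probability, which matches the heuristic level of the lemma.
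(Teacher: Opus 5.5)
Your proposal is correct at the heuristic level of the lemma and follows essentially the same route as the cited argument of \citet{sun2025curse}, which the paper does not reprove but whose per-layer estimate \eqref{eq:34} it reuses: chain-rule factorization, factors of the form $1 + C/\sigma_{x_\ell}$ obtained from the $1/\sigma$ scaling of the LN Jacobian and bounded sublayer Jacobians on normalized inputs, and a product bounded uniformly in $L$ because $\sum_\ell \sigma_{x_\ell}^{-1} < \infty$ under exponential growth. One minor point: the chain rule from $x_1$ to $y_L$ actually gives $L$ ordered factors, not the $L-1$ written in \eqref{eq:3}; this indexing quirk is inherited from the source and does not affect your bound.
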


From \cref{lemma:identity}, under conditions of exponential variance growth, the Jacobian norm $\left\| \frac{\partial y_L}{\partial x_1} \right\|_2$ remains uniformly bounded by $M$ as $L \to \infty$, with $M$ denoting the asymptotic limit to which the gradient norm converges.
Therefore, depth alone does not necessarily cause gradient instability: the bounded Jacobian norm suggests that the Transformer can remain stable in norm as depth increases.
Consequently, when $L$ is very large, deeper layer transformations approach \textbf{identity mappings} from $x_\ell$ to $y_\ell$, restricting expressivity and the model's ability to learn nontrivial mappings.

\vspace{-4mm}
\paragraph{Experimental Setup}
To verify the accumulation of variance with model depth and examplify the CoD, we conduct controlled experiments using Pre-LN architectures.
To isolate depth effects, we vary only the number of layers from 12 to 32, keeping all other architectural and training configurations fixed across experiments.
For each configuration, we perform a learning rate sweep and report results for the best-performing setting on validation data.
We track last-layer output variance during training and define three metrics to quantify layer effectiveness.
To verify Jacobian convergence to identity, we measure each layer's Frobenius deviation $\|J-I\|_F$.
Details are provided in \cref{sec:depth_exp_setting}.

\vspace{-4mm}
\paragraph{Last-Layer Variance}
\label{sec:metric_variance}
For hidden states $\mathbf{h}_L \in \mathbb{R}^{n \times d}$, we compute variance across dimensions, averaged over tokens:
\begin{equation}
\begin{aligned}
\tiny
\text{Var}(L) = \frac{1}{n} \sum_{i=1}^{n} \text{Var}(\mathbf{h}_{L,i,:}) = \frac{1}{nd} \sum_{i=1}^{n}\sum_{j=1}^{d} \left( h_{L,i,j} - \bar{h}_{L,i} \right)^2,
\end{aligned}
\end{equation}
where $\bar{h}_{L,i} = \frac{1}{d} \sum_{j} h_{L,i,j}$ is the per-token mean.
High variance indicates signal accumulation across depth, causing layer gradient to become negligible~\citep{sun2025curse}

\vspace{-2mm}
\paragraph{Causal Score}
The causal score measures how much each layer influences the computations of all subsequent layers~\citep{csordas2025language}.
For a model with $N$ layers and hidden states $\mathbf{h}_\ell$ at layer $\ell$, we define the causal effect of layer $s$ on layer $\ell > s$ as:
\begin{equation}
\small
C(s, \ell) = \frac{\|(\mathbf{h}_{\ell+1} - \mathbf{h}_\ell) - (\bar{\mathbf{h}}_{\ell+1} - \bar{\mathbf{h}}_\ell)\|_2}{\|\mathbf{h}_{\ell+1} - \mathbf{h}_\ell\|_2},
\end{equation}
where $\mathbf{h}_\ell$ denotes hidden states in the baseline model, and $\bar{\mathbf{h}}_\ell$ denotes hidden states when layer $s$ is skipped.
The global causal score aggregates these effects across all layer pairs:
\begin{equation}
\small
S_{\text{causal}} = \frac{1}{\sqrt{N}} \cdot \frac{1}{N} \sum_{s=0}^{N-1} \left[\frac{1}{N-s-1} \sum_{\ell=s+1}^{N-1} C(s, \ell)\right],
\end{equation}
where $1/\sqrt{N}$ normalizes for model depth. 
Higher causal scores indicate critical layers whose removal affects subsequent layers, while lower scores suggest minimal impact and potential redundancy.

\vspace{-2mm}
\paragraph{Permutation Score}
The permutation score quantifies layer specialization by measuring performance degradation when layer positions are swapped~\cite{kapl2025depth}.
For layers $\ell_1$ and $\ell_2$, the pairwise permutation score is:
\begin{equation}
\small
P(\ell_1, \ell_2) = \frac{|\mathcal{L}(M) - \mathcal{L}(M_{\text{swap}(\ell_1, \ell_2)})|}{|\mathcal{L}(M)|},
\end{equation}
where $\mathcal{L}(M)$ is the baseline loss and $\mathcal{L}(M_{\text{swap}(\ell_1, \ell_2)})$ is the loss after swapping.
The global permutation score averages over all possible layer pairs:
\begin{equation}
\small
S_{\text{permutation}} = \frac{2}{N(N-1)} \sum_{\ell_1=0}^{N-1} \sum_{\ell_2=\ell_1+1}^{N-1} P(\ell_1, \ell_2).
\end{equation}
Higher scores indicate that layers are less interchangeable, while scores near zero suggest redundancy.

\vspace{-2mm}
\paragraph{Usefulness Score}
\label{sec:usefull_score_define}
The usefulness score quantifies each layer's contribution through linear approximation~\cite{kapl2025depth,csordas2025language,sun2025curse}.
This approach measures the degree of nonlinearity each layer contributes, which is a fundamental indicator of computation beyond linear mappings.

For each layer $\ell$, we collect input-output pairs $\{(\mathbf{x}_i, \mathbf{y}_i)\}_{i=1}^N$ and fit an optimal linear approximation via least-squares.
We then measure performance degradation when replacing layer $\ell$ with this linear transformation:
\begin{equation}
\Delta \mathcal{L}_\ell = \mathcal{L}(M_{\ell}^{\text{linear}}) - \mathcal{L}(M).
\end{equation}
The global usefulness score measures the fraction of layers with significant ($>\alpha$) performance impact:
\begin{equation}
\label{eq:usefulness_score}
\small
S_{\text{useful}} = \frac{1}{L} \sum_{\ell=1}^{L} \mathbbm{1}\left[\frac{\mathcal{L}(M_{\ell}^{\text{linear}})}{\mathcal{L}(M)} > 1 + \alpha\right],
\end{equation}
where we set $\alpha=0.1$.
This quantifies the model's \textit{effective nonlinear depth}---the fraction of layers performing meaningful nonlinear transformations.
Higher scores indicate efficient depth utilization; lower scores reveal redundancy.

\vspace{-2mm}
\paragraph{Main Observation}
We observe a consistent empirical association from variance accumulation to diminished layer effectiveness.
\cref{sub_fig:variance_last} shows that variance grows substantially with depth, aligning with~\citep{sun2025curse}.
This variance accumulation is accompanied by Jacobian matrices toward identity mapping: \cref{sub_fig:jacobian_norm} reveals that the Frobenius norm $||J-I||_F$ decreases with depth, while \cref{fig:jacobian_diff_depth_head} shows increasingly diagonal-dominant Jacobian patterns in deeper models, supporting~\cref{lemma:identity}.
Consequently, layer effectiveness deteriorates: \cref{sub_fig:effectiveness} demonstrates that all three score progressively decline, with Usefulness dropping from 0.75 ($L=12$) to 0.53 ($L=32$).
While $L=32$ achieves better performance (\cref{tab:depth_compare_first}) with more effective layers (18 vs. 12 for $L=12$), it exhibits severe inefficiency: using $2.56\times$ more parameters while having 14 low-effectiveness layers, exemplifying CoD where most layers contribute minimally despite consuming substantial training compute.
Additional analyses are provided in \cref{app_sec:variance_all_layers,app_sec:variance_of_diff_blocks,app_sec:jacobian_all_layers,app_sec:kurtosis_analysis}.

\vspace{-1em}
\section{Sparsity as Variance Regularizer}
\vspace{-0.5em}

Having established that variance propagation is a contributor to CoD, we now investigate sparsity as a mechanism to control variance accumulation.

\begin{table}[tb]
\centering
\caption{Evaluation results for models with different depths. E/W represents effective layers (E) versus low-effectiveness layers (W) as measured by the usefulness score (\cref{sec:usefull_score_define}).}
\label{tab:depth_compare_first}
\resizebox{\columnwidth}{!}{%
\begin{tabular}{@{}lclccccc@{}}
\toprule
\hline
Depth$L$ & \# of Param. & E/W        & PPL$\downarrow$ & MMLU           & ARC-C          & ARC-E          & Hellaswag      \\ \midrule
L=12     & 900M         & \underline{12/0} & 13.72           & 24.20          & 30.89          & 58.96          & 47.24          \\
L=16     & 1.2B         & 13/3       & 13.65           & 25.20          & 31.72          & 60.45          & 47.24          \\
L=24     & 1.7B         & 15/9       & 13.53           & 26.39          & 32.59          & 61.24          & 47.50          \\
L=32     & 2.3B         & 18/14      & \textbf{13.47}  & \textbf{27.52} & \textbf{34.22} & \textbf{61.74} & \textbf{48.63} \\ \hline \bottomrule
\end{tabular}%
}
\vspace{-3mm}
\end{table}

\vspace{-2mm}
\subsection{Theoretical Analysis}
Sparsity acts as a variance regularizer in residual stacks by attenuating the energy passed to each layer update. 
The following lemma quantifies this effect: the per-layer variance gain scales as $\alpha_\ell\rho_\ell$, so smaller mask density $\rho_\ell$ yields slower variance growth with depth. To isolate the effect of reduced interaction density, we first analyze a simplified residual recursion. This result should be read as an abstract proxy model rather than a faithful description of Transformer dynamics.

\begin{theorem}[Sparsity reduces variance propagation in residual-depth]
\label{thm:sparsity_global_var}
Let $\{r_\ell\}_{\ell=0}^L$ follow the residual-depth recursion
\begin{equation}
r_{\ell+1}=r_\ell+u_\ell,\quad 
u_\ell := W_\ell\bigl(D_\ell r_\ell\bigr),\qquad \ell=0,\dots,L-1,
\label{eq:residual_sparse_rec}
\end{equation}
where $r_\ell\in\mathbb{R}^d$, $D_\ell\in\mathbb{R}^{d\times d}$ is a diagonal $0$-$1$ mask, and $W_\ell\in\mathbb{R}^{d\times d}$ is a random linear map.
Assume that for each $\ell$, $W_\ell$ is independent of $(r_\ell,D_\ell)$ and satisfies the second-moment bound and that the mask satisfies the density bound
\begin{equation}\label{eq:12}
\mathbb{E} \left[\|W_\ell x\|_2^2\right]\le \alpha_\ell \|x\|_2^2,\quad \mathbb{E} \left[\|D_\ell x\|_2^2\right]\le \rho_\ell \|x\|_2^2
\end{equation}
for all  $x\in\mathbb{R}^d$, and for some $\rho_\ell\in[0,1]$. If $\alpha_\ell\le \alpha$ and $\rho_\ell\le \rho<1$ for all $\ell$, then the residual variance satisfies
\begin{equation}\label{eq:global_var_bound_sparse}
\begin{aligned}
\small
\mathrm{Var}(r_L)\le \mathrm{Var}(r_0)\prod_{\ell=0}^{L-1}\Bigl(1+\sqrt{\alpha_\ell\rho_\ell}\Bigr)^2= O\Bigl(1+\sqrt{\alpha\rho}\Bigr)^{2L}.
\end{aligned}
\end{equation}
\end{theorem}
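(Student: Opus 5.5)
We work with the second moment $\mathbb{E}\|r_\ell\|_2^2$ as the proxy for $\mathrm{Var}(r_\ell)$; the centered version follows the same way or is identical under the paper's zero-mean convention. The plan is to prove the one-step bound
\begin{equation*}
\sqrt{\mathbb{E}\|r_{\ell+1}\|_2^2}\le\bigl(1+\sqrt{\alpha_\ell\rho_\ell}\bigr)\sqrt{\mathbb{E}\|r_\ell\|_2^2},
\end{equation*}
then square it and iterate from $\ell=0$ to $L-1$. The uniform bounds $\alpha_\ell\le\alpha$ and $\rho_\ell\le\rho$ then turn the product into $(1+\sqrt{\alpha\rho})^{2L}$.

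For the one-step bound, apply Minkowski's inequality in $L^2(\Omega;\mathbb{R}^d)$ to $r_{\ell+1}=r_\ell+u_\ell$. This gives $\|r_{\ell+1}\|_{L^2}\le\|r_\ell\|_{L^2}+\|u_\ell\|_{L^2}$, so it suffices to show $\mathbb{E}\|u_\ell\|^2\le\alpha_\ell\rho_\ell\,\mathbb{E}\|r_\ell\|^2$. We use the triangle inequality rather than expanding the square because $W_\ell$ is not assumed to be mean-zero, so the cross term $\mathbb{E}\langle r_\ell,u_\ell\rangle$ need not vanish. This is why the bound has the form $(1+\sqrt{\alpha\rho})^2$ rather than $1+\alpha\rho$.

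The key step is bounding $\mathbb{E}\|u_\ell\|^2$ by conditioning twice.

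\textbf{Step 1: condition on $(r_\ell,D_\ell)$.} Since $W_\ell$ is independent of $(r_\ell,D_\ell)$, apply the second-moment bound with the fixed vector $x=D_\ell r_\ell$. This gives
\begin{equation*}
\mathbb{E}\bigl[\|W_\ell D_\ell r_\ell\|^2\,\big|\,r_\ell,D_\ell\bigr]\le\alpha_\ell\|D_\ell r_\ell\|^2 .
\end{equation*}
Taking full expectation, $\mathbb{E}\|u_\ell\|^2\le\alpha_\ell\,\mathbb{E}\|D_\ell r_\ell\|^2$.

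\textbf{Step 2: apply the density bound.} The density bound in \eqref{eq:12} is stated for fixed $x$. To use it with $x=r_\ell$, we need $D_\ell$ independent of $r_\ell$, or at least $\mathbb{E}[\|D_\ell x\|^2\mid r_\ell]\le\rho_\ell\|x\|^2$. We will read this conditional form as the intended meaning of the assumption, as is natural for masks drawn independently at each layer. With it, conditioning on $r_\ell$ gives $\mathbb{E}\|D_\ell r_\ell\|^2\le\rho_\ell\,\mathbb{E}\|r_\ell\|^2$, which is exactly what we need.

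I expect Step 2 to be the main obstacle: without some such independence, a data-dependent mask could select the high-energy coordinates of $r_\ell$, and the bound could fail. If only the unconditional statement is available, the fallback is to state the independence of $D_\ell$ from $r_\ell$ explicitly as part of the hypothesis.

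Finally, iterating the squared one-step inequality gives
\begin{equation*}
\mathbb{E}\|r_L\|^2\le\mathbb{E}\|r_0\|^2\prod_{\ell=0}^{L-1}\bigl(1+\sqrt{\alpha_\ell\rho_\ell}\bigr)^2\le\mathbb{E}\|r_0\|^2\,\bigl(1+\sqrt{\alpha\rho}\bigr)^{2L}.
\end{equation*}
Because $\rho<1$ strictly reduces the base $1+\sqrt{\alpha\rho}$ relative to the dense case $1+\sqrt{\alpha}$, denser masks give faster variance growth with depth.
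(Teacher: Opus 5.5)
Your proposal is correct and is essentially the paper's proof. The paper uses the same two conditionings: first on $(r_\ell,D_\ell)$, using independence of $W_\ell$, then on $r_\ell$, using the density bound in exactly the conditional form you flag, which the paper invokes without comment. Together these give $\mathbb{E}\|u_\ell\|_2^2\le\alpha_\ell\rho_\ell\,\mathbb{E}\|r_\ell\|_2^2$, and your Minkowski step is the paper's expansion of the square with Cauchy--Schwarz on the cross term, written in one line.

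One small caution: the centered variance does not ``follow the same way'' without a zero-mean convention. For deterministic $r_0\neq 0$ one has $\mathrm{Var}(r_0)=0$ but generally $\mathrm{Var}(r_1)>0$. So the second-moment bound you prove is the right reading, and it matches the paper's standing assumption $\mathbb{E}[r_\ell]=0$.
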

The proof is provided in~\ref{sec:proof_main}. Theorem~\ref{thm:sparsity_global_var} shows that the variance bound depends on sparsity only through $\rho_\ell$: smaller $\rho_\ell$ (sparser $D_\ell$) yields a smaller per-layer factor $(1+\sqrt{\alpha_\ell\rho_\ell})^2$, and therefore a smaller upper bound on $\mathrm{Var}(r_L)$. Hence, sparsity (captured by $\rho_\ell$) directly controls variance: smaller $\rho_\ell$ yields a smaller bound on $\mathrm{Var}(r_L)$ across depth. 
We observe empirically that $\rho_\ell$ can also be mitigated through training-induced sparsity patterns. 
It is important to note, however, that the theoretical result in Theorem~\ref{thm:sparsity_global_var} relies on the assumption that the weight $W_\ell$ is independent of the sparsity masking variables $(r_\ell, D_\ell)$. When sparsity is induced during training, the weights and sparsity patterns become inherently coupled, meaning this strict independence no longer holds. Nevertheless, Theorem~\ref{thm:sparsity_global_var} provides a useful conceptual approximation for understanding how emergent sparsity restrains variance accumulation in practice.

\vspace{-2mm}
\subsection{Implicit and Explicit Sparsity}
Motivated by our theoretical analysis, we identify specific sparsity dimensions for experimental investigation. 
We categorize sparsity into implicit sparsity and explicit sparsity.

\vspace{-2mm}
\subsubsection{Implicit Sparsity}
\label{sec:implicit_sparsity}
Implicit sparsity refers to sparsity induced dynamically during training.
This includes sparsity from regularization (weight decay~\citep{krogh1991simple,loshchilov2019decoupledweightdecayregularization}), activation functions (ReLU~\citep{glorot2011deep,hayou2019impact,pmlr-v235-ma24f}), and input-dependent mechanisms (dropout~\citep{srivastava2014dropout}, attention patterns~\citep{vaswani2017attention}) that drive parameters or activations toward negligible values~\citep{frankle2018lottery,chen2020lottery,ma2025ompqorthogonalmixedprecision}.
We investigate two implicit sparsity dimensions: 
(1) weight decay, 
and
(2) sequence length scaling.

\vspace{-2mm}
\paragraph{Weight Decay}
\label{sec:wd_sparsity}
Weight decay applies $L_2$ regularization to model parameters, adding penalty term $\lambda \|W\|_2^2$ to the loss function~\citep{loshchilov2019decoupledweightdecayregularization}.
This drives small-magnitude parameters toward zero, inducing sparsity without structural constraints.
We quantify the induced sparsity by measuring the fraction of effectively zero parameters.
For a trained model with parameter set $\Theta$, we define:
\begin{equation}
\text{Sparsity}(\Theta; \epsilon) = \frac{1}{|\Theta|} \sum_{w \in \Theta} \mathbbm{1}[|w| < \epsilon],
\label{eq:weight_sparsity}
\end{equation}
where $\epsilon$ is a threshold and $\mathbbm{1}[\cdot]$ is the indicator function.

Weight decay provides a simple variance-control effect during training. Under the decoupled update, it contracts the contribution of the initialization over time and limits the variance injected by stochastic gradients, which together reduce the variance of downstream layer outputs (Theorem~\ref{thm:wd_var_rigid}). Moreover, in the stable regime $0<\eta\lambda\le 1$, increasing $\lambda$ tightens this control, yielding smaller output variance. We therefore interpret weight decay as an optimization-induced implicit regularizer that stabilizes activations by suppressing parameter variance throughout training. The formal statement and proof are deferred to Appendix~\ref{app:wd_theory}.

\vspace{-2mm}
\paragraph{Sequence Length}
\label{sec:sequence_length_sparsity}
Sequence length scaling induce implicit sparsity in attention mechanisms through positional bias and softmax normalization~\citep{su2024roformer,xiao2023efficient,zhang2023h2o}.
Position embeddings like RoPE~\citep{su2024roformer} introduce distance-dependent attention decay that the dot product decreases with relative distance.
As sequence length $T$ increases, this distance penalty causes attention to concentrate on a subset of positions with favorable relative distances.
Softmax normalization over longer sequences produces more peaked distributions, concentrating attention on top-scoring positions while suppressing others toward zero~\citep{xiao2023efficient,zhang2023h2o,yuan2025native}. 
We quantify attention sparsity by measuring the fraction of near-zero attention weights.
For attention weights $\mathbf{A}_{\ell,h} \in \mathbb{R}^{T \times T}$ at layer $\ell$ and head $h$, we compute the sparsity at threshold $\epsilon$ as:
\begin{equation}
\small
\text{Sparsity}_{\ell,h}(\epsilon) = \frac{1}{T^2} \sum_{i=1}^{T} \sum_{j=1}^{T} \mathbbm{1}[A_{i,j} < \epsilon],
\label{eq:head_sparsity}
\end{equation}
where $\mathbbm{1}[\cdot]$ is the indicator function.
This measures the percentage of attention weights below $\epsilon$.
The global attention sparsity aggregates across all layers and heads:
\begin{equation}
\small
\text{Sparsity}_{\text{global}}(\epsilon) = \frac{1}{L \cdot H} \sum_{\ell=1}^{L} \sum_{h=1}^{H} \text{Sparsity}_{\ell,h}(\epsilon)
\label{eq:global_sparsity}
\end{equation}
where $L$ is the number of layers and $H$ is the number of attention heads per layer.

Beyond inducing sparsity, longer sequences average out stochasticity in attention output (\cref{thm:seq_len_var}; see Appendix~\ref{app:seq_theory} for details): under the uniform-attention approximation, the output behaves like an average over $T$ independent value coordinates, so variance decreases inversely with $T$.


\vspace{-2mm}
\subsubsection{Explicit Sparsity}
Explicit sparsity refers to architectural constraints that hard-code sparsity patterns into the model structure, ensuring that a predetermined fraction of connections or computational paths are absent by design~\citep{ainslie2023gqa,dai2024deepseekmoe,fedus2022switchtransformersscalingtrillion,abnar2025parameters}.
Unlike implicit sparsity that emerges dynamically during training, explicit sparsity is fixed at model initialization through architectural choices.
In this study, we examine two prominent forsm of explicit sparsity in LLM design: (1) Grouped Query Attention (GQA), and (2) Mixture of Experts (MoE).

\begin{figure*}[th]
  \centering
      \begin{subfigure}[t]{0.24\textwidth}
        \centering
        \includegraphics[width=\linewidth]{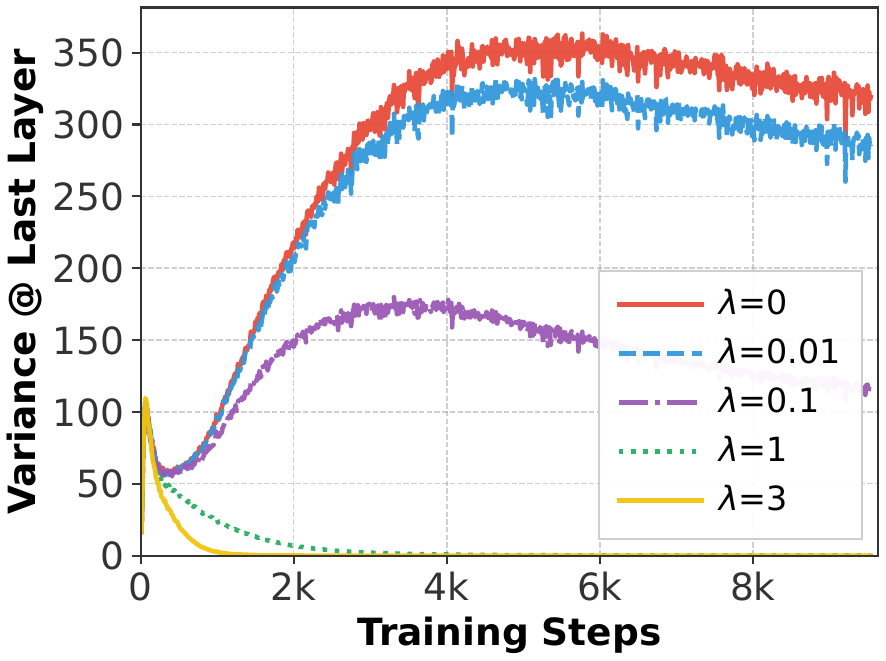}
        \vspace{-5mm}
        \caption{Last-layer variance}
        \label{fig:variance_wd}
      \end{subfigure}
      \begin{subfigure}[t]{0.24\textwidth}
        \centering
        \includegraphics[width=\linewidth]{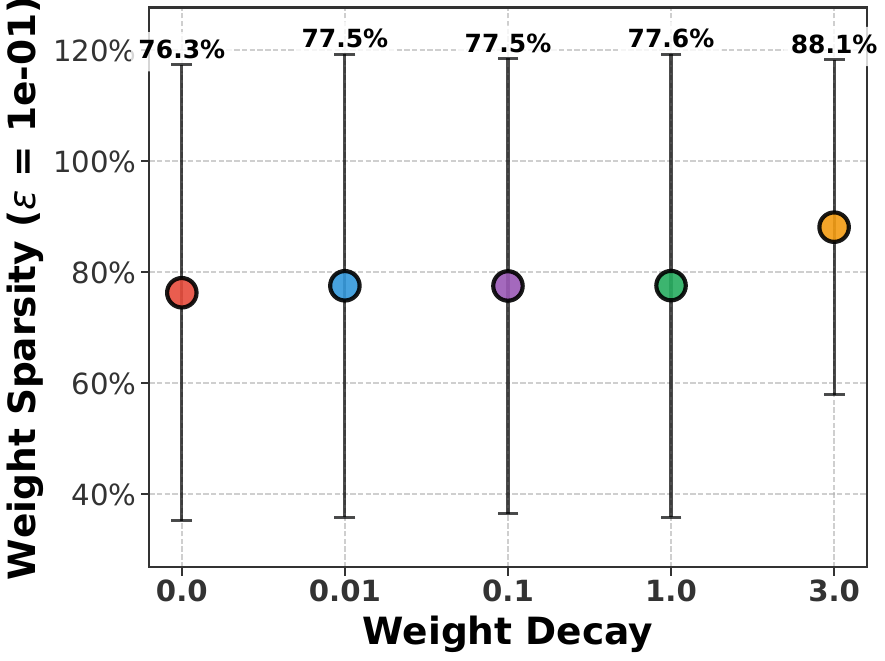}
        \vspace{-5mm}
        \caption{Weight Sparsity ($\epsilon$=$1\times10^{-1}$)}
        \label{fig:wd_sparsity_101}
      \end{subfigure}
      \begin{subfigure}[t]{0.24\textwidth}
        \centering
        \includegraphics[width=\linewidth]{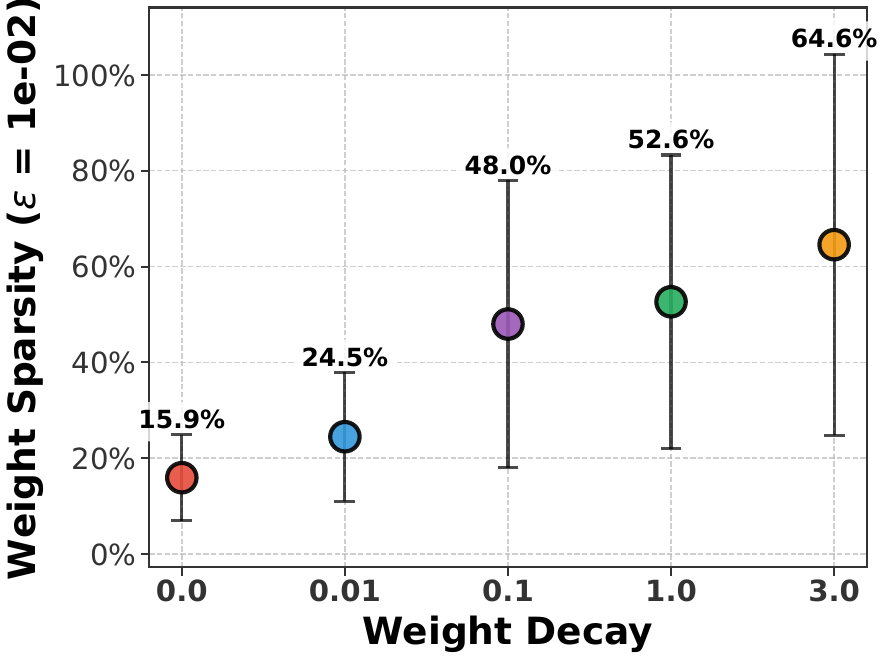}
        \vspace{-5mm}
        \caption{Weight Sparsity ($\epsilon$=$1\times10^{-2}$)}
        \label{fig:wd_sparsity_102}
      \end{subfigure}
      \begin{subfigure}[t]{0.24\textwidth}
        \centering
        \includegraphics[width=\linewidth]{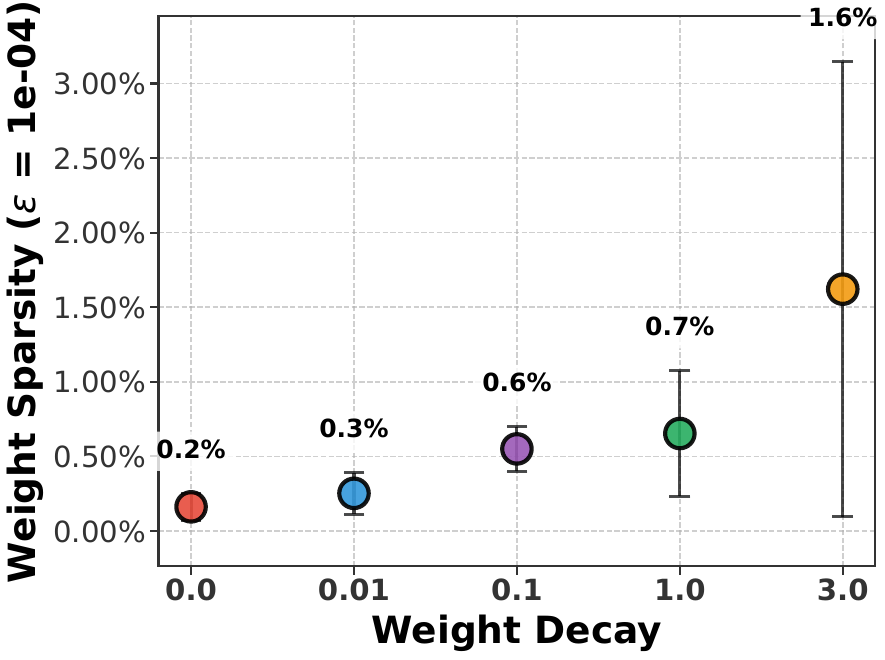}
        \vspace{-5mm}
        \caption{Weight Sparsity ($\epsilon$=$1\times10^{-4}$)}
        \label{fig:wd_sparsity_104}
      \end{subfigure}
      \vspace{-1mm}
        \caption{
            \textbf{(a)} Variance decreases with stronger weight decay.
            \textbf{(b)-(d)} Weight sparsity (fraction of weights $< \epsilon$) increases with weight decay at thresholds $\epsilon \in \{10^{-1}, 10^{-2}, 10^{-4}\}$.
        }
        \label{fig:weight_decay_sparsity}
        \vspace{-5mm}
\end{figure*}

\vspace{-2mm}
\paragraph{Grouped Query Attention}
\label{sec:gqa}
GQA~\citep{ainslie2023gqa,shazeer2019fasttransformerdecodingwritehead} reduces attention computation by sharing key-value heads across multiple query heads.
In standard multi-head attention with $H$ heads, each head maintains independent query ($\mathbf{Q}_h$), key ($\mathbf{K}_h$), and value ($\mathbf{V}_h$) projections.
GQA partitions the $H$ query heads into $G$ groups, where all queries in a group share the same key-value:
\begin{equation}
\text{Attention}_h(\mathbf{Q}_h, \mathbf{K}_{\lfloor h/G \rfloor}, \mathbf{V}_{\lfloor h/G \rfloor}).
\end{equation}
This reduces the number of independent key-value computations from $H$ to $H/G$.


Beyond computational efficiency, GQA and MQA can also be interpreted as adding a variance-reduction effect at the final attention output. Under uniform attention weights and independent, zero-mean value rows, each attention head averages over $n$ value vectors, so the per-coordinate variance is on the order of $\sigma_V^2/n$. When the outputs from $G$ heads are further averaged and treated as approximately independent across heads, this aggregation introduces an additional factor of $1/G$. Therefore, in this idealized setting, both GQA and MQA lead to an output variance scaling of approximately $\sigma_V^2/(Gn)$.

\vspace{-2mm}
\paragraph{Mixture of Experts}
\label{sec:moe_intro}
MoE introduces sparsity by replacing dense FFN layers with multiple expert networks, activating only $k$ out of $E$ experts per token~\citep{fedus2022switchtransformersscalingtrillion,bai2023qwen,dai2024deepseekmoe,bi2024deepseek}.
Specifically, a gating network routes each token to its top-$k$ experts:
\begin{equation}
\mathbf{h}_{\text{out}} = \sum_{i \in \text{Top-}k(\mathbf{W}_g \mathbf{x})} g_i(\mathbf{x}) \cdot \text{Expert}_i(\mathbf{x})
\end{equation}
where $g_i(\mathbf{x})$ are normalized gating weights and $\text{Expert}_i$ are independent FFN networks.

Beyond computational sparsity, Top-$k$ MoE also yields a variance-control effect: because the layer output is an explicit average over the $k$ selected experts, the variability of both the layer output and its local input--output sensitivity (via the Jacobian) is reduced by averaging. Under standard independence and locally-constant routing assumptions, this averaging leads to an approximately $1/k$ reduction in per-coordinate output variance and in Jacobian variance; see \cref{thm:moe_var} (Appendix~\ref{app:moe_theory}) for the formal statement.

\vspace{-2mm}
\section{Verification of Variance Dampening}
In this section, we validate whether sparsity mitigates variance accumulation and the CoD through end-to-end training.

\vspace{-2mm}
\subsection{Implicit Sparsity}
\subsubsection{Weight Decay}
\label{sec:weight_decay_main_result}
\paragraph{Experimental Settings}
We evaluate the influence of weight decay on model sparsity and variance propagation by training 1.2B-parameter models with varying weight decay coefficients.
We employ the AdamW optimizer~\citep{loshchilov2019decoupledweightdecayregularization} with weight decay values of $\lambda \in \{0, 0.01, 0.1, 1.0, 3.0\}$, while keeping all other hyperparameters fixed.
We analyze four key metrics: (1) model parameter sparsity as defined in~\cref{sec:wd_sparsity}; (2) the evolution of last-layer output variance throughout training; (3) validation perplexity and layer effectiveness scores.
Detailed hyperparameters are provided in \cref{sec:wd_exp_setting}.

\vspace{-2mm}
\paragraph{Main Observation}
\begin{table}[tb]
\centering
\caption{Validation perplexity and layer effectiveness scores across weight decay .}
\label{tab:weight_decay_compare}
\resizebox{0.9\columnwidth}{!}{%
\begin{tabular}{l|c|ccc}
\toprule
\hline
Weight Decay ($\lambda$) & PPL ($\downarrow$) & \multicolumn{3}{c}{Effectiveness Score} \\ \hline
            &                      & Causal   & Permutation   & Usefulness   \\ \hline
$\lambda=0$      &  15.63   & 0.25     & 0.41          & 0.75         \\
$\lambda=0.01$   &  15.20   & 0.26     & 0.42          & 0.75         \\
$\lambda=0.1$    &  \textbf{14.83}   & \textbf{0.31}      & \textbf{0.52}          & \textbf{0.81}     \\
$\lambda=1.0$    &  15.55   & 0.20     & 0.50          & 0.69         \\ 
$\lambda=3.0$    &  773.42  & 0.03     & 0.07          & 0.63         \\ \hline\bottomrule
\end{tabular}%
}
\vspace{-4mm}
\end{table}
Variance trajectories (\cref{fig:variance_wd}) show consistent reduction with stronger weight decay, with last-layer variance decreasing as $\lambda$ increases.
At extreme values ($\lambda \in \{1.0, 3.0\}$), variance falls below 25, with $\lambda=3.0$ driving most weights below $10^{-2}$ (\cref{fig:wd_sparsity_102}).
Weight sparsity increases correspondingly across all thresholds (\cref{fig:wd_sparsity_101,fig:wd_sparsity_102,fig:wd_sparsity_104}), confirming that $L_2$ regularization induces parameter-level sparsity.
Performance analysis (\cref{tab:weight_decay_compare}) reveals that within the optimal range ($\lambda \in [0, 0.1]$), both perplexity (15.63$\rightarrow$14.83) and usefulness score (0.75$\rightarrow$0.81) improve with weight decay, demonstrating that sparsification enhances model quality and depth utilization.
However, excessive regularization ($\lambda \geq 1.0$) causes degradation despite continued variance reduction, with $\lambda=3.0$ leading to collapse (perplexity 773.42).
This \textit{over-dampening phenomenon}, where excessive weight decay cripples model capacity despite achieving low variance, demonstrates that variance control must be balanced with model capacity.

\begin{figure*}[tb]
  \centering
      \begin{subfigure}[t]{0.24\textwidth}
        \centering
        \includegraphics[width=\linewidth]{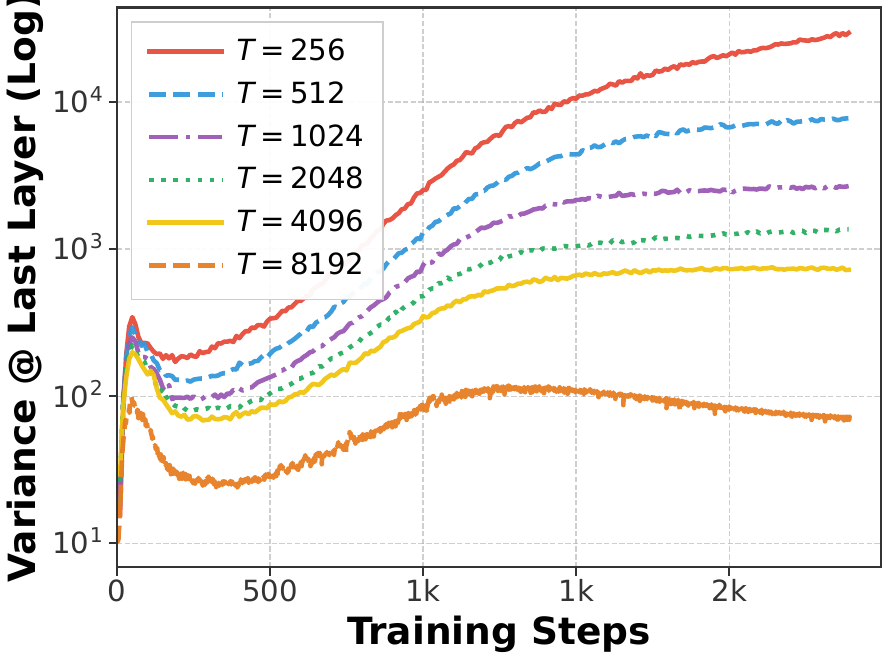}
        \vspace{-5mm}
        \caption{Last-layer variance}
        \label{sub_fig:variance_length}
      \end{subfigure}
    \begin{subfigure}[t]{0.24\textwidth}
        \centering
        \includegraphics[width=\linewidth]{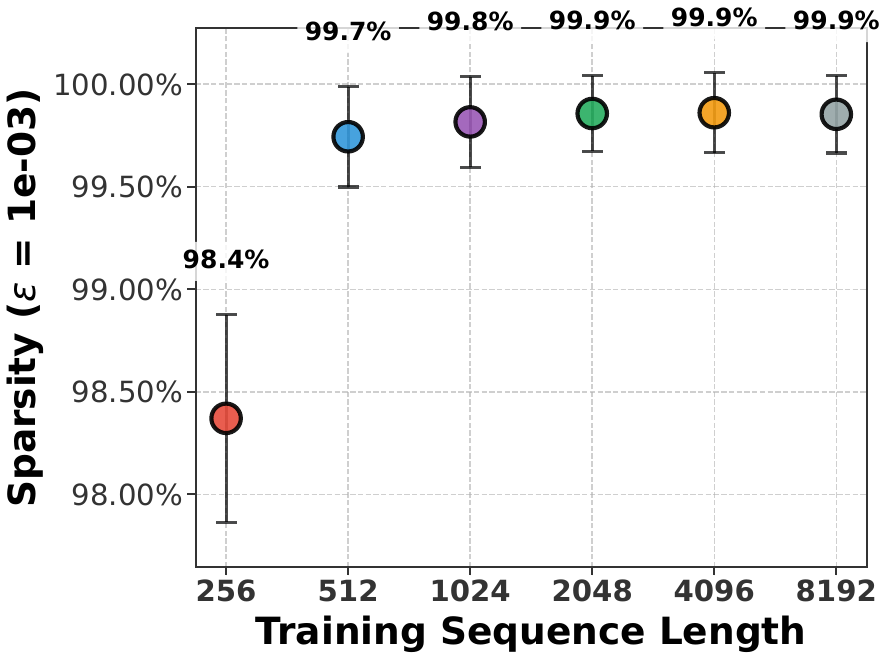}
        \vspace{-5mm}
        \caption{Attn. sparsity ($\epsilon$=$1\times 10^{-3}$)}
        \label{sub_fig:length_sparsity_1e03}
    \end{subfigure}
    \begin{subfigure}[t]{0.24\textwidth}
        \centering
        \includegraphics[width=\linewidth]
        {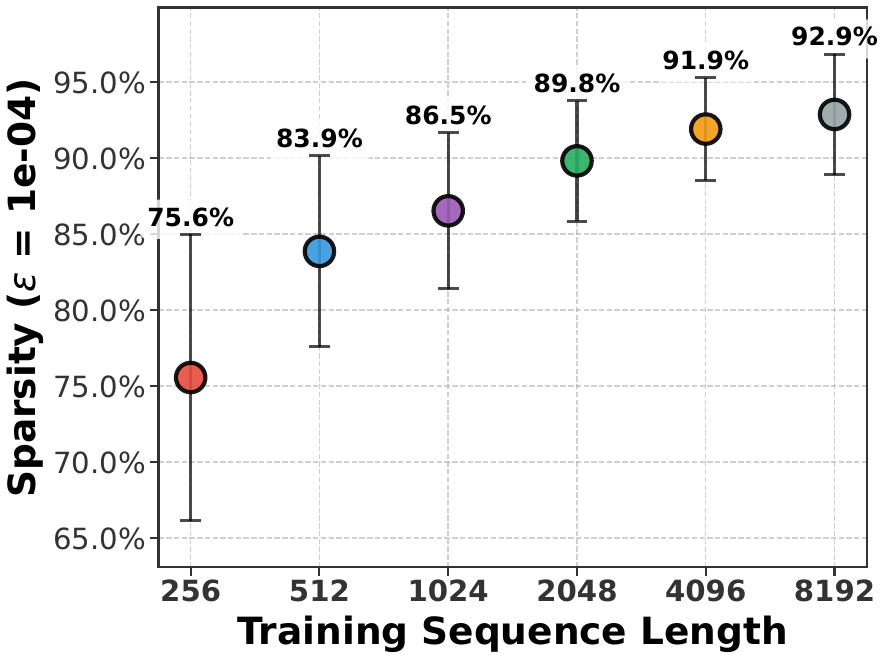}
        \vspace{-5mm}
        \caption{Attn. sparsity ($\epsilon$=$1\times 10^{-4}$)}
        \label{sub_fig:length_sparsity_1e04}
    \end{subfigure}
    \begin{subfigure}[t]{0.24\textwidth}
        \centering
        \includegraphics[width=\linewidth]{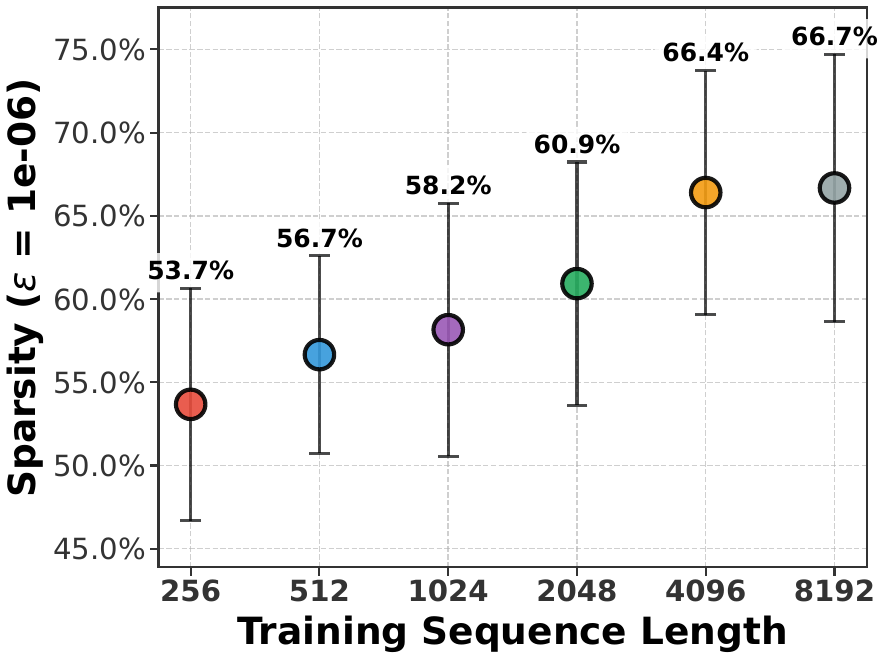}
        \vspace{-5mm}
        \caption{Attn. sparsity ($\epsilon$=$1\times 10^{-6}$)}
        \label{sub_fig:length_sparsity_1e06}
    \end{subfigure}
    \vspace{-2mm}
    \caption{
        \textbf{(a)} Last-layer output variance decreases with longer sequences.
        \textbf{(b)-(d)} Attention sparsity (fraction of weights below threshold $\epsilon$) increases with sequence length across all thresholds.
        Results averaged over 3 random seeds.
    }
    \label{fig:sequence_kength_sparsity_4096}
    \vspace{-2mm}
\end{figure*}
\begin{figure*}[tb]
  \centering
      \begin{subfigure}[t]{0.19\textwidth}
        \centering
        \includegraphics[width=\linewidth]{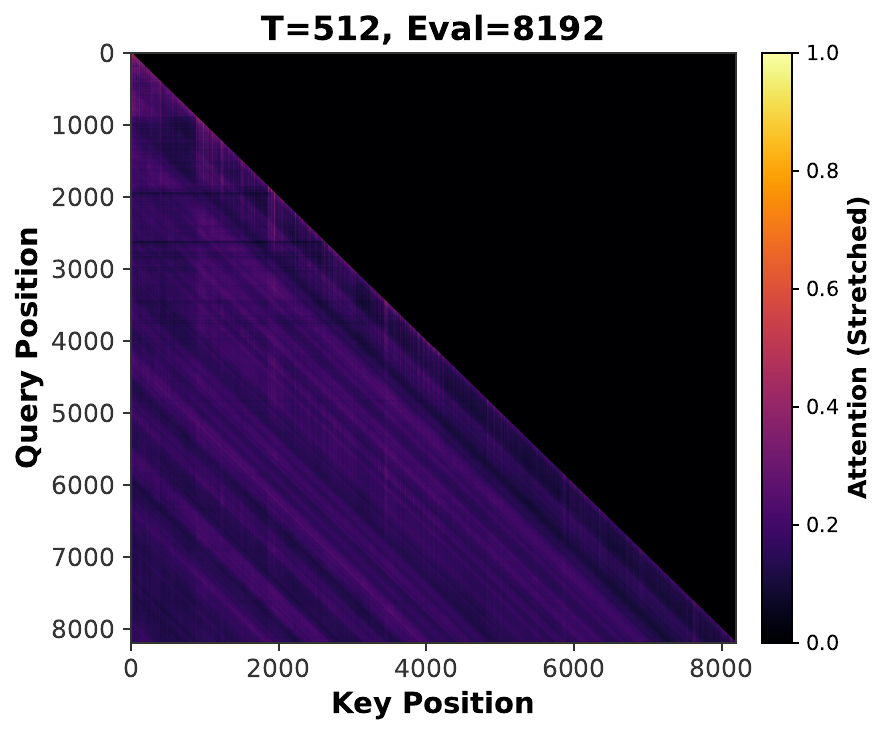}
      \end{subfigure}
    \begin{subfigure}[t]{0.19\textwidth}
        \centering
        \includegraphics[width=\linewidth]{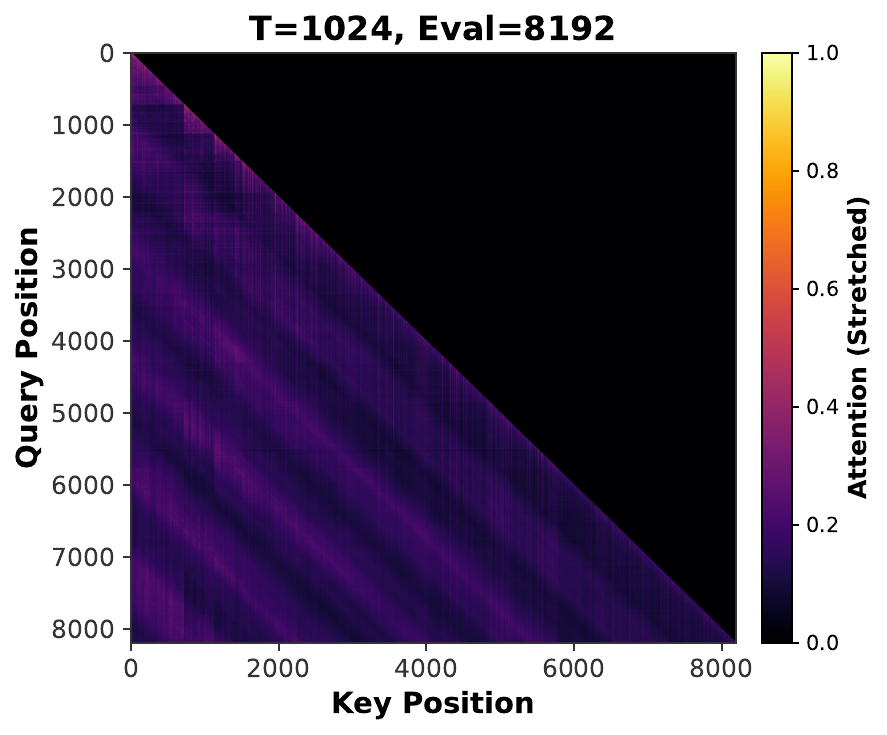}
    \end{subfigure}
    \begin{subfigure}[t]{0.19\textwidth}
        \centering
        \includegraphics[width=\linewidth]{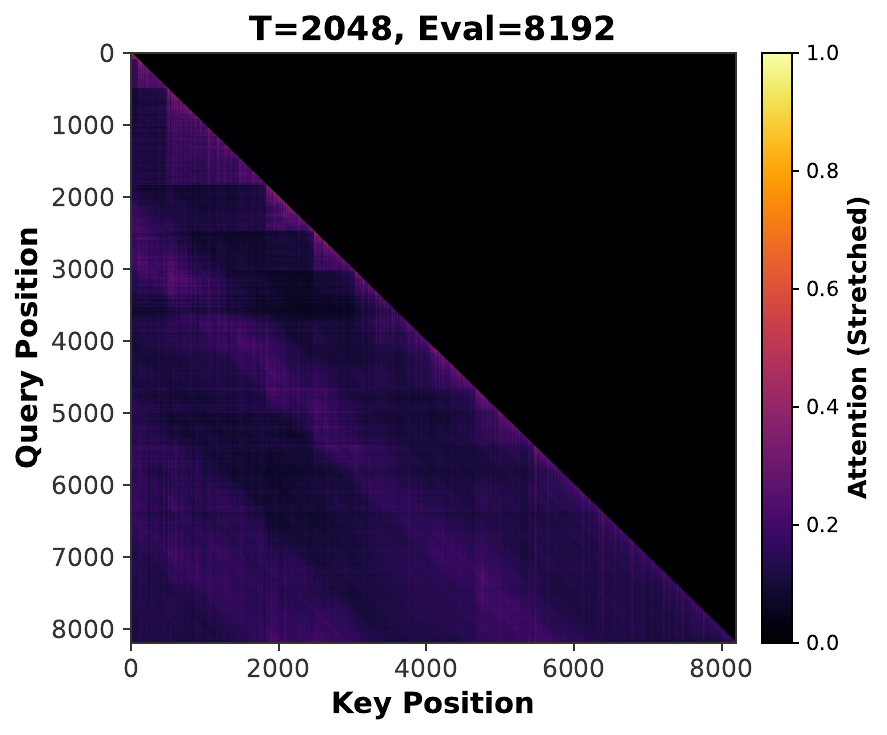}
    \end{subfigure}
    \begin{subfigure}[t]{0.19\textwidth}
        \centering
        \includegraphics[width=\linewidth]{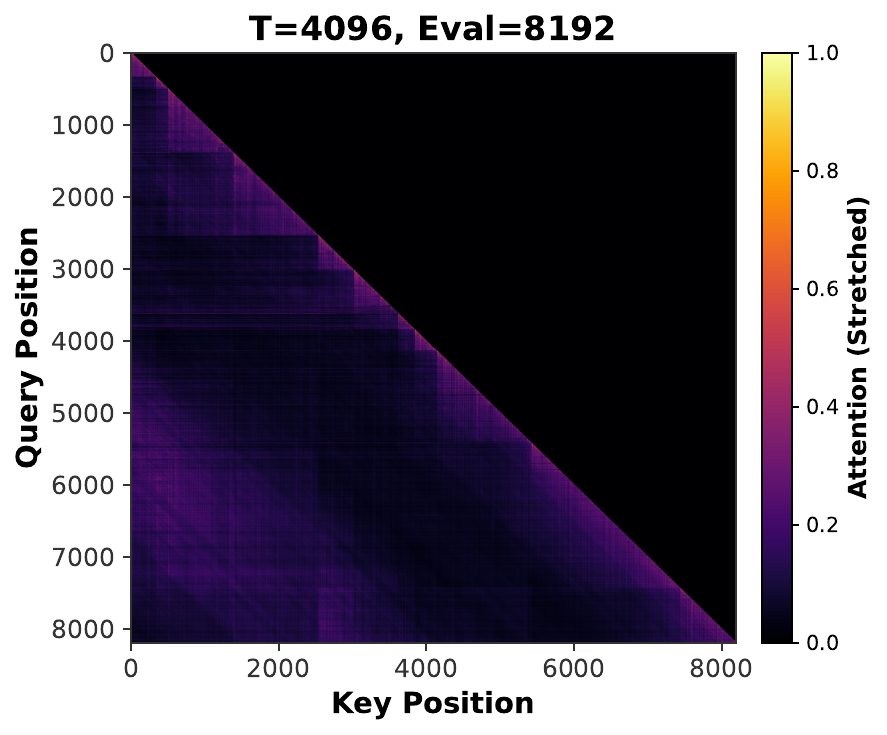}
    \end{subfigure}
    \begin{subfigure}[t]{0.19\textwidth}
        \centering
        \includegraphics[width=\linewidth]{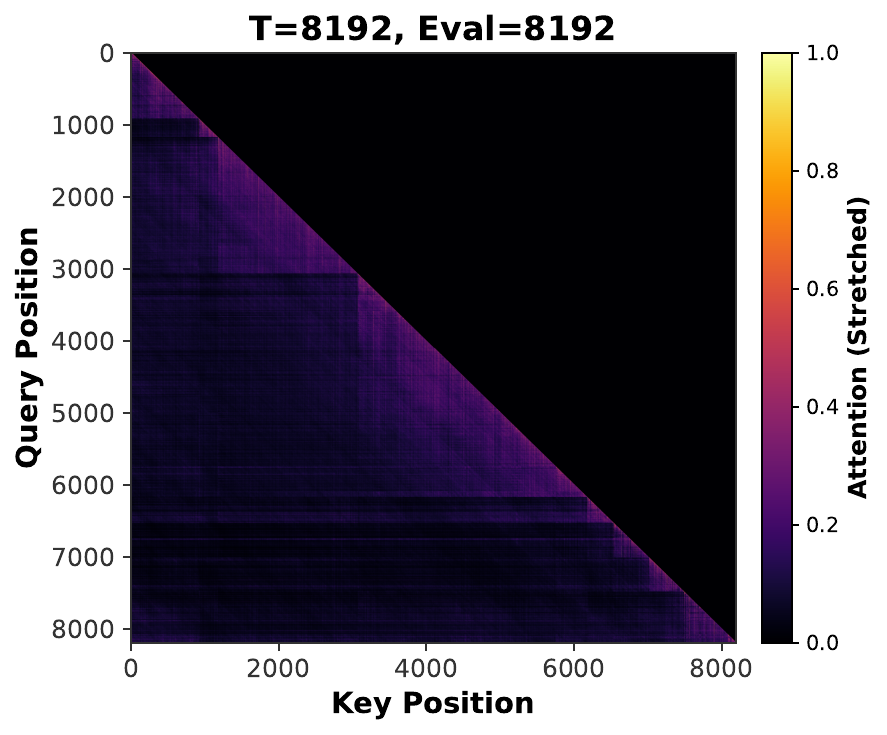}
    \end{subfigure}
    \vspace{-2mm}
    \caption{
        Average attention maps from head 0 across all layers for models trained with different sequence lengths (averaged over 100 random held-out samples).
        Attention weights are power-scaled ($A^{0.2}$) for visualization.
    }
    \label{fig:attn_vis}
    \vspace{-4mm}
\end{figure*}

\vspace{-2mm}
\subsubsection{Sequence Length}
\label{sec:sequence_length_main_result}
\paragraph{Experimental Settings}
To evaluate the effectiveness of sequence length scaling for variance dampening, we train 1.2B models with varying maximum sequence lengths ranging from $T=256$ to $T=8192$ tokens.
To isolate the effect of sequence length, we maintain constant computational budget across all configurations by adjusting the number of training steps inversely proportional to sequence length.
All other hyperparameters remain identical across experiments.
We analyze three key metrics: (1) the evolution of last-layer output variance throughout training, (2) the induced sparsity (defined in~\cref{sec:sequence_length_sparsity}), and (3) validation perplexity and layer effectiveness scores.
Detailed hyperparameters and training configurations are provided in~\cref{sec:sequence_length_exp_setting}. 

\begin{table}[tb]
\centering
\caption{Validation perplexity and layer effectiveness scores across training sequence lengths.}
\label{tab:sequence_length_compare}
\resizebox{0.9\columnwidth}{!}{%
\begin{tabular}{l|c|ccc}
\toprule
\hline
Sequence Length ($T$)         & PPL ($\downarrow$)   & \multicolumn{3}{c}{Effectiveness Score} \\ \hline
            &                      & Causal   & Permutation   & Usefulness   \\ \hline
$T=256$      &  18.51   & 0.24     & 0.41          & 0.69                                                             \\
$T=512$      &  15.71   & 0.25      & 0.45         & 0.75                \\
$T=1024$     &  14.77   & 0.28      & 0.46          & 0.75                                                          \\
$T=2048$     &  \textbf{14.51} & 0.30          &  0.50             &      \textbf{0.81}       \\ 
$T=4096$     &  14.52          & \textbf{0.31}           & \textbf{0.81}                            &    \textbf{0.81}                       \\
$T=8192$     &  16.30          & 0.29           & 0.49                            &    0.75                       \\ \hline\bottomrule
\end{tabular}%
}
\vspace{-2mm}
\end{table}

\vspace{-2mm}
\paragraph{Main Observation}
Variance trajectories (\cref{sub_fig:variance_length}) show consistent reduction with longer training sequences, with $T=256$ exhibiting the highest variance.
Measuring attention sparsity on held-out samples at $T=8192$ (\cref{sub_fig:length_sparsity_1e03,sub_fig:length_sparsity_1e04,sub_fig:length_sparsity_1e06,fig:attn_vis}), we observe increasing sparsity across all thresholds ($\epsilon \in \{10^{-3}, 10^{-4}, 10^{-6}\}$) as training length grows, confirming that longer contexts induce stronger implicit sparsity.
Performance analysis (\cref{tab:sequence_length_compare}) reveals optimal scaling from $T=256$ to $T=2048$: perplexity improves by 4+ points while usefulness increases (0.69$\rightarrow$0.81).
However, further scaling to $T=8192$ yields diminishing returns despite lower variance, suggesting \emph{over-dampening} that restricts model capacity similar to excessive weight decay~\citep{li2025vanishing,izzo2025quantitative,saada2024mind}.
Additional analysis of attention entropy and sparsity at different evaluation lengths is provided in \cref{sec:attention_entropy,sec:extended_variance}.

\vspace{-2mm}
\subsection{Explicit Sparsity}
\subsubsection{Grouped Query Attention}
\label{sec:gqa_main_result}
\paragraph{Experimental Settings}
We analyze the variance dampening effect of GQA~\citep{ainslie2023gqa} by varying the number of key-value head groups $G$.
We train 1.2B models with equal training FLOPs using group sizes $G \in \{1, 4, 16\}$ (MHA, GQA, and MQA~\citep{shazeer2019fasttransformerdecodingwritehead}, respectively).
We evaluate the last-layer output variance,  validation perplexity, and layer effectiveness scores.
Detailed model definition and training configurations are provided in~\cref{sec:group_query_attention_exp_setting}.

\vspace{-2mm}
\paragraph{Main Observation}
\begin{table}[tb]
\centering
\caption{Impact of GQA on model performance and layer-wise effectiveness.}
\label{tab:attn_compare}
\resizebox{0.9\columnwidth}{!}{%
\begin{tabular}{l|c|ccc}
\toprule
\hline
  Group Size ($G$)          & PPL ($\downarrow$)  &\multicolumn{3}{c}{Effectiveness Score} \\ \hline
            &               & Causal   & Permutation   & Usefulness   \\ \hline
$G=1$    & 14.52  & 0.31     & 0.51          & 0.81         \\
$G=4$    & 14.50           & 0.31     & 0.57          & \textbf{0.87}             \\
$G=16$   & \textbf{14.47}           &\textbf{0.34}   & \textbf{0.63}              &       \textbf{0.87}       \\ \hline\bottomrule
\end{tabular}%
}
\vspace{-2mm}
\end{table}
Results in \cref{fig:variance_gqa} show that variance decreases monotonically with group size.
MQA ($G=16$) exhibits 2$\times$ lower variance than MHA ($G=1$).
Under equal training FLOPs (\cref{tab:attn_compare}), MQA achieves both better performance (PPL: 14.52 $\rightarrow$ 14.47) and higher layer effectiveness: usefulness score increases 7\% (0.81 $\rightarrow$ 0.87), yielding 13 effective layers for MHA versus 14 for MQA.
This confirms that key-value sharing reduces independent attention interactions and is associated with lower variance.

\begin{figure}[tb]
  \centering
      \begin{subfigure}[t]{0.7\columnwidth}
        \centering
        \includegraphics[width=\linewidth]{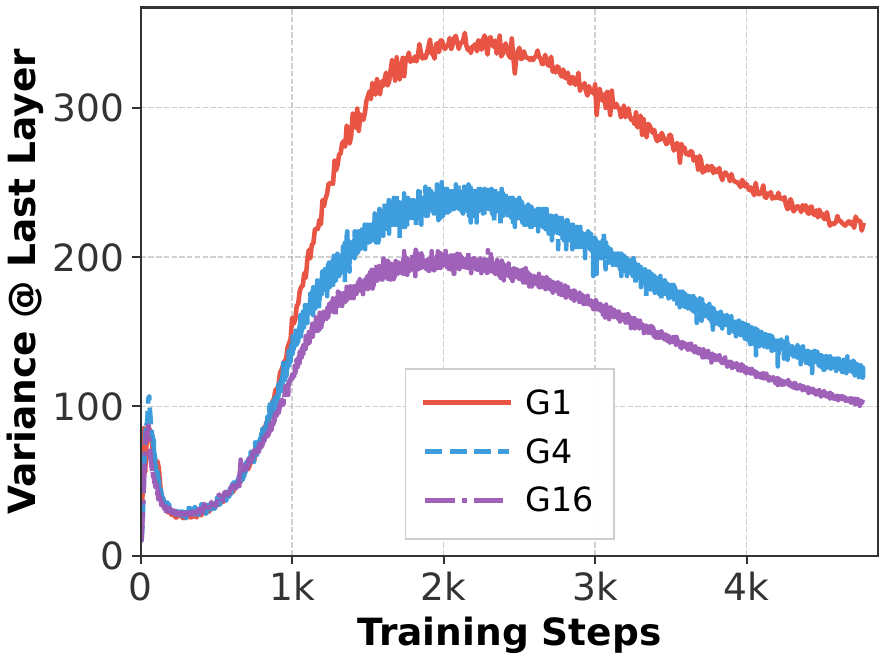}
      \end{subfigure}
      \vspace{-2mm}
        \caption{Output variance with different GQA configurations.}
        \label{fig:variance_gqa}
        \vspace{-3mm}
\end{figure}

\vspace{-2mm}
\subsubsection{Mixture of Experts}
\label{sec:moe_matin_result}
\paragraph{Experimental Settings}
We adopt fine-grained experts~\citep{dai2024deepseekmoe,muennighoff2024olmoe} and evaluate two configurations: (1) 7B total parameters (1B active) with 64 experts and top-8 routing; (2) 2B total parameters (400M active) with 32 experts, top-4 routing, and one shared expert.
We benchmark against dense baselines matched by active parameter count.
Details are in \cref{sec:moe_exp_setting}.

\vspace{-2mm}
\paragraph{Main Observation} 
We compare the variance of dense and MoE models in~\cref{fig:variance_moe}. 
Results for both the 400M-active and 1B-active configurations show that the MoE architecture dampens variance by approximately 6$\times$ and 3$\times$, respectively.
Moreover, the MoE variants not only outperform their dense counterparts by over 2 perplexity but also exhibit higher layer effectiveness (e.g., usefulness score increases from 0.81 to 0.94 for the 1B configuration) (\cref{tab:moe_compare}). 
These results demonstrate that explicit sparsity through MoE improves layer effectiveness.

\vspace{-2mm}
\section{Sparsity as an Enabler for Depth}
\label{sec:sparsity_ablation}
Observing both implicit and explicit sparsity reduce variance, we explore scaling from $L=16$ to $L=32$ (at constant 1.2B parameters) while integrating different sparsity strategies to achieve better performance and layer utilization.
All experiments use identical training data and parameter budgets, with learning rate sweeps for each configuration.
We evaluate using average accuracy on ARC-C~\citep{clark2018thinksolvedquestionanswering} and HellaSwag~\citep{zellers2019hellaswagmachinereallyfinish}, and Usefulness Score.
Model and training details are in \cref{app:sec:ablation_exp_settings}.
\cref{fig:ablation} reveals the solution for effective depth scaling.
Naively increasing depth from $L=16$ to $L=32$ degrades performance: accuracy drops by 0.5 while Usefulness Score plummets from 0.75 to 0.53, indicating approximately 50\% layer underutilization.
However, progressively integrating sparsity recovers and surpasses the baseline.
Extending context to $T=4096$ recovers accuracy (40.0) and improves utilization (0.59), though $T=8192$ shows diminishing returns due to over-dampening.
Introducing implicit sparsity via weight decay also proves effective: $\lambda=0.3$ increases accuracy to 41.4 with Usefulness Score rising to 0.63 (representing 20\% improvement over the naive baseline).
This trend continues with $\lambda=0.6$ achieving competitive results, though excessive regularization ($\lambda=1.0$) degrades both performance and utilization.
Finally, combining explicit sparsity through GQA ($G=2$) and MoE yields the best configuration: average accuracy of 44.1 and Usefulness Score of 0.75, representing a 4+ point accuracy gain over the $L=16$ baseline while maintaining better layer utilization than the naive $L=32$ model (0.75 vs. 0.53).
These results demonstrate that appropriate sparsity mechanisms are essential for effective depth scaling.

\begin{figure}[t]
  \centering
      \begin{subfigure}[t]{0.48\columnwidth}
        \centering
        \includegraphics[width=\linewidth]{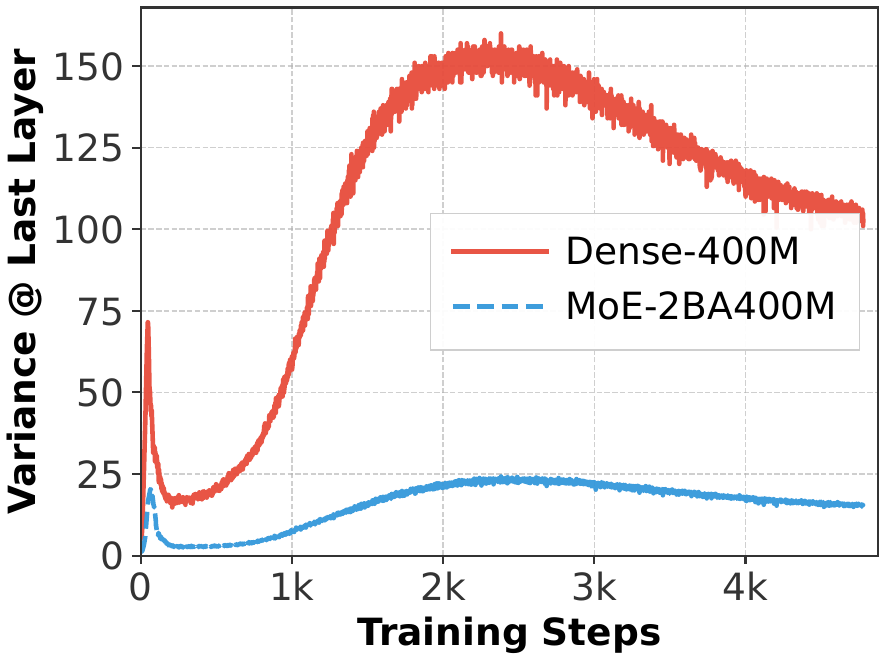}
        \caption{400M activated param.}
        \label{sub_fig:variance_moe400M}
      \end{subfigure}
    \begin{subfigure}[t]{0.48\columnwidth}
        \centering
        \includegraphics[width=\linewidth]{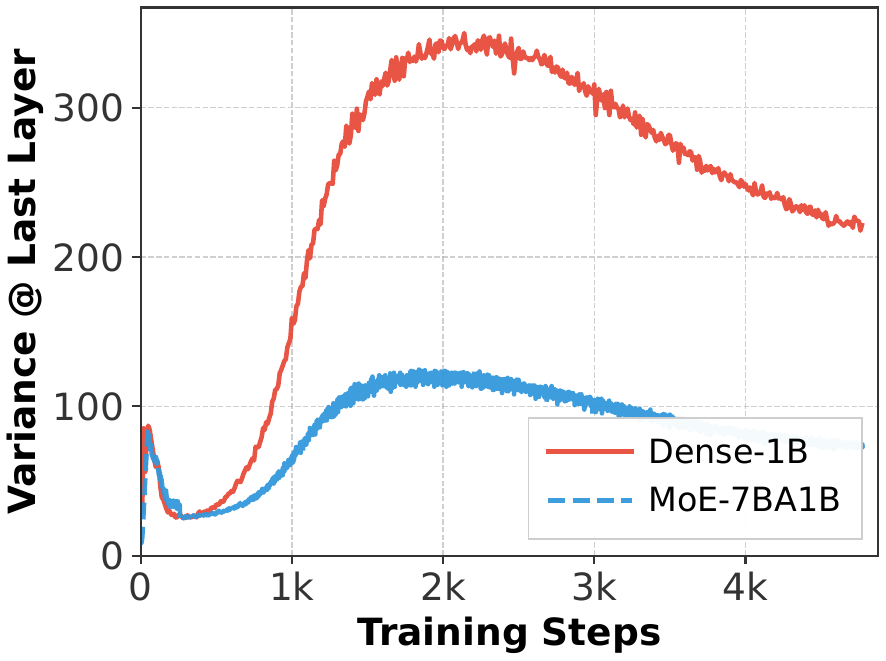}
        \caption{1B activated param.}
        \label{sub_fig:variance_moe1B}
    \end{subfigure}
    \vspace{-2mm}
    \caption{Comparison of last-layer variance between MoE and dense models with different activated parameters.}
    \label{fig:variance_moe}
    \vspace{-5mm}
\end{figure}
\begin{table}[t]
\centering
\caption{Validation perplexity and comparison of layerwise effectiveness between dense and MoE models.}
\label{tab:moe_compare}
\resizebox{0.9\columnwidth}{!}{%
\begin{tabular}{l|c|ccc}
\toprule
\hline
            & PPL ($\downarrow$)   & \multicolumn{3}{c}{Effectiveness Score} \\ \hline
            &       & Causal   & Permutation   & Usefulness   \\ \hline
Dense-400M  & 17.56 & 0.30     & 0.54          & 0.87         \\
MoE-2BA400M & \textbf{15.89}   & \textbf{0.36}          & \textbf{0.59}            &  \textbf{0.94}         \\ \hline
Dense-1B    & 14.52 & 0.31     & 0.51          & 0.81             \\
MoE-7BA1B   & \textbf{13.82}   &\textbf{0.34}          & \textbf{0.61}              &       \textbf{0.94}       \\ \hline\bottomrule
\end{tabular}%
}
\vspace{-4mm}
\end{table}


\section{Related Work}
\subsection{The Curse of Depth}

\label{sec:related_cod}
While deeper networks achieve superior performance with sufficient data and compute~\citep{kaplan2020scaling,hoffmann2022training}, recent studies reveal that modern Pre-LN LLMs~\citep{yang2025qwen3,bi2024deepseek,dubey2024llama} suffer from increasing layer redundancy~\citep{gromov2024unreasonable,men2025shortgpt,lad2024remarkable,csordas2025language}.
This degradation, termed the \emph{Curse of Depth (CoD)}~\citep{sun2025curse}, stems from exponential variance growth that forces deeper layers toward identity mappings.
Prior work addresses CoD through explicit variance control via scaled initialization~\citep{takase2023spike,zhang2019improving,luther2019variancepreserving}, normalization scaling~\citep{sun2025curse,zhu2025hyperconnections}, or alternative normalization~\citep{li2024mix,cai2025seednorm}.
In contrast, we investigate whether sparsity from different choices can naturally mitigate CoD without explicit modifications.

\vspace{-2mm}
\subsection{Sparsity in Neural Networks}
\label{sec:related_sparse}
\vspace{-0.5em}
Sparsity has been central to neural network research for both biological inspiration~\citep{lecun1989optimal} and computational efficiency~\citep{han2015learning}.
We categorize sparsity into two paradigms.
\textbf{Implicit sparsity} emerges naturally from training dynamics and architectural choices: ReLU activations zero negative values~\citep{glorot2011deep,hayou2019impact}, attention mechanisms concentrate on token subsets~\citep{su2024roformer,xiao2023efficient,zhang2023h2o,yuan2025native}, and weight decay drives small parameters toward zero~\citep{krogh1991simple,loshchilov2019decoupledweightdecayregularization,frankle2018lottery}.
\textbf{Explicit sparsity} is architecturally enforced to decouple capacity from computation: Mixture-of-Experts (MoE) activates only $k$ of $E$ experts per token~\citep{fedus2022switchtransformersscalingtrillion,dai2024deepseekmoe,liu2025deepseek}, while Grouped Query Attention (GQA) shares key-value projections across query heads~\citep{ainslie2023gqa,shazeer2019fasttransformerdecodingwritehead,dubey2024llama,yang2025qwen3}.
While sparsity's efficiency benefits are well-established, its impact on variance propagation remains unexplored.
We systematically provide evidence that these mechanisms are associated with reduced variance propagation and improved layer utilization in our LLM training settings.

\vspace{-1mm}
\section{Limitations}
\label{sec:limitations}
\vspace{-0.5em}
Our theoretical analysis omits several mechanisms central to contemporary Transformer training, including various normalization methods, correlations across attention heads, and the feedback between learned parameters and activation statistics.
Consequently, our theorems are best viewed as qualitative statements about how interaction sparsity regulates variance growth, rather than as calibrated, end-to-end predictive models of the residual stream in fully trained LLMs.

Given this gap between our idealized analysis and deployed models, the empirical evidence presented in this paper serves to substantiate our broader claims.
Across the training settings we evaluate, mechanisms that reduce effective interaction density or concentrate computation are consistently associated with lower residual-stream variance and improved layer effectiveness; we rely on these empirical regularities to bridge the gap between theory and practice.

\vspace{-1em}
\section{Conclusion}
\vspace{-0.5em}
We show that variance growth in deep networks leads to a \emph{curse of depth}, where layers become increasingly underutilized as Jacobians drift toward identity mappings. We theoretically and empirically validate that sparsity, implicit or explicit, damps variance propagation, improving layer utilization and enabling effective depth scaling with better performance. This reframes sparsity as not only a compute-saving tool, but also an optimization mechanism that improves compute utilization by controling variance growth.

\section*{Impact Statement}
This paper presents methodological advances in understanding and mitigating the curse of depth in neural networks. We do not foresee direct negative societal impacts from this theoretical and empirical work on model architecture and training dynamics.

\section*{Acknowledgement}
This research was partially supported by the Deutsche Forschungsgemeinschaft (DFG) through the DFG Cluster of Excellence MATH+ (EXC-2046/1, EXC-2046/2, project id 390685689), as well as by the
German Federal Ministry of Research, Technology and Space (research campus Modal, fund number 05M14ZAM, 05M20ZBM) and the VDI/VDE Innovation + Technik GmbH (fund number 16IS23025B).

\bibliography{ICML2026}
\bibliographystyle{plainnat}

\newpage
\appendix
\numberwithin{equation}{subsection}

\onecolumn
\setcounter{page}{1}

\section{Additional Results}
\subsection{Variance Across All Depths}
\label{app_sec:variance_all_layers}
\begin{figure*}[th]
  \centering
      \begin{subfigure}[t]{0.23\textwidth}
        \centering
        \includegraphics[width=\linewidth]{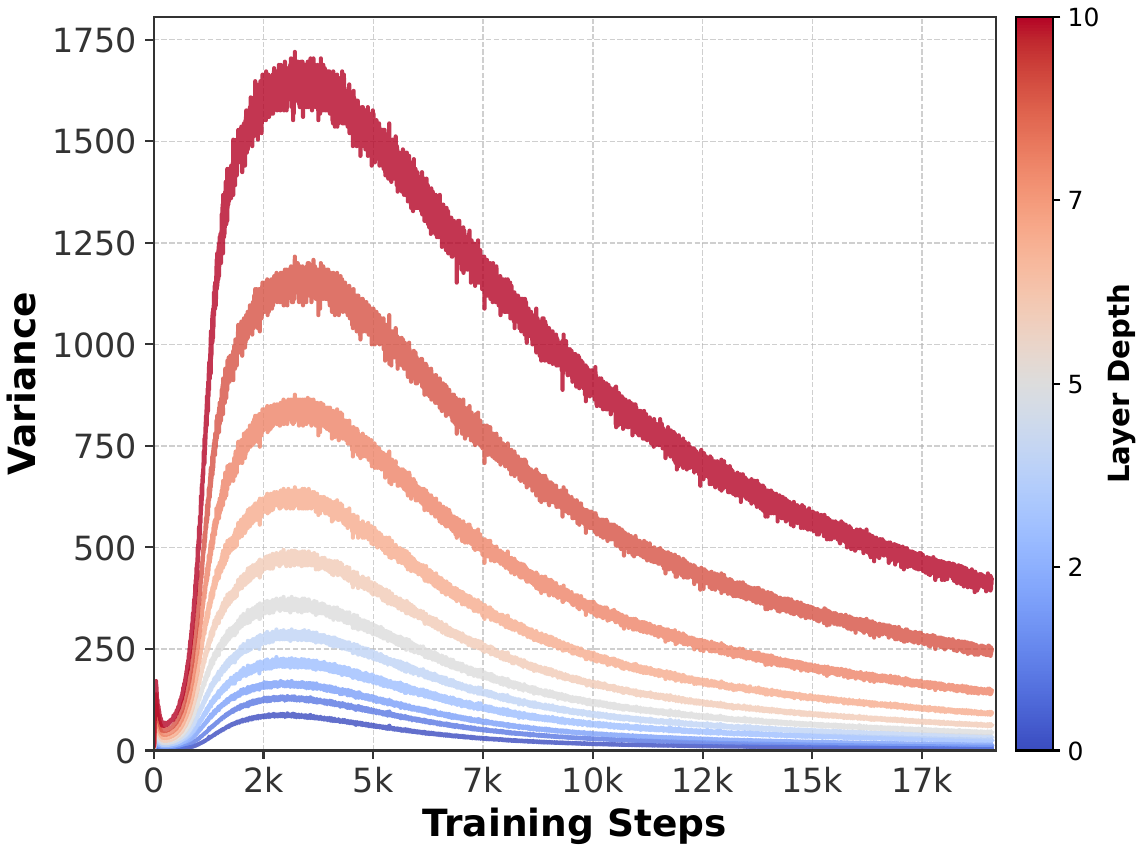}
        \vspace{-5mm}
        \caption{$L=12$}
      \end{subfigure}
      \begin{subfigure}[t]{0.23\textwidth}
        \centering
        \includegraphics[width=\linewidth]{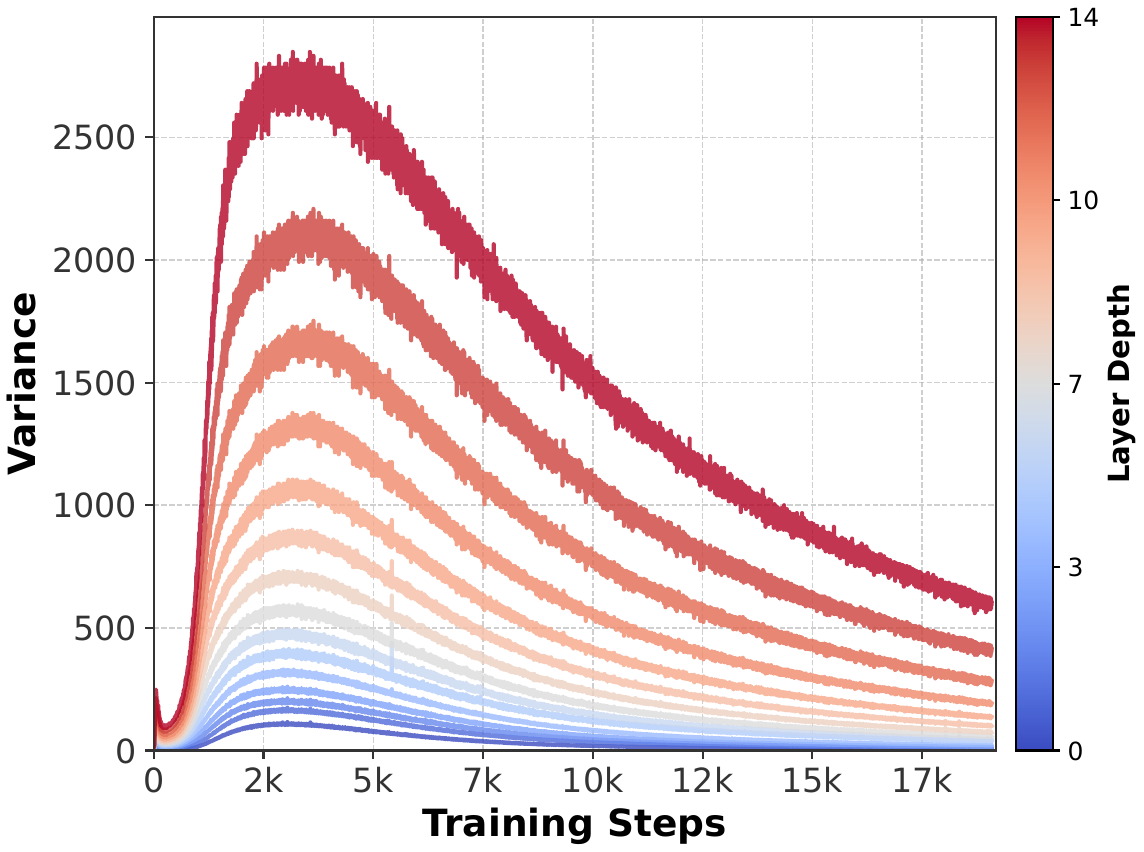}
        \vspace{-5mm}
        \caption{$L=16$}
      \end{subfigure}
      \begin{subfigure}[t]{0.23\textwidth}
        \centering
        \includegraphics[width=\linewidth]{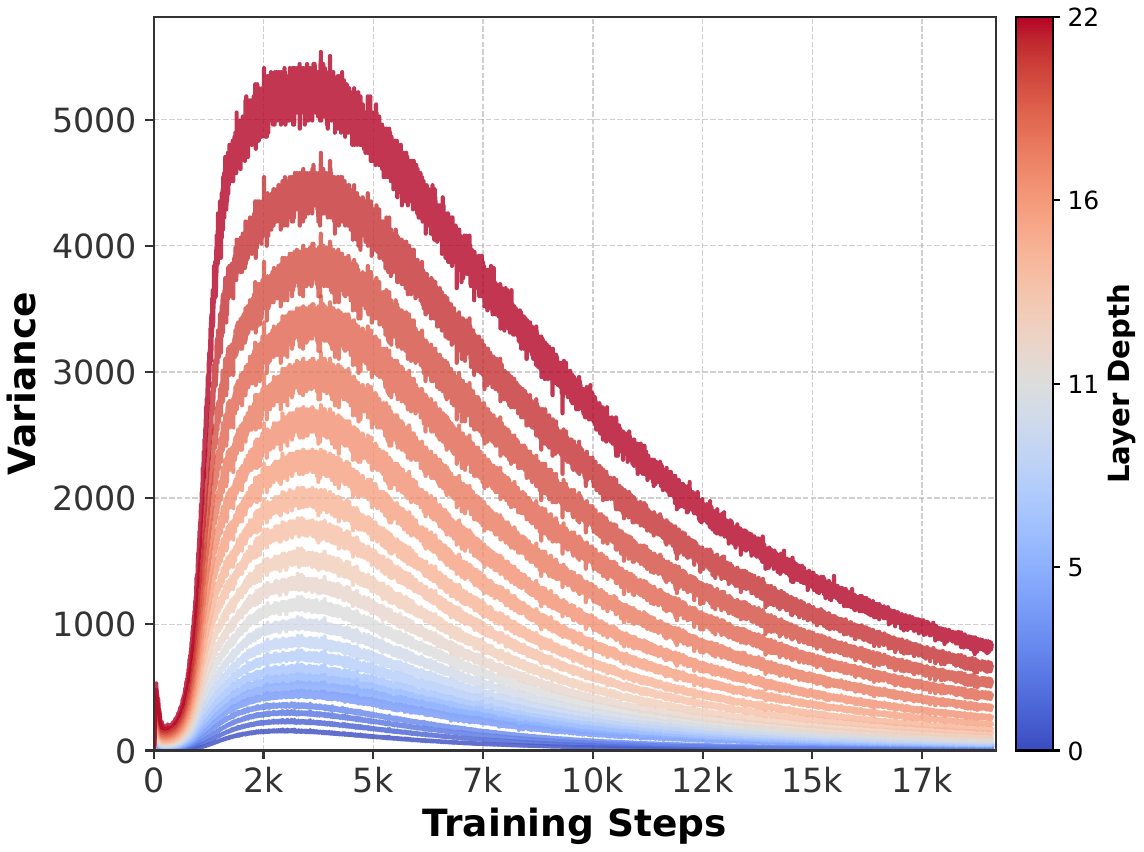}
        \vspace{-5mm}
        \caption{$L=24$}
      \end{subfigure}
      \begin{subfigure}[t]{0.23\textwidth}
        \centering
        \includegraphics[width=\linewidth]{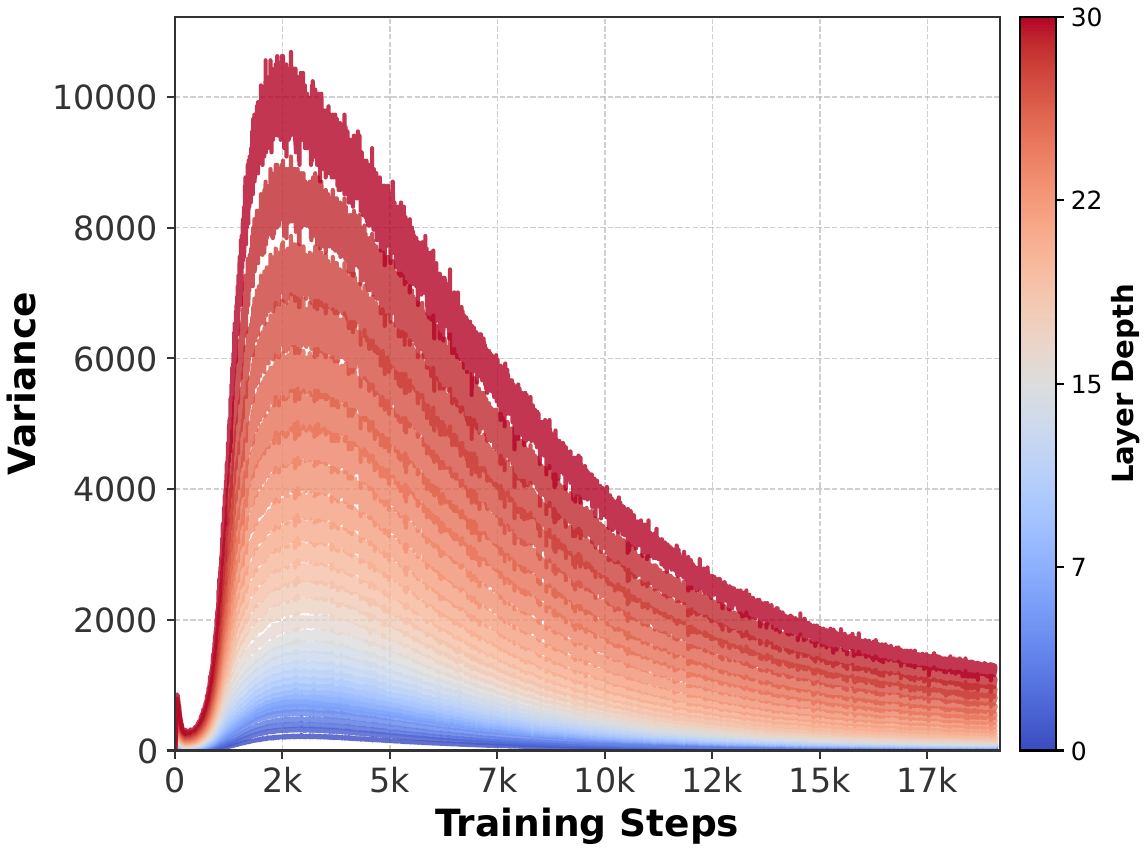}
        \vspace{-5mm}
        \caption{$L=32$}
      \end{subfigure}
      \vspace{-1mm}
        \caption{
        Variance of each layer's output during training for models of different depths in~\cref{sec:variance_cod}.
        }
        \label{fig:all_var_depth}
\end{figure*}
Instead of only the last layer's variance reported in previous settings, we also provide the variance of each layer during training for models with different depths in the depth control experiment in~\cref{sec:variance_cod}.
The results are presented in~\cref{fig:all_var_depth}, where we can clearly see that the variance accumulates with depth.

\subsection{Variance of Different Blocks}
\label{app_sec:variance_of_diff_blocks}
\begin{figure*}[th]
  \centering
      \begin{subfigure}[t]{0.3\textwidth}
        \centering
        \includegraphics[width=\linewidth]{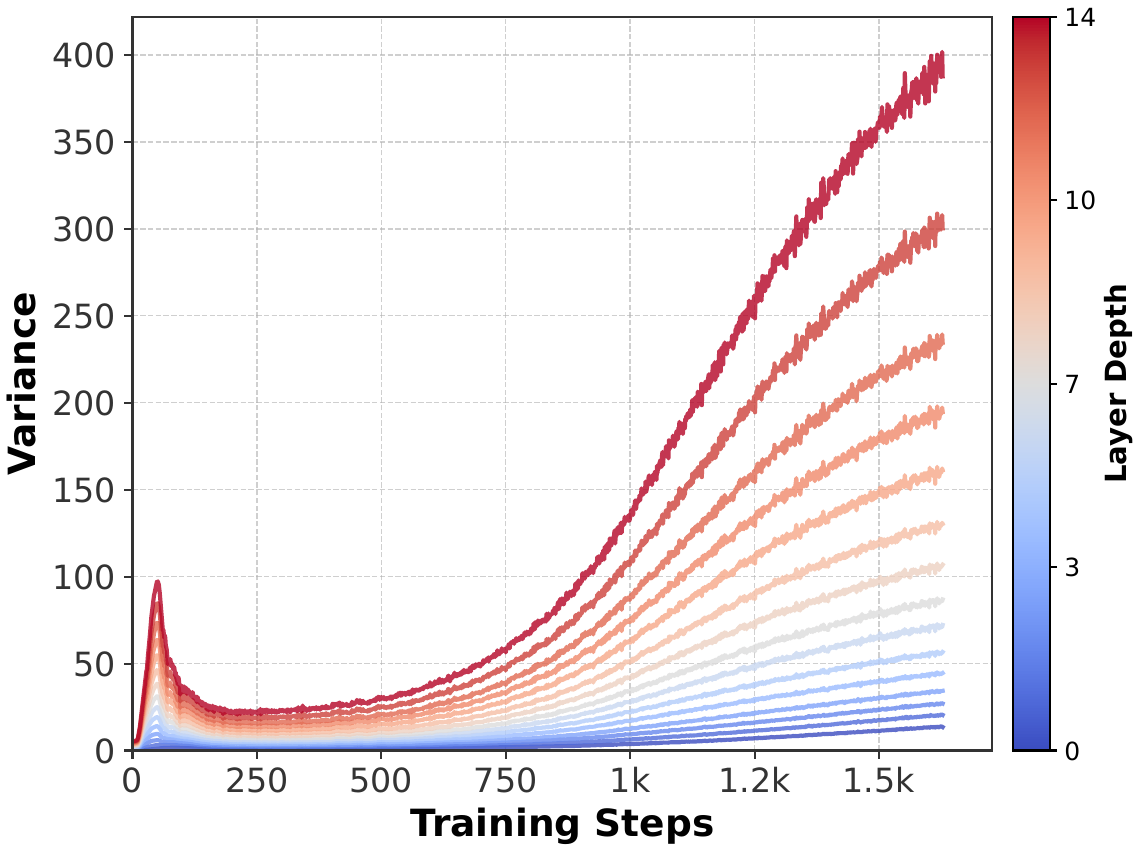}
        \vspace{-5mm}
        \caption{Variance of decoder-block output.}
      \end{subfigure}
      \begin{subfigure}[t]{0.3\textwidth}
        \centering
        \includegraphics[width=\linewidth]{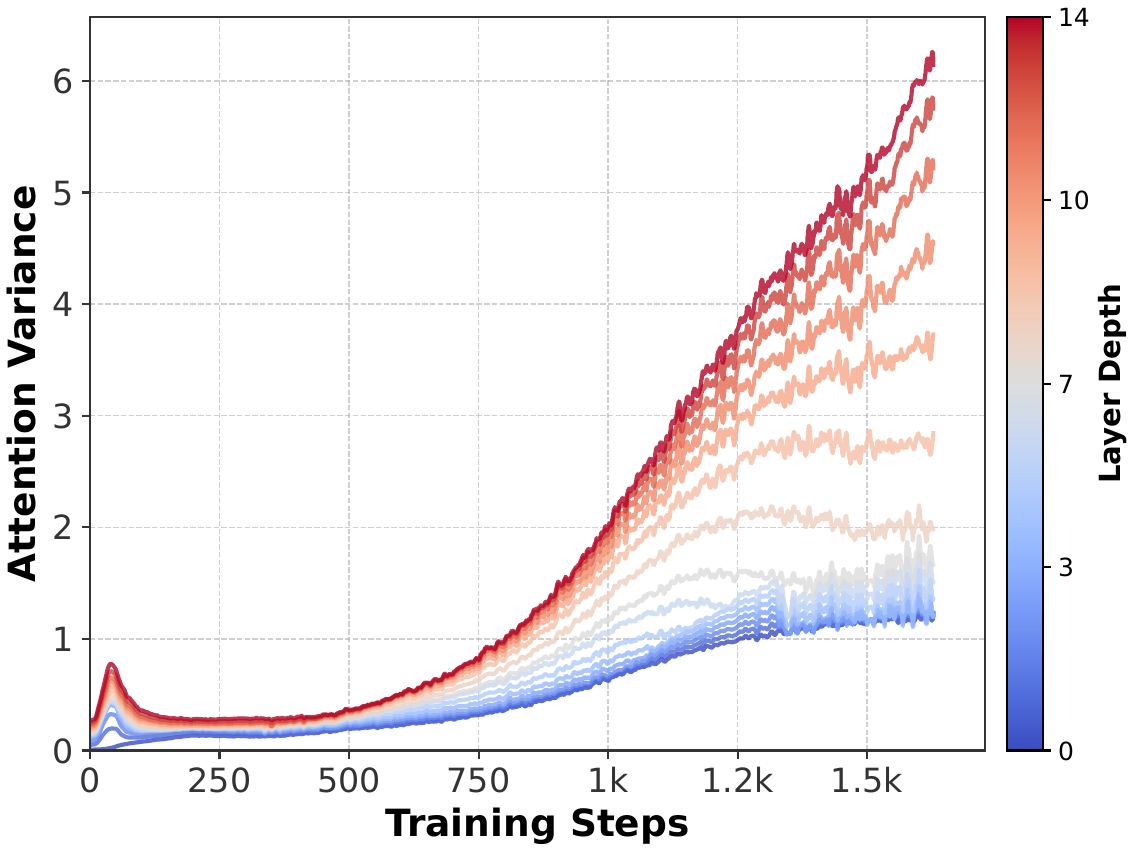}
        \vspace{-5mm}
        \caption{Variance of attn. block output}
      \end{subfigure}
      \begin{subfigure}[t]{0.3\textwidth}
        \centering
        \includegraphics[width=\linewidth]{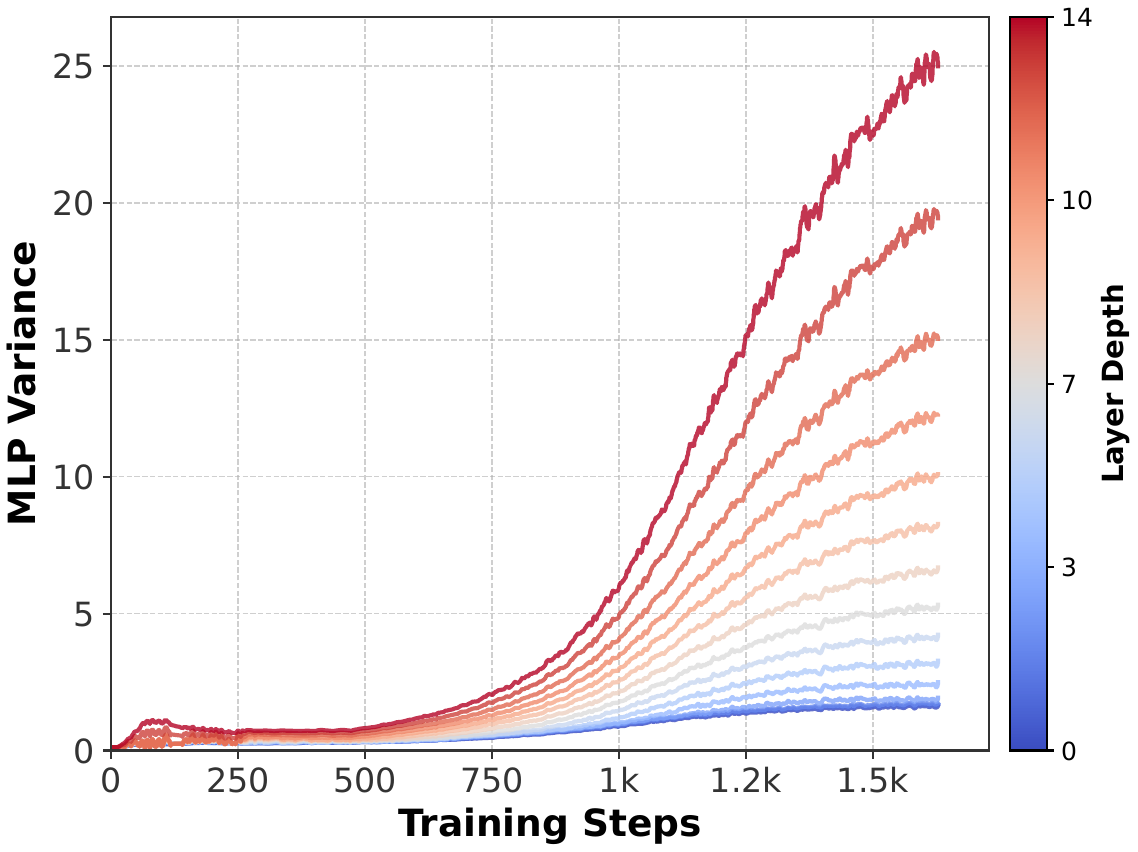}
        \vspace{-5mm}
        \caption{Variance of MLP block output.}
      \end{subfigure}
      \vspace{-1mm}
        \caption{
        Variance of each layer's output during training with model $L=16$.
        }
        \label{fig:all_var_d16_block}
\end{figure*}
\begin{figure*}[th]
  \centering
      \begin{subfigure}[t]{0.3\textwidth}
        \centering
        \includegraphics[width=\linewidth]{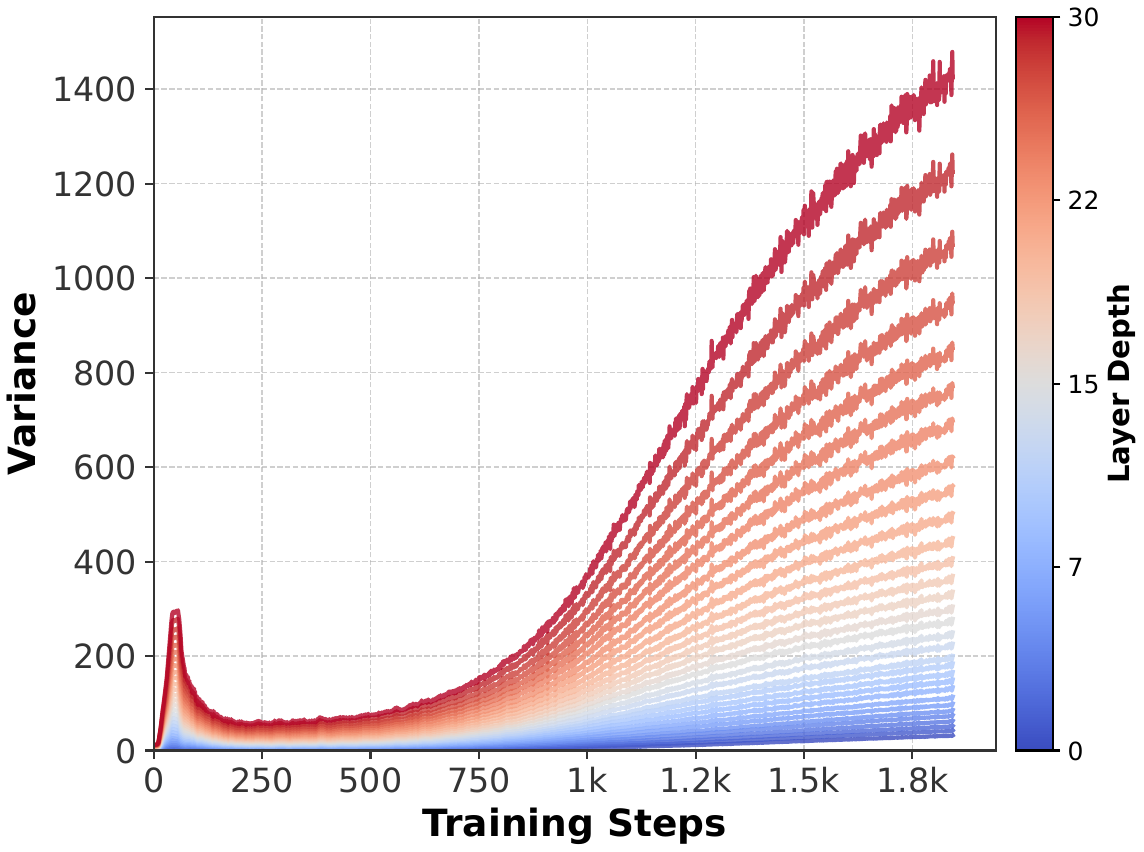}
        \vspace{-5mm}
        \caption{Variance of decoder-block output.}
      \end{subfigure}
      \begin{subfigure}[t]{0.3\textwidth}
        \centering
        \includegraphics[width=\linewidth]{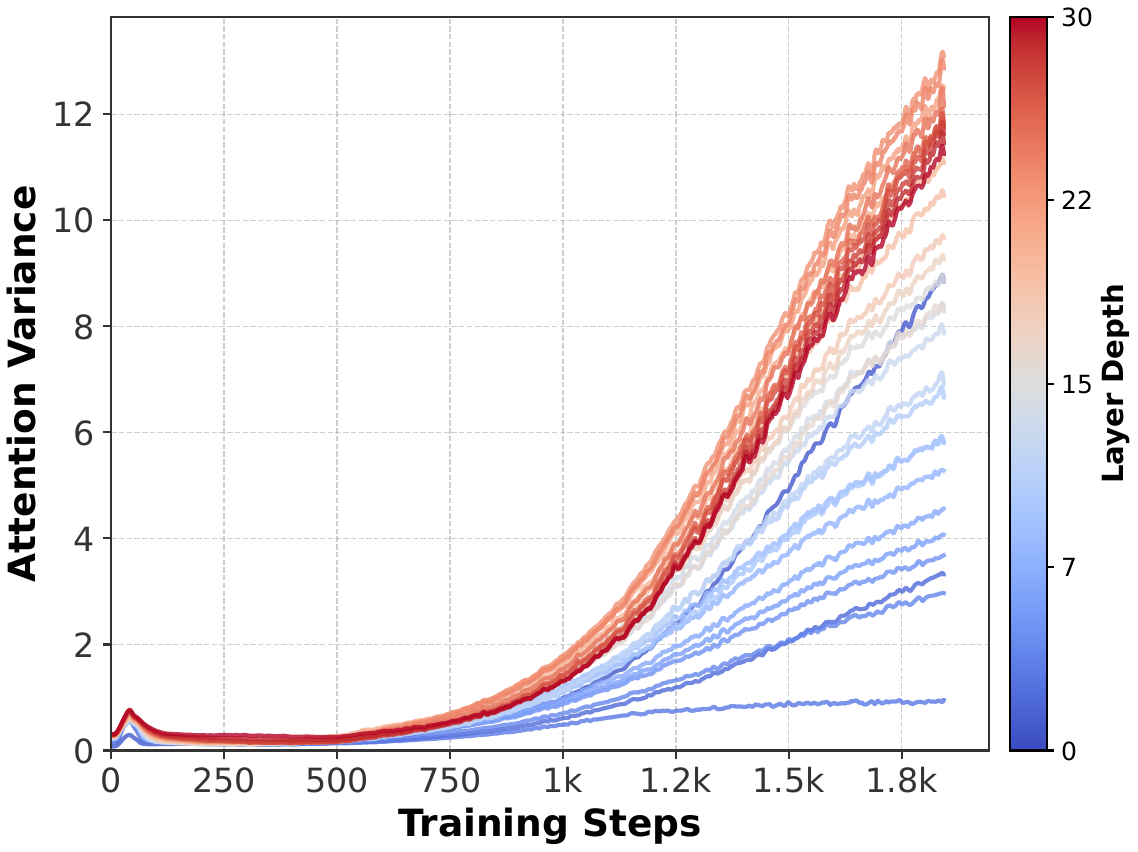}
        \vspace{-5mm}
        \caption{Variance of attn. block output}
      \end{subfigure}
      \begin{subfigure}[t]{0.3\textwidth}
        \centering
        \includegraphics[width=\linewidth]{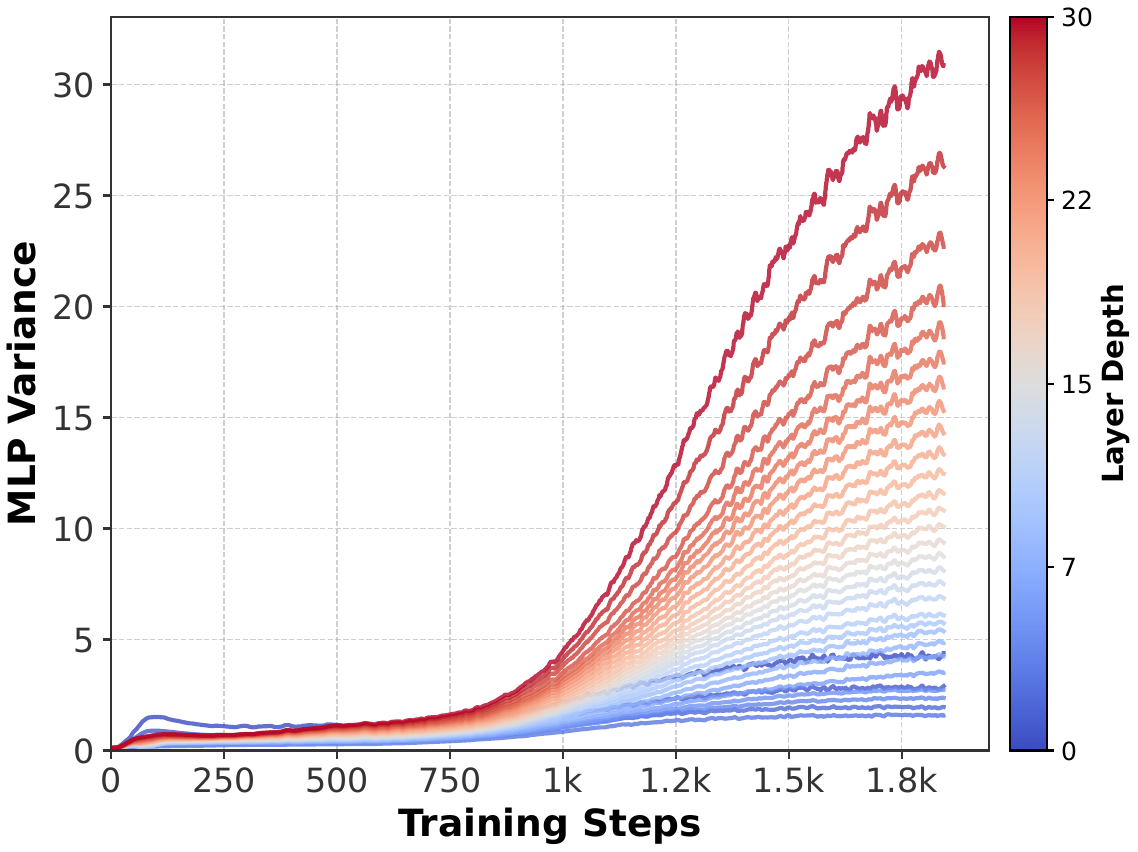}
        \vspace{-5mm}
        \caption{Variance of MLP block output.}
      \end{subfigure}
      \vspace{-1mm}
        \caption{
        Variance of each layer's output during training with model $L=32$.
        }
        \label{fig:all_var_d32_block}
\end{figure*}
Across all of our other experiments, we report the variance of each layer's output (e.g., the decoder layer in LLMs).
In this subsection, we experiment with a model of 
$d=1536$ hidden dimensions, MLP dimension of 4608, 
$h=12$ attention heads, and maximum sequence length of 4096 with two depths: 16 and 32.
We train for 1500+ steps using the FineWeb-Edu dataset with a learning rate of 
$1\times10^{3}$, weight decay of 0.1, and global batch size of 256.
We report each layer's attention output variance and MLP output variance during training.

Results are presented in \cref{fig:all_var_d16_block,fig:all_var_d32_block}.
While output variance grows continuously with depth, MLP block variances increase more rapidly and show consistently stronger growth across depth compared to attention blocks.
Interestingly, although the overall output variance exhibits substantial growth, the variance within the residual components (attention and MLP blocks) themselves grows only moderately.
This suggests that the shortcut paths may also contribute significantly to variance accumulation and may require regularization methods to control the variance growth, which we leave for future work.

\subsection{Jacobian Matrices of Different Layers}
\label{app_sec:jacobian_all_layers}
\begin{figure*}[th]
  \centering
      \begin{subfigure}[t]{0.95\textwidth}
        \centering
        \includegraphics[width=\linewidth]{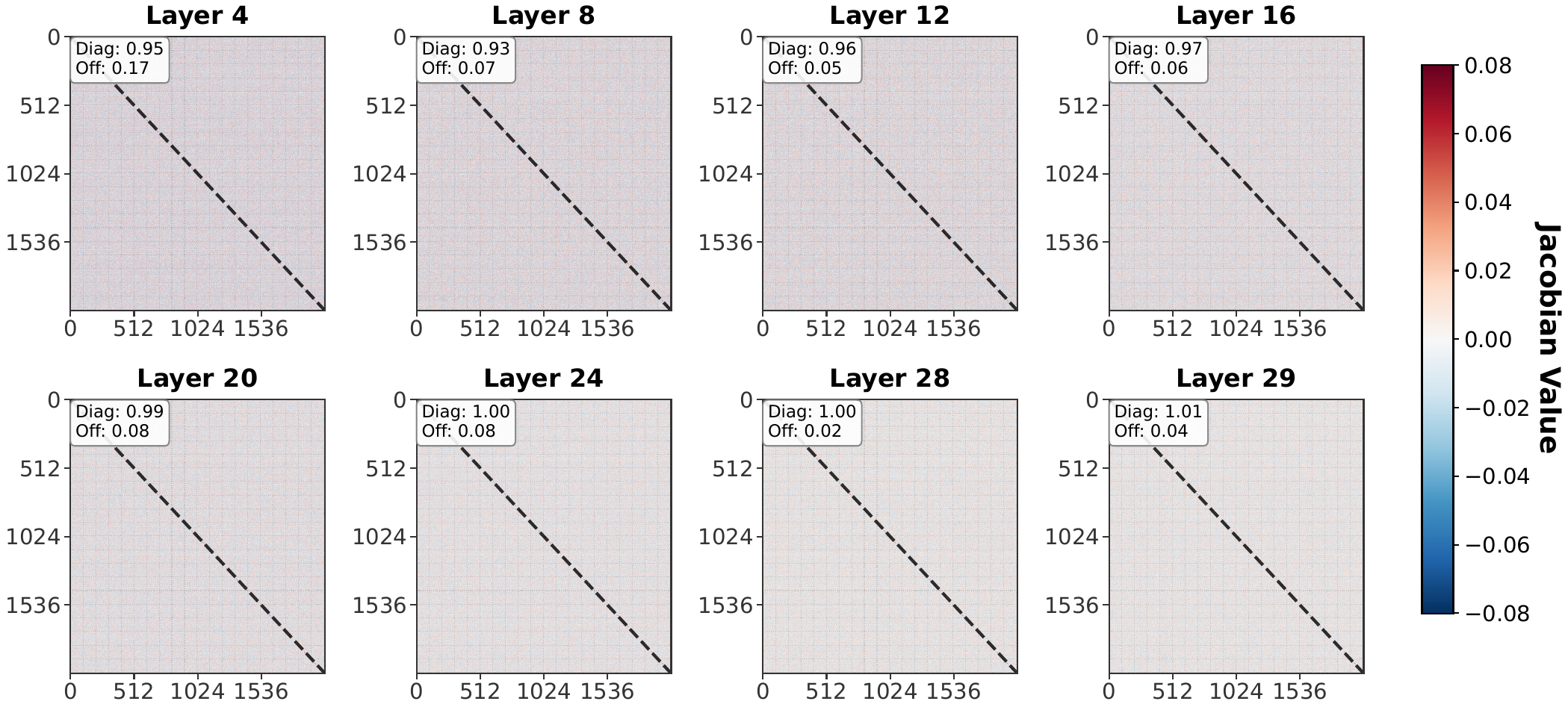}
        \vspace{-5mm}
      \end{subfigure}
      \vspace{-1mm}
      \caption{
            Jacobian matrices at different layers for $L=32$. Results are transformed with a power of 0.3 for better visualization.
        }
        \label{fig:all_jacobian_d32}
\end{figure*}
In \cref{sec:variance_cod}, we prove that the Jacobian of the residual block approaches identity mapping with increased depth due to variance explosion.
In \cref{fig:jacobian_diff_depth_head}, we plot the Jacobian matrix at layer 30 for $L=32$. Here, we plot Jacobian matrices at different layers for $L=32$ in \cref{fig:all_jacobian_d32}, where we can see the off-diagonal components gradually decrease as depth increases.

\subsection{Kurtosis Analysis}
\label{app_sec:kurtosis_analysis}
\begin{figure*}[th]
  \centering
      \begin{subfigure}[t]{0.24\textwidth}
        \centering
        \includegraphics[width=\linewidth]{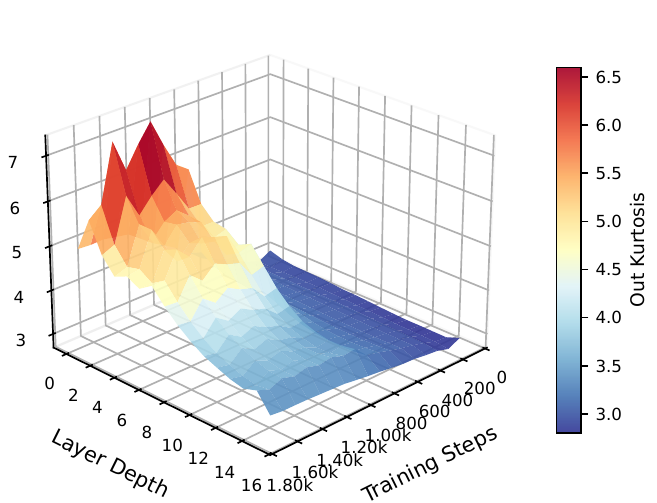}
        \vspace{-5mm}
        \caption{$L=16$}
      \end{subfigure}
      \begin{subfigure}[t]{0.24\textwidth}
        \centering
        \includegraphics[width=\linewidth]{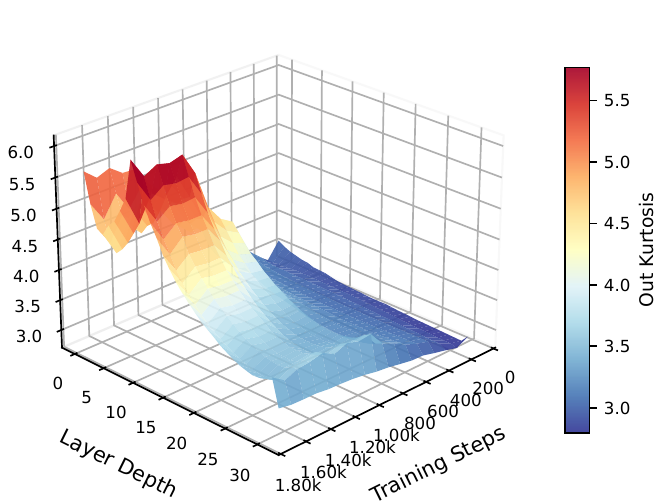}
        \vspace{-5mm}
        \caption{($L=32$)}
      \end{subfigure}
      \begin{subfigure}[t]{0.24\textwidth}
        \centering
        \includegraphics[width=\linewidth]{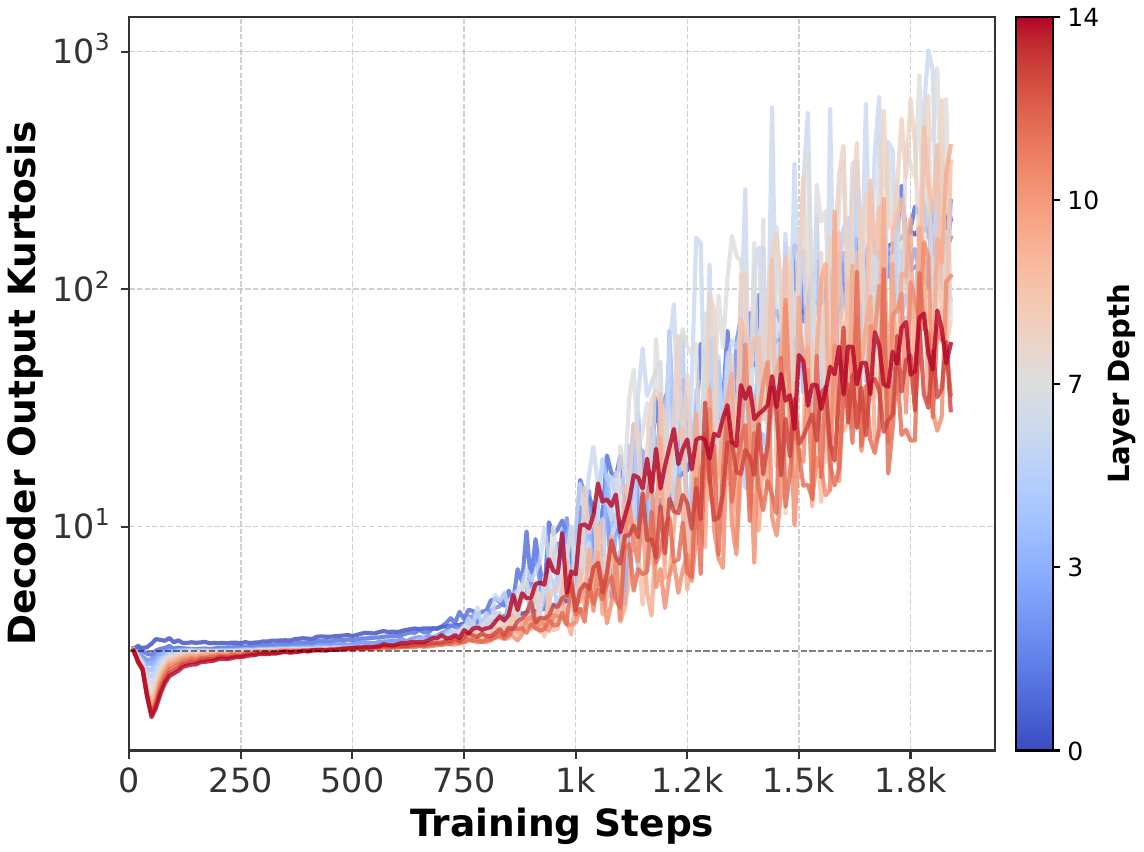}
        \vspace{-5mm}
        \caption{$L=16$}
      \end{subfigure}
      \begin{subfigure}[t]{0.24\textwidth}
        \centering
        \includegraphics[width=\linewidth]{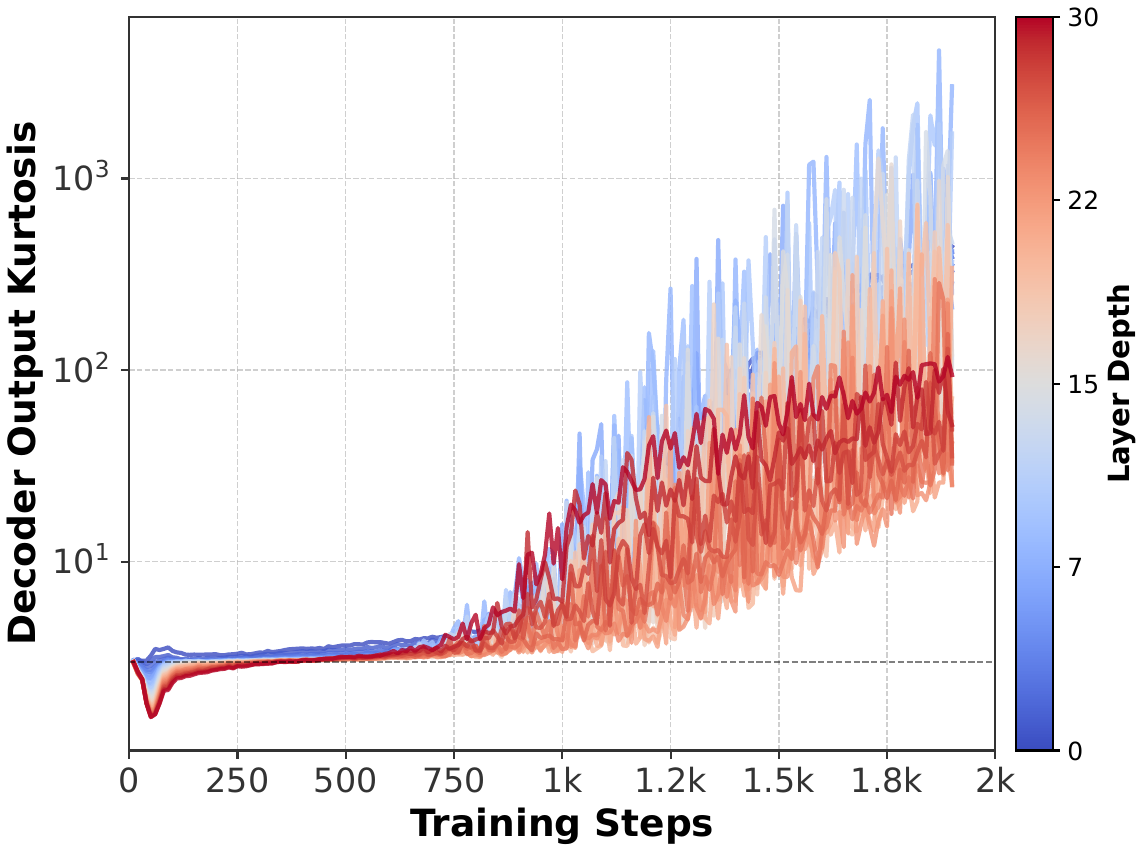}
        \vspace{-5mm}
        \caption{$L=32$}
      \end{subfigure}
      \vspace{-1mm}
       \caption{
        \textbf{(a)-(b):} per-dimension kurtosis. \textbf{(c)-(d)}: layer-level kurtosis.
        }
        \label{fig:kortosis_analysis}
\end{figure*}
To investigate whether variance accumulation is driven by outlier features, we analyze kurtosis (fourth standardized moment) of layer outputs.

\paragraph{Per-Dimension Kurtosis}
For each hidden dimension $j$ at layer $\ell$, we compute:
\begin{equation}
\text{Kurtosis}_{\ell,j} = \frac{\frac{1}{n}\sum_{i=1}^{n}(h_{\ell,i,j} - \bar{h}_{\ell,j})^4}{\left(\frac{1}{n}\sum_{i=1}^{n}(h_{\ell,i,j} - \bar{h}_{\ell,j})^2\right)^2}
\label{eq:kurtosis_perdim}
\end{equation}
where $\bar{h}_{\ell,j} = \frac{1}{n}\sum_{i=1}^{n} h_{\ell,i,j}$ is the mean across tokens for dimension $j$.

\paragraph{Layer-Level Aggregation}
We aggregate across dimensions to obtain layer-level kurtosis:
\begin{equation}
\text{Kurtosis}_\ell = \frac{1}{d} \sum_{j=1}^{d} \text{Kurtosis}_{\ell,j}
\label{eq:kurtosis_layer}
\end{equation}

We compute both per-dimension kurtosis and layer-level kurtosis for all layers in the configurations from \cref{app_sec:variance_of_diff_blocks} during training.
The results are presented in \cref{fig:kortosis_analysis}. 
From both the layer-level and per-dimension kurtosis across different depths, we can see that early layers show more outlier features (higher kurtosis) compared to deeper layers, which demonstrates that variance explosion is not driven by outlier features. 
We hypothesize that since prior work has shown that outlier features are crucial for model performance~\citep{lin2024awqactivationawareweightquantization}, the lower outlier prevalence in deeper layers indicates that they learn less effectively compared to shallower layers.

\subsection{Attention Sparsity Across Different Evaluation Lengths}
\begin{figure*}[t]
  \centering
      \begin{subfigure}[t]{0.30\textwidth}
        \centering
        \includegraphics[width=\linewidth]{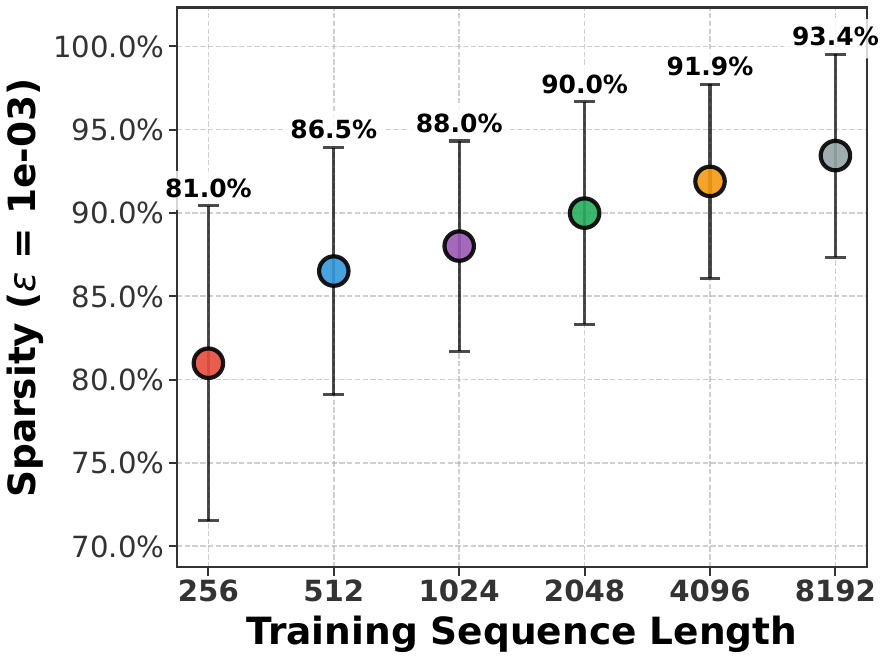}
        \vspace{-5mm}
        \caption{Attn. Sparsity ($\epsilon$=$1\times 10^{-3}$)}
      \end{subfigure}
      \begin{subfigure}[t]{0.3\textwidth}
        \centering
        \includegraphics[width=\linewidth]{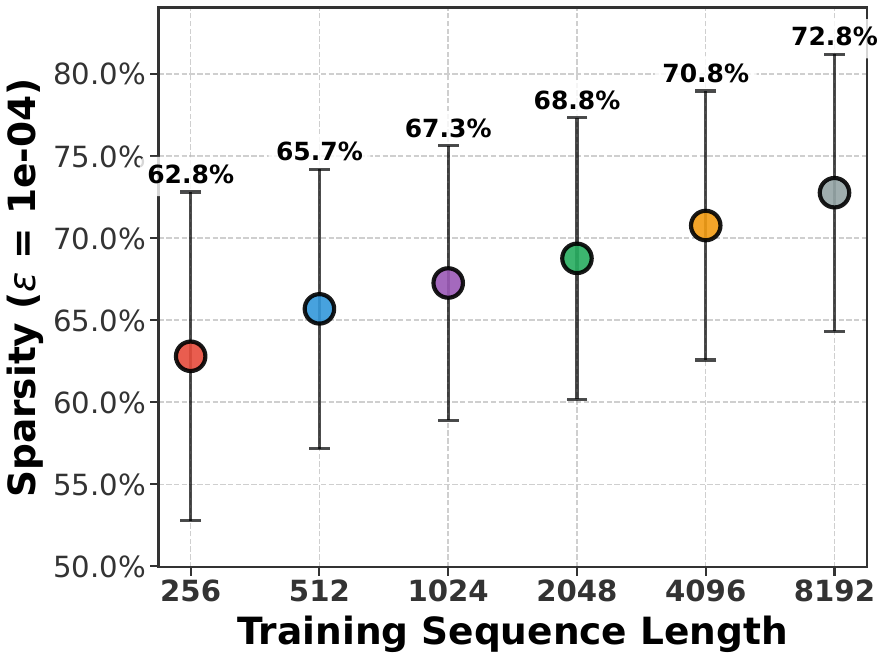}
        \vspace{-5mm}
        \caption{Attn. Sparsity ($\epsilon$=$1\times 10^{-4}$)}
      \end{subfigure}
      \begin{subfigure}[t]{0.3\textwidth}
        \centering
        \includegraphics[width=\linewidth]{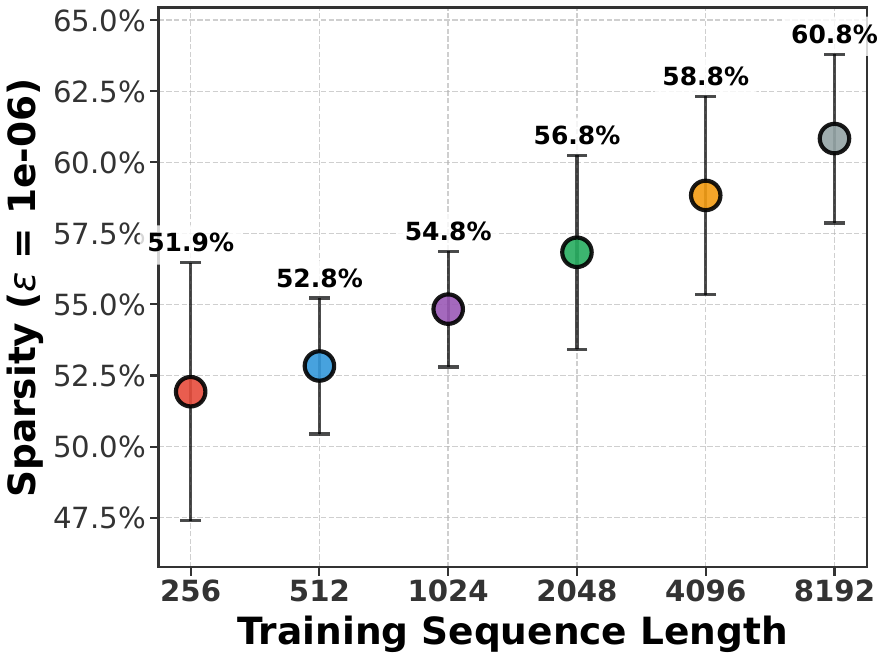}
        \vspace{-5mm}
        \caption{Attn. Sparsity ($\epsilon$=$1\times 10^{-6}$)}
      \end{subfigure}
      \vspace{-1mm}
        \caption{
        Attention sparsity with evaluation length of 512.
        }
        \label{fig:sequence_sparsity_512}
\end{figure*}
\begin{figure*}[t]
  \centering
      \begin{subfigure}[t]{0.30\textwidth}
        \centering
        \includegraphics[width=\linewidth]{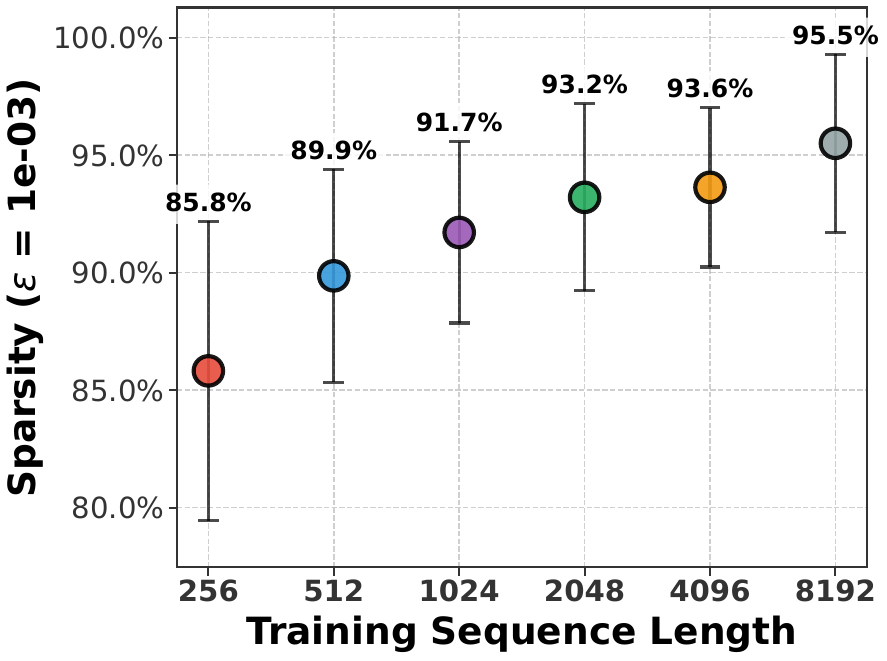}
        \vspace{-5mm}
        \caption{Attn. Sparsity ($\epsilon$=$1\times 10^{-3}$)}
      \end{subfigure}
      \begin{subfigure}[t]{0.3\textwidth}
        \centering
        \includegraphics[width=\linewidth]{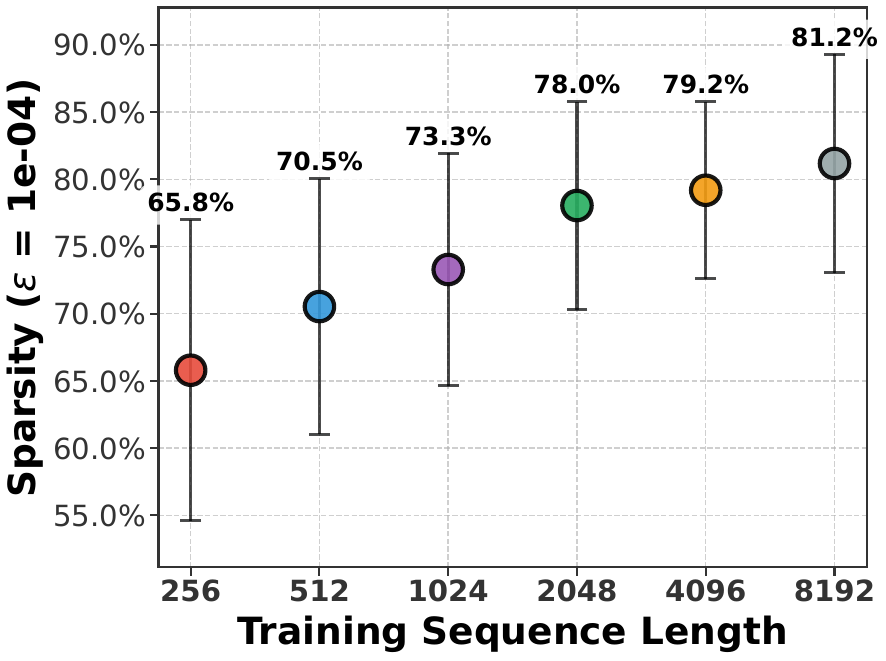}
        \vspace{-5mm}
        \caption{Attn. Sparsity ($\epsilon$=$1\times 10^{-4}$)}
      \end{subfigure}
      \begin{subfigure}[t]{0.3\textwidth}
        \centering
        \includegraphics[width=\linewidth]{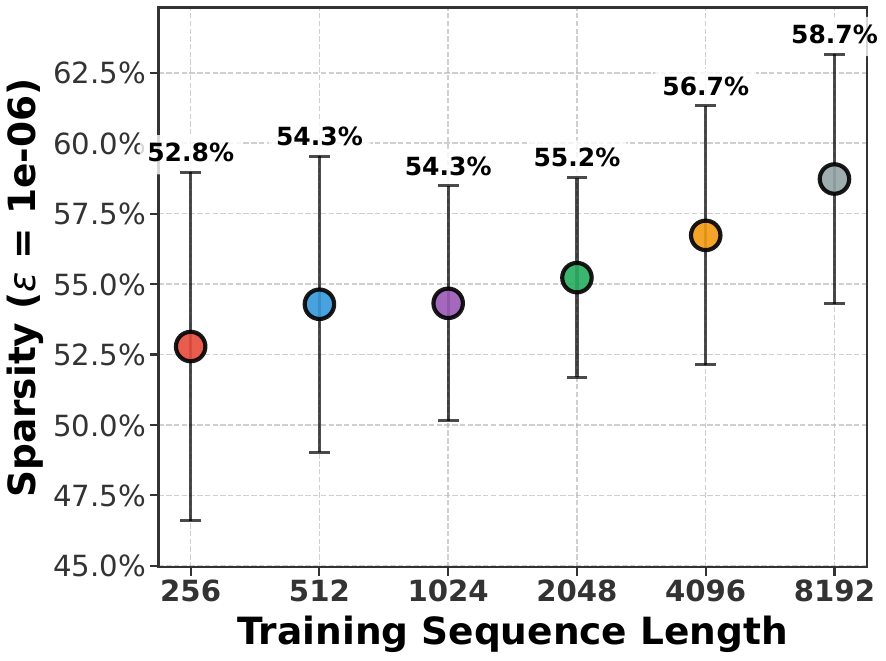}
        \vspace{-5mm}
        \caption{Attn. Sparsity ($\epsilon$=$1\times 10^{-6}$)}
      \end{subfigure}
      \vspace{-1mm}
        \caption{
        Attention sparsity with evaluation length of 1024.
        }
        \label{fig:sequence_sparsity_1024}
\end{figure*}
\begin{figure*}[th]
  \centering
      \begin{subfigure}[t]{0.30\textwidth}
        \centering
        \includegraphics[width=\linewidth]{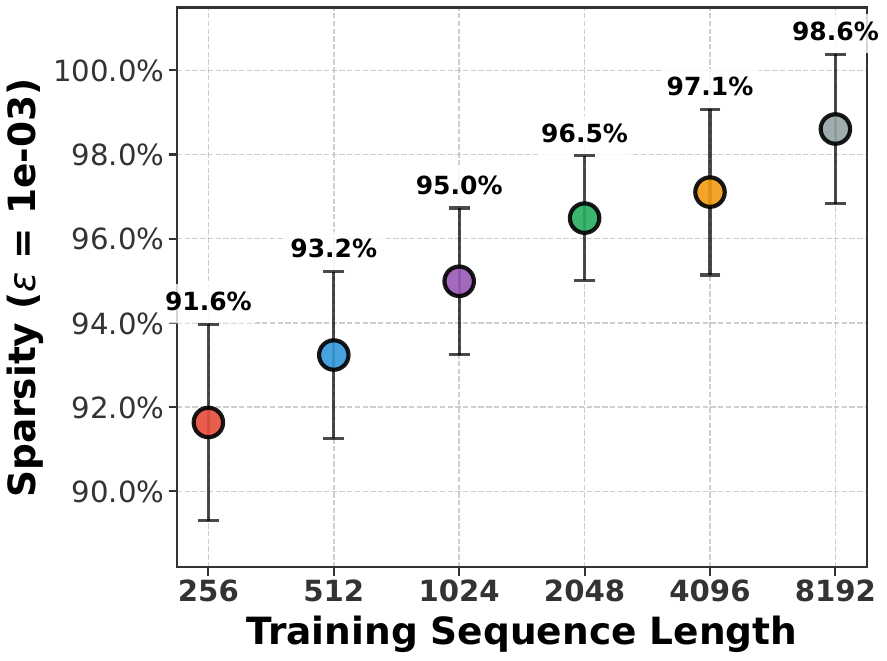}
        \vspace{-5mm}
        \caption{Attn. Sparsity ($\epsilon$=$1\times 10^{-3}$)}
      \end{subfigure}
      \begin{subfigure}[t]{0.3\textwidth}
        \centering
        \includegraphics[width=\linewidth]{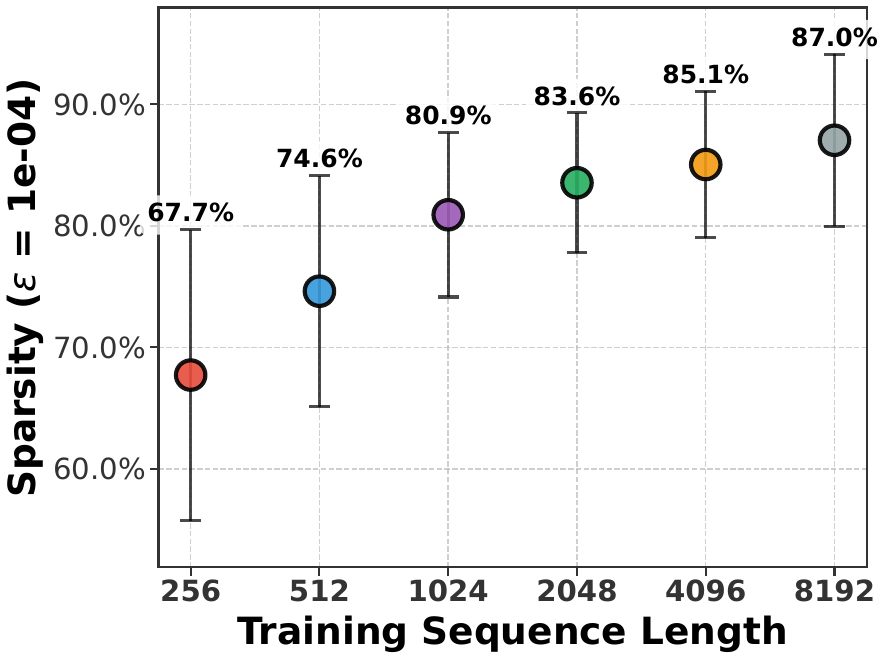}
        \vspace{-5mm}
        \caption{Attn. Sparsity ($\epsilon$=$1\times 10^{-4}$)}
      \end{subfigure}
      \begin{subfigure}[t]{0.3\textwidth}
        \centering
        \includegraphics[width=\linewidth]{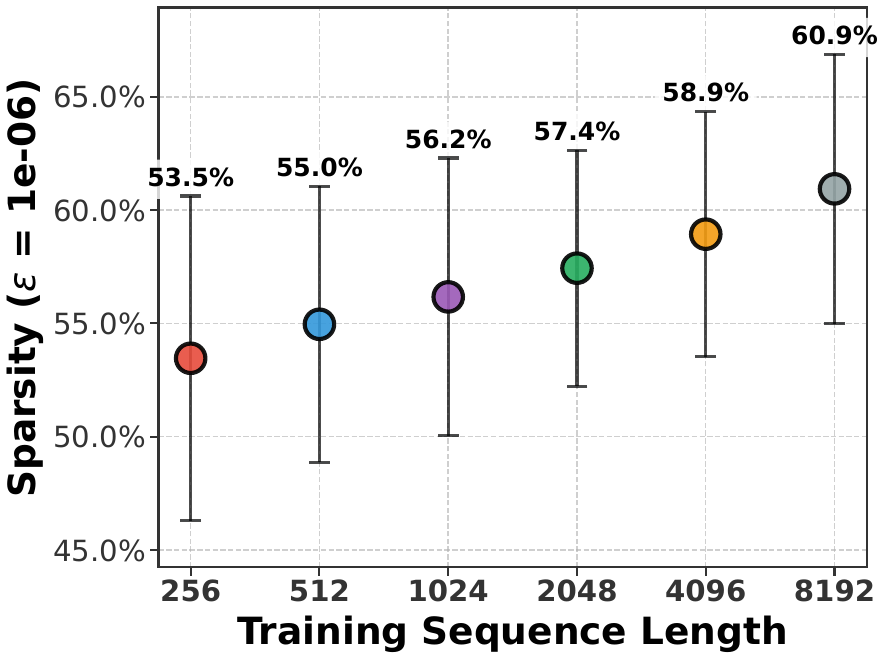}
        \vspace{-5mm}
        \caption{Attn. Sparsity ($\epsilon$=$1\times 10^{-6}$)}
      \end{subfigure}
      \vspace{-1mm}
        \caption{
        Attention sparsity with evaluation length of 2048.
        }
        \label{fig:sequence_sparsity_2048}
\end{figure*}
\begin{figure*}[t]
  \centering
      \begin{subfigure}[t]{0.30\textwidth}
        \centering
        \includegraphics[width=\linewidth]{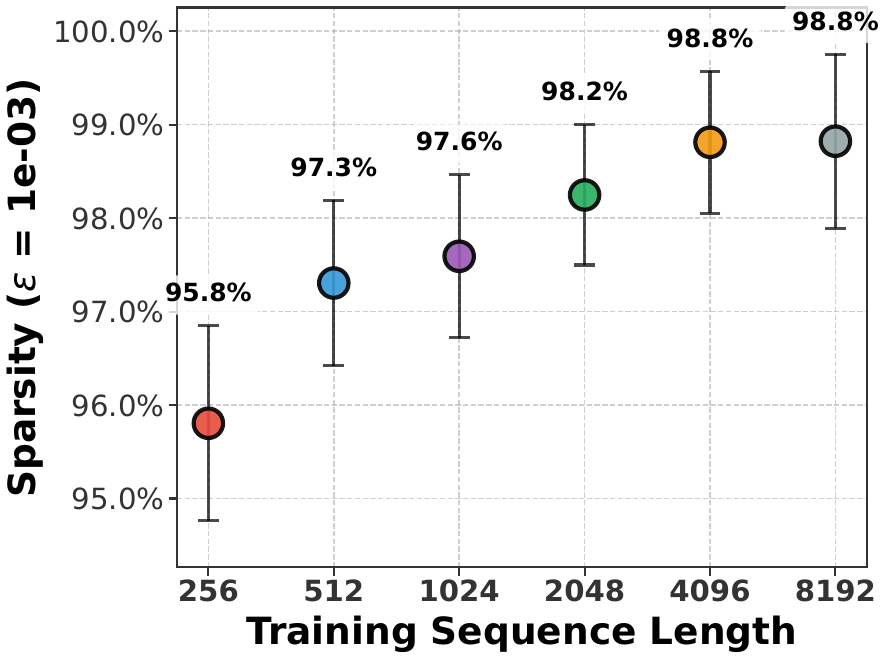}
        \vspace{-5mm}
        \caption{Attn. Sparsity ($\epsilon$=$1\times 10^{-3}$)}
      \end{subfigure}
      \begin{subfigure}[t]{0.3\textwidth}
        \centering
        \includegraphics[width=\linewidth]{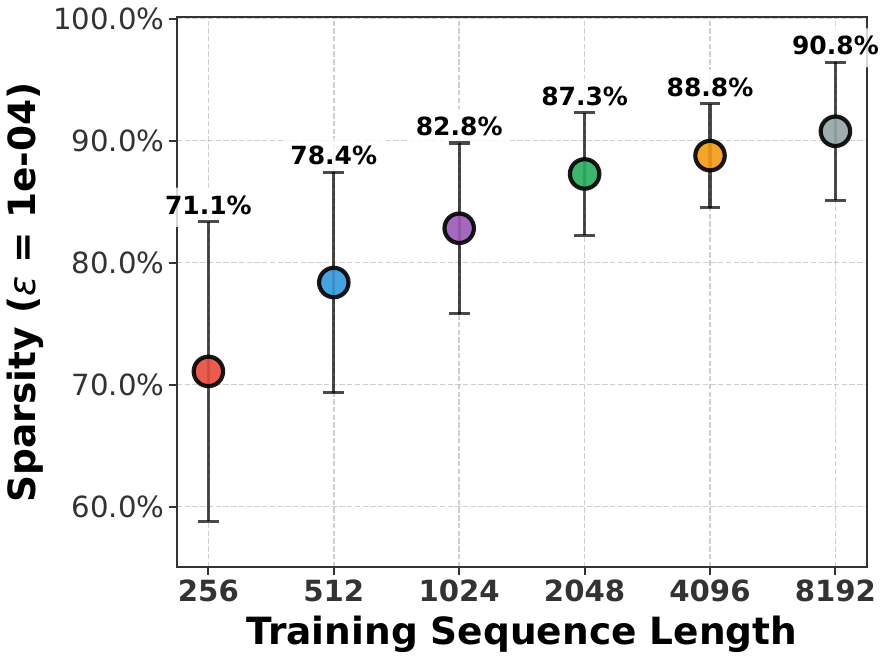}
        \vspace{-5mm}
        \caption{Attn. Sparsity ($\epsilon$=$1\times 10^{-4}$)}
      \end{subfigure}
      \begin{subfigure}[t]{0.3\textwidth}
        \centering
        \includegraphics[width=\linewidth]{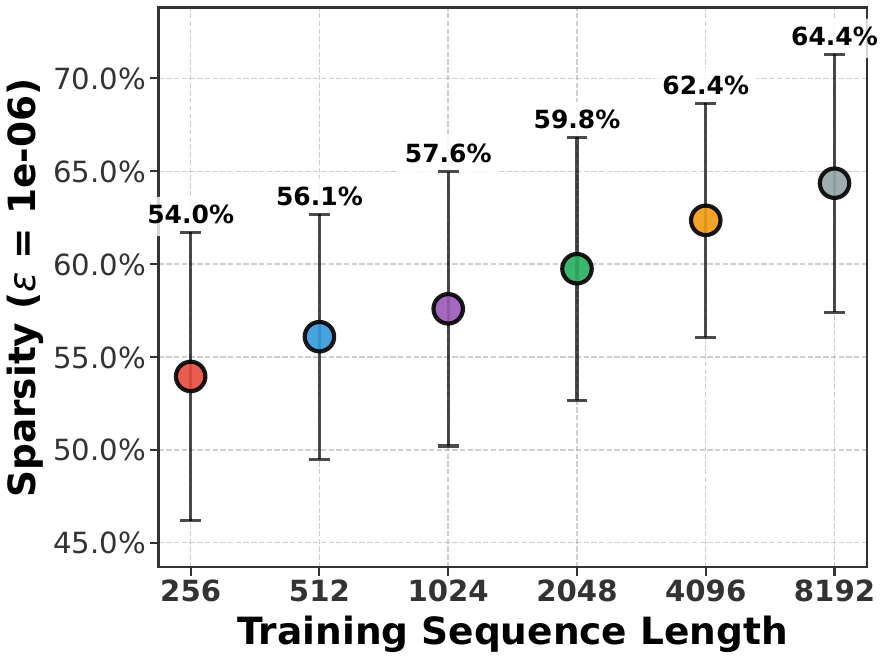}
        \vspace{-5mm}
        \caption{Attn. Sparsity ($\epsilon$=$1\times 10^{-6}$)}
      \end{subfigure}
      \vspace{-1mm}
        \caption{
        Attention sparsity with evaluation length of 4096.
        }
        \label{fig:sequence_sparsity_4096}
\end{figure*}
In~\cref{fig:sequence_kength_sparsity_4096} and~\cref{fig:attn_vis}, we evaluate attention sparsity at a fixed evaluation length of 8192 tokens across all models trained with different sequence lengths.
To further investigate how sparsity varies when evaluation length matches training length, we provide additional results at evaluation lengths of 512, 1024, 2048, and 4096 tokens (\cref{fig:sequence_sparsity_512,fig:sequence_sparsity_1024,fig:sequence_sparsity_2048,fig:sequence_sparsity_4096}).
Across all evaluation lengths, sparsity increases consistently with training sequence length, validating that longer training sequences induce stronger implicit attention sparsity regardless of evaluation context.

\subsection{Attention Entropy}
\label{sec:attention_entropy}
Complementing threshold-based sparsity (\cref{eq:global_sparsity}), we measure attention concentration using entropy.
For attention weights $\mathbf{A}_{\ell,h} \in \mathbb{R}^{T \times T}$ at layer $\ell$ and head $h$, per-query entropy is:
\begin{equation}
\text{Entropy}_{\ell,h}(i) = -\sum_{j=1}^{T} A_{i,j} \log A_{i,j}.
\label{eq:query_entropy}
\end{equation}
Head-level entropy averages across queries:
\begin{equation}
\text{Entropy}_{\ell,h} = \frac{1}{T} \sum_{i=1}^{T} \text{Entropy}_{\ell,h}(i).
\label{eq:head_entropy}
\end{equation}
Global entropy averages across all layers ($L$) and heads ($H$):
\begin{equation}
\text{Entropy}_{\text{global}} = \frac{1}{L \cdot H} \sum_{\ell=1}^{L} \sum_{h=1}^{H} \text{Entropy}_{\ell,h}
\label{eq:global_entropy}
\end{equation}
Lower entropy indicates concentrated attention (sparse).

Attention entropy (\cref{fig:sequence_entropy}) decreases with training length across all evaluation length, mirroring the sparsity results and confirming that longer training induces stronger implicit sparsity.

\begin{figure*}[th]
  \centering
      \begin{subfigure}[t]{0.19\textwidth}
        \centering
        \includegraphics[width=\linewidth]{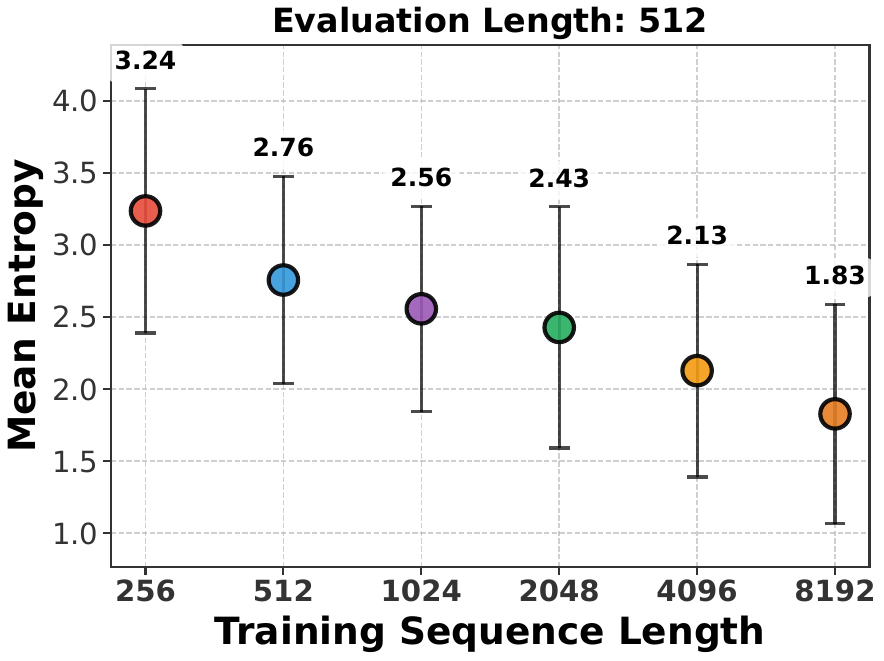}
        \vspace{-5mm}
        \caption{Attn. Entropy (Eval. Length = 512)}
      \end{subfigure}
      \begin{subfigure}[t]{0.19\textwidth}
        \centering
        \includegraphics[width=\linewidth]{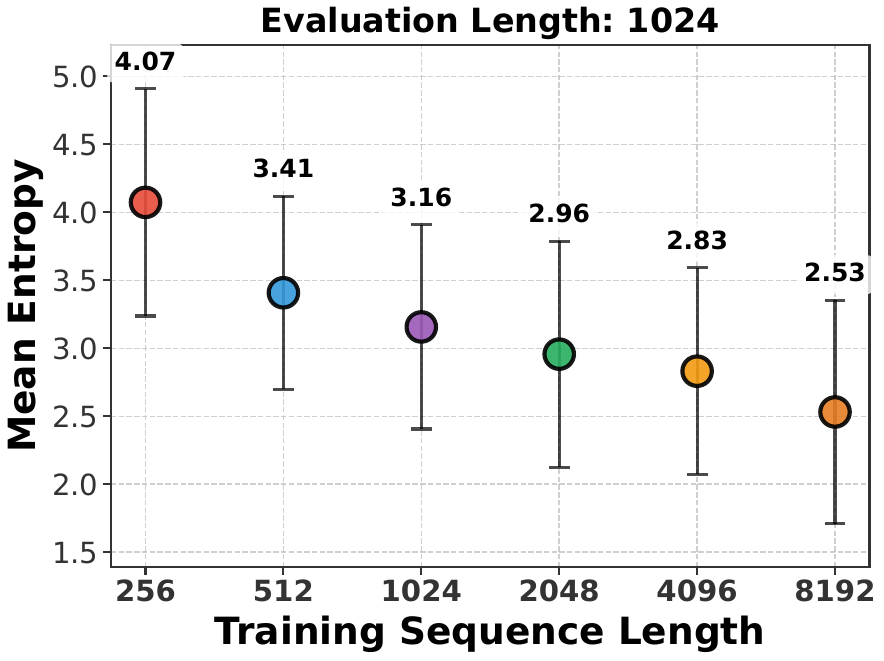}
        \vspace{-5mm}
        \caption{Attn. Entropy (Eval. Length = 1024)}
      \end{subfigure}
      \begin{subfigure}[t]{0.19\textwidth}
        \centering
        \includegraphics[width=\linewidth]{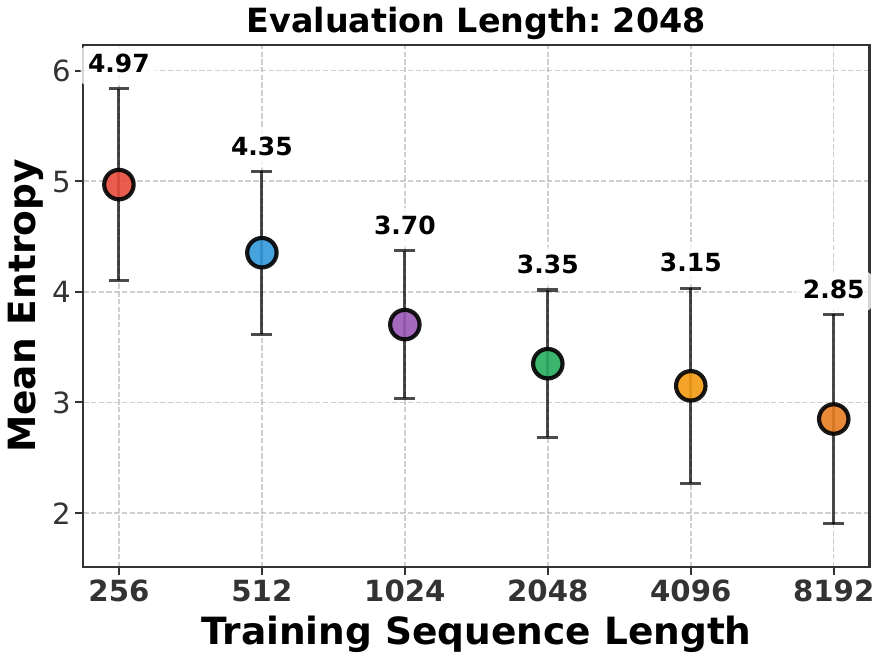}
        \vspace{-5mm}
        \caption{Attn. Entropy (Eval. Length = 2048)}
      \end{subfigure}
      \begin{subfigure}[t]{0.19\textwidth}
        \centering
        \includegraphics[width=\linewidth]{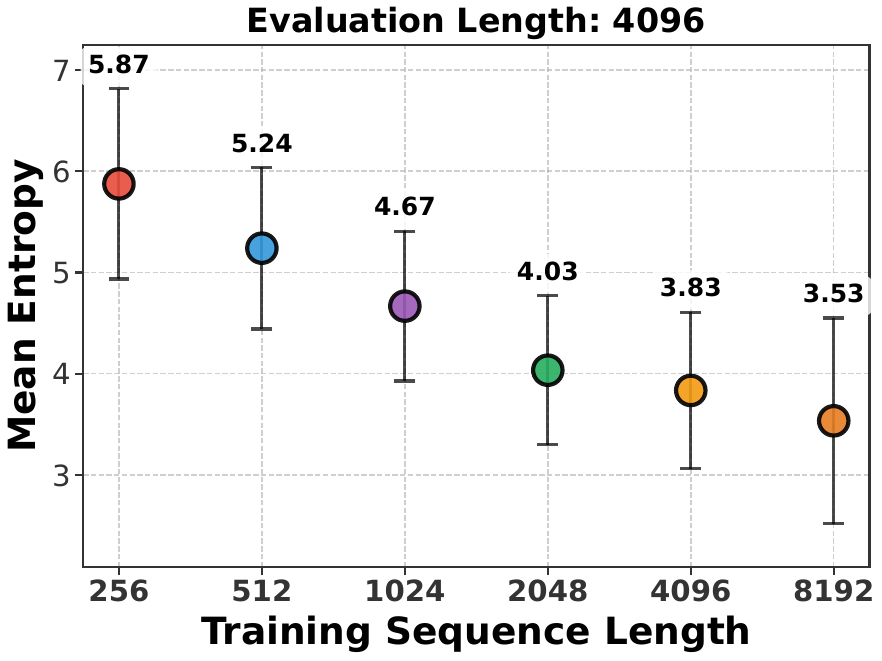}
        \vspace{-5mm}
        \caption{Attn. Entropy (Eval. Length = 4096)}
      \end{subfigure}
      \begin{subfigure}[t]{0.19\textwidth}
        \centering
        \includegraphics[width=\linewidth]{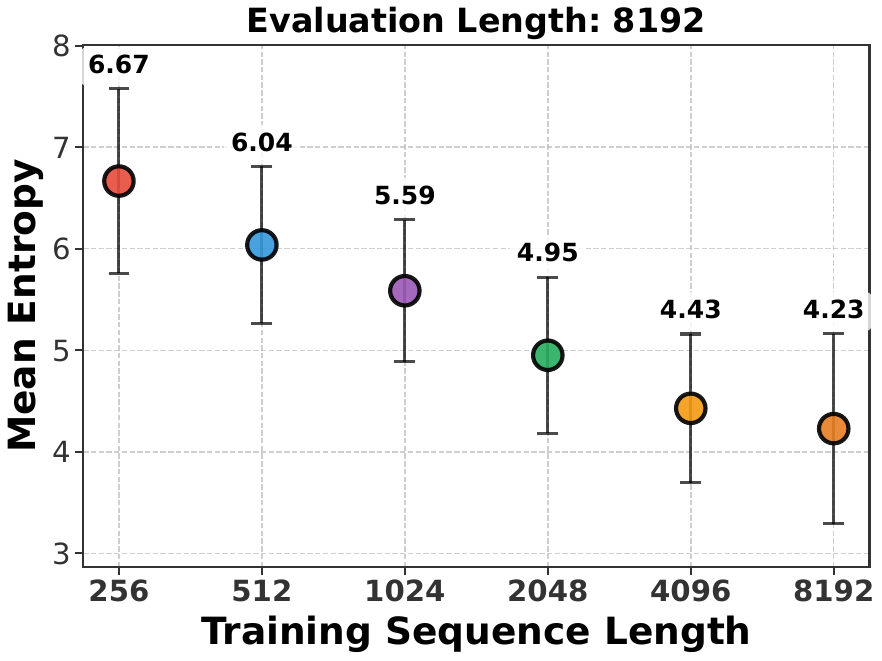}
        \vspace{-5mm}
        \caption{Attn. Entropy (Eval. Length = 8192)}
      \end{subfigure}
      \vspace{-1mm}
        \caption{
        Attention entropy with different evaluation length.
        }
        \label{fig:sequence_entropy}
\end{figure*}
  
\subsection{Extended Variance Trajectories}
\label{sec:extended_variance}

\cref{sub_fig:variance_length,fig:variance_gqa} show last-layer variance trajectories during training for models with different sequence lengths and group sizes.
Since we control for total training FLOPs, models with shorter sequences require more training steps to match the compute of longer-sequence models, and models with larger group sizes (fewer KV heads) similarly require more steps to match FLOPs.
For visualization clarity, we align trajectories to the run with fewest training steps in \cref{sub_fig:variance_length}; complete unaligned trajectories are provided in \cref{fig:extend_var_flops}.
\begin{figure*}[tb]
  \centering
    \begin{subfigure}[t]{0.45\textwidth}
        \centering
        \includegraphics[width=\linewidth]{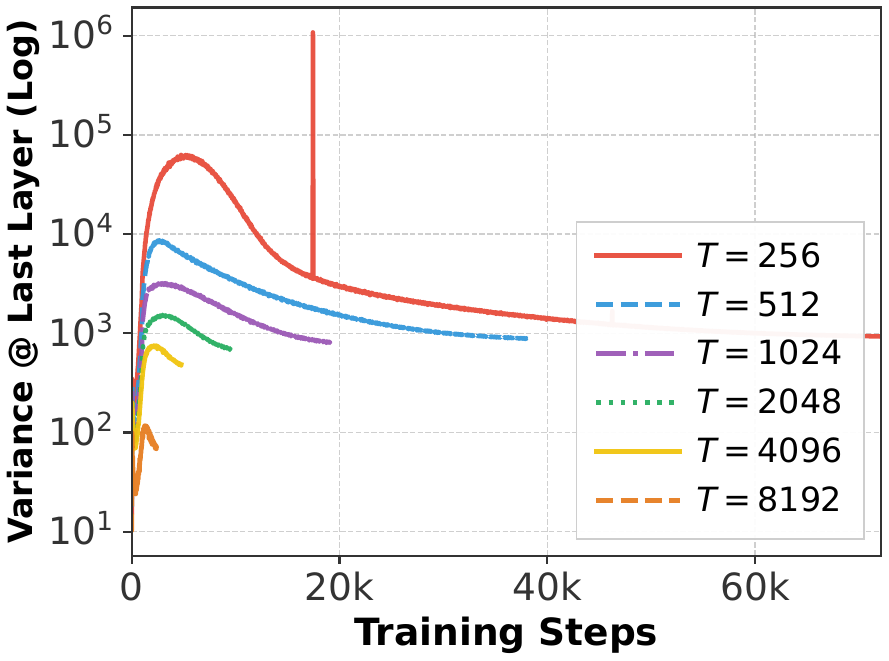}
        \caption{Extended visualization of~\cref{sub_fig:variance_length}}
        \label{sub_fig:extend_length_var}
    \end{subfigure}
    \begin{subfigure}[t]{0.45\textwidth}
        \centering
        \includegraphics[width=\linewidth]{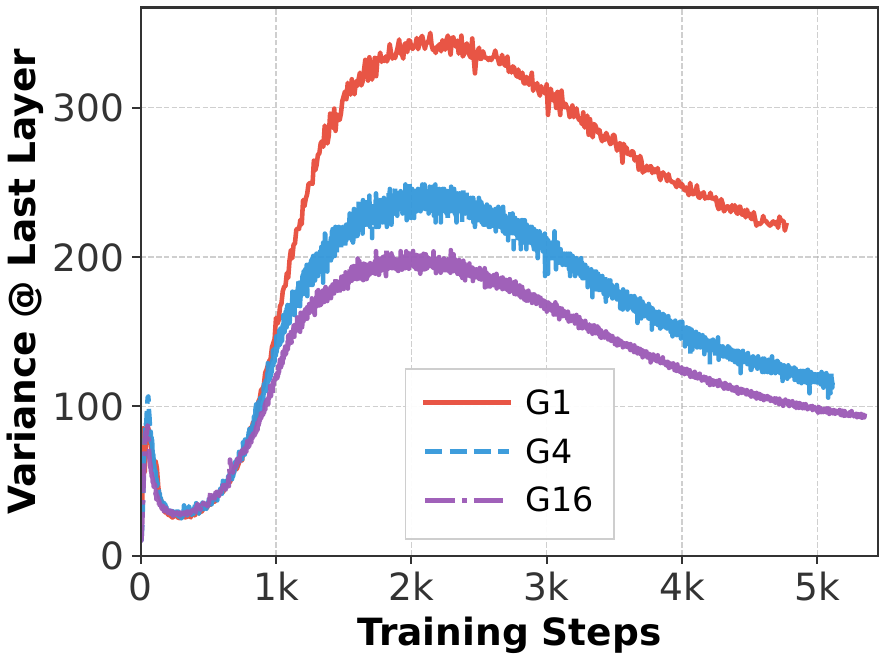}
        \caption{Extended visualization of~\cref{fig:variance_gqa}}
        \label{sub_fig:extend_length_gqa}
    \end{subfigure}
    \caption{
        Variance dynamics throughout training for models with different sequence lengths and group sizes.
        All models trained with equal FLOPs within its comparison group, leading to different training durations.
    }
    \label{fig:extend_var_flops}
\end{figure*}

\subsection{Sensitivity Analysis}
We compute the usefulness score (\cref{eq:usefulness_score}) using a default threshold of $\alpha=0.1$. 
In this section, we analyze the sensitivity of this metric to varying threshold values. 
Specifically, we train models at depths of $L=16$ and $L=32$, applying sequence lengths of $T=1024$ and $T=4096$ to each, and evaluate the resulting usefulness scores under these different configurations.

The results in~\cref{tab:usefull_threshold_analysis} demonstrate that the relative ordering and broader trends remain highly consistent. Deeper models consistently exhibit a greater number of low-effectiveness layers regardless of the chosen $\alpha$, while the introduction of sparsity improves layer usefulness in these configurations. 
Furthermore, our choice of $\alpha=0.1$ is empirically justified by the natural bimodal distribution of per-layer loss increases observed throughout our experiments. 
Layers tend to strictly bifurcate into two categories: clearly useful (yielding loss increases far exceeding 10\%) or clearly redundant (yielding loss increases well below 10\%), leaving very few layers near the decision boundary.

\begin{table}[ht]
\centering
\caption{Sensitivity analysis for suefullness threshold in~\cref{eq:usefulness_score}.}
\label{tab:usefull_threshold_analysis}
\resizebox{0.8\textwidth}{!}{%
\begin{tabular}{l|c|cc|c}
\hline
Depth & Sequence Length & \multicolumn{1}{c|}{Usefull. @ $\alpha=0.05$} & Usefull. @ $\alpha=0.1$ & Usefull. @ $\alpha=0.2$ \\ \hline
$L=16$ & 1024 & 0.88          & 0.75          & 0.63          \\
$L=16$ & 4096 & \textbf{1.00} & \textbf{0.81} & \textbf{0.75} \\ \hline
$L=32$ & 1024 & 0.72          & 0.56          & 0.44          \\
$L=32$ & 4096 & \textbf{0.75} & \textbf{0.59} & \textbf{0.56} \\ \hline
\end{tabular}%
}
\end{table}

\subsection{Model Initialization}
Throughout our experiments, we default to training our models using a standard normal initialization ($\mu=0, \sigma=0.02$) for all linear layers. In this section, we investigate the effects of adopting the scaled initialization proposed by~\cite{zhang2019improving}. To this end, we train a model of depth $L=16$ under two different initialization schemes: standard normal and scaled initialization.

The results presented in~\cref{tab:ini_method} demonstrate that while scaled initialization effectively dampens variance propagation, the introduction of sparsity (achieved here via larger sequence lengths) provides an additional benefit, further enhancing depth effectiveness and leading to superior overall performance.
\begin{table}[ht]
\centering
\caption{Analysis of different initialization method.}
\label{tab:ini_method}
\resizebox{0.8\textwidth}{!}{%
\begin{tabular}{l|c|c|c|cl}
\hline
Depth  & Sequence Length & Initialization & Var. @ Last Layer     & Usefulness    & Loss          \\ \hline
$L=16$ & 1024            & Normal         & $\approx$3e3          & 0.81          & 2.70          \\
$L=16$ & 1024            & Scaled         & \textbf{$\approx$2e3} & 0.81          & 2.71          \\
$L=16$ & 4096            & Scaled         & \textbf{$\approx$5e2} & \textbf{1.00} & \textbf{2.64} \\ \hline
\end{tabular}%
}
\end{table}

\section{Proofs and Mathematical background}\label{sec:proof}

\subsection{Proof of Theorem~\ref{thm:sparsity_global_var}}\label{sec:proof_main}

\begin{proof}
For $\ell\in\{0,\dots,L-1\}$ and $r_{\ell+1}=r_\ell+u_\ell$ with $u_\ell=W_\ell(D_\ell r_\ell)$. Assume $\mathbb{E}[r_\ell]=0$, because $W_\ell$ is independent of $(r_\ell,D_\ell)$, we have:
\begin{equation}
\mathrm{Var}(r_{\ell+1})
=\mathrm{Var}(r_\ell)+\mathrm{Var}(u_\ell)+2  \mathbb{E}\langle r_\ell,u_\ell\rangle.
\label{eq:var_expand2}
\end{equation}
By definition $u_\ell=W_\ell(D_\ell r_\ell)$. Condition on $(r_\ell,D_\ell)$ and apply \eqref{eq:12} with $x=D_\ell r_\ell$:
\begin{equation}
\mathbb{E}  \left[\|u_\ell\|_2^2 \mid r_\ell,D_\ell\right]
=\mathbb{E}  \left[\|W_\ell(D_\ell r_\ell)\|_2^2\mid r_\ell,D_\ell\right]
\le \alpha_\ell \|D_\ell r_\ell\|_2^2.
\end{equation}
Take expectation over $(r_\ell,D_\ell)$:
\begin{equation}
\mathrm{Var}(u_\ell)=\mathbb{E}\|u_\ell\|_2^2
\le \alpha_\ell  \mathbb{E}\|D_\ell r_\ell\|_2^2.
\label{eq:var_u1}
\end{equation}
Next, condition on $r_\ell$ and apply \eqref{eq:12} to $x=r_\ell$:
\begin{equation}
\mathbb{E}  \left[\|D_\ell r_\ell\|_2^2\mid r_\ell\right]
\le \rho_\ell \|r_\ell\|_2^2.
\end{equation}
Take expectation over $r_\ell$ and combine with \eqref{eq:var_u1}:
\begin{equation}
\mathrm{Var}(u_\ell)\le \alpha_\ell\rho_\ell  \mathbb{E}\|r_\ell\|_2^2
=\alpha_\ell\rho_\ell  \mathrm{Var}(r_\ell).
\label{eq:var_u_bound}
\end{equation}
By Cauchy-Schwarz inequality,
\begin{equation}
\begin{aligned}
\mathbb{E}\langle r_\ell,u_\ell\rangle
&\le \mathbb{E}\big[\|r_\ell\|_2\|u_\ell\|_2\big]
\le \sqrt{\mathbb{E}\|r_\ell\|_2^2}  \sqrt{\mathbb{E}\|u_\ell\|_2^2}\\
&=\sqrt{\mathrm{Var}(r_\ell)}  \sqrt{\mathrm{Var}(u_\ell)}.
\end{aligned}
\end{equation}
Using \eqref{eq:var_u_bound}, hence
\begin{equation}
\mathbb{E}\langle r_\ell,u_\ell\rangle
\le \sqrt{\alpha_\ell\rho_\ell}  \mathrm{Var}(r_\ell).
\label{eq:cross_bound}
\end{equation}
Plug \eqref{eq:var_u_bound} and \eqref{eq:cross_bound} into \eqref{eq:var_expand2}:
\begin{equation}\label{eq:one_step_gain}
\begin{aligned}
\mathrm{Var}(r_{\ell+1})
&\le \mathrm{Var}(r_\ell)+\alpha_\ell\rho_\ell  \mathrm{Var}(r_\ell)
+2\sqrt{\alpha_\ell\rho_\ell}  \mathrm{Var}(r_\ell)\\
&=\Bigl(1+\sqrt{\alpha_\ell\rho_\ell}\Bigr)^2  \mathrm{Var}(r_\ell).
\end{aligned}
\end{equation}
Iterate \eqref{eq:one_step_gain} for $\ell=0,\dots,L-1$:
\begin{equation}
\mathrm{Var}(r_L)
\le \mathrm{Var}(r_0)\prod_{\ell=0}^{L-1}\Bigl(1+\sqrt{\alpha_\ell\rho_\ell}\Bigr)^2
\end{equation}
In particular, if $\alpha_\ell\le \alpha$ and $\rho_\ell\le \rho<1$ for all $\ell$, then
\begin{equation}
\begin{aligned}
\mathrm{Var}(r_L)&\le \mathrm{Var}(r_0)\Bigl(1+\sqrt{\alpha\rho}\Bigr)^{2L}\\
&= O\Bigl(1+\sqrt{\alpha\rho}\Bigr)^{2L}
\end{aligned}
\end{equation}
so smaller $\rho$ (more sparsity) gives a smaller per-layer variance gain. which is \eqref{eq:global_var_bound_sparse}. The uniform bound with $(\alpha,\rho)$ follows directly.
\end{proof}
Theorem~\ref{thm:sparsity_global_var} depends on sparsity only through $\rho_\ell$, the (second-moment) energy retention of the mask $D_\ell$. Any mechanism that makes the effective mask sparser (smaller $\rho_\ell$), even if it is not an explicit architectural constraint, yields a smaller per-layer gain $\bigl(1+\sqrt{\alpha_\ell\rho_\ell}\bigr)^2$ and thus a smaller global variance growth across depth.

\subsection{Attention Based Sparsity as variance Regularizer}\label{sec:attn}

Let $R_\ell\in\mathbb{R}^{n\times d}$ be the token matrix at  2   $\ell$.
Fix a \emph{single} feature mask $D_\ell\in\mathbb{R}^{d\times d}$ (diagonal, $0$--$1$), which is used by both the attention and FFN sublayers at depth $\ell$.
Define the two residual sublayers (no LayerNorm) in the standard sequential form
\begin{equation}
Z_\ell  =  R_\ell  +  \mathrm{MHA}_\ell(R_\ell D_\ell),
\qquad
R_{\ell+1}  =  Z_\ell  +  \mathrm{FFN}_\ell(Z_\ell D_\ell),
\label{eq:tr_seq_residual}
\end{equation}
for $\ell=0,\dots,L-1$.

Let the number of heads be $H$, head width be $d_h$ with $Hd_h=d$.
For $X\in\mathbb{R}^{n\times d}$, define for each head $h\in\{1,\dots,H\}$:
\begin{equation}
Q_{\ell,h}(X)=XW_{\ell,h}^{Q},\quad
K_{\ell,h}(X)=XW_{\ell,h}^{K},\quad
V_{\ell,h}(X)=XW_{\ell,h}^{V},
\label{eq:qkv_def}
\end{equation}
with $W_{\ell,h}^{Q},W_{\ell,h}^{K},W_{\ell,h}^{V}\in\mathbb{R}^{d\times d_h}$.
Let
\begin{equation}
S_{\ell,h}(X)=\frac{Q_{\ell,h}(X)K_{\ell,h}(X)^\top}{\sqrt{d_h}}\in\mathbb{R}^{n\times n},
\qquad
P_{\ell,h}(X)=\mathrm{Softmax} \bigl(S_{\ell,h}(X)\bigr)\in\mathbb{R}^{n\times n},
\label{eq:score_softmax}
\end{equation}
where $\mathrm{Softmax}$ applies softmax to each row.
The head output is
\begin{equation}
H_{\ell,h}(X)=P_{\ell,h}(X) V_{\ell,h}(X)\in\mathbb{R}^{n\times d_h}.
\label{eq:head_out}
\end{equation}
Concatenate heads along features:
\begin{equation}
H_\ell(X)=\bigl[H_{\ell,1}(X) ; \dots ; H_{\ell,H}(X)\bigr]\in\mathbb{R}^{n\times (Hd_h)}=\mathbb{R}^{n\times d},
\label{eq:concat_heads}
\end{equation}
and apply the output projection $W_\ell^{O}\in\mathbb{R}^{d\times d}$:
\begin{equation}
\mathrm{MHA}_\ell(X)=H_\ell(X) W_\ell^{O}\in\mathbb{R}^{n\times d}.
\label{eq:msa_full}
\end{equation}

Theorem~\ref{thm:transformer_sparsity_common_mask} establishes that, for a residual recursion, variance propagation across depth is controlled by a per-layer gain factor that depends on sparsity only through a second-moment \textbf{energy retention} parameter. We now specialize this principle to a Transformer residual block composed of \emph{multi-head self-attention} (MHA) and \emph{feed-forward network} (FFN), \emph{without LayerNorm}.
Unlike the linear ResNet recursion, MSA includes the nonlinear $\mathrm{Softmax}$ coupling across tokens. Our goal is to isolate a sufficient condition under which the same conclusion holds: using a common feature mask $D_\ell$ for both MSA and FFN yields a variance bound whose dependence on sparsity appears only via the mask retention factor $\rho_\ell$.

\begin{assumption}[Common sparsity mask with bounded energy retention]
\label{as:common_mask_retention}
For each layer $\ell$, there exists a diagonal $0$--$1$ mask $D_\ell\in\mathbb{R}^{d\times d}$ shared by the MSA and FFN sublayers, and a constant $\rho_\ell\in[0,1]$ such that for all $X\in\mathbb{R}^{n\times d}$,
\begin{equation}
\mathbb{E}\|XD_\ell\|_F^2 \le \rho_\ell \|X\|_F^2.
\end{equation}
\end{assumption}

\begin{assumption}[MSA second-moment gain via explicit $QKV/O$ control]
\label{as:msa_second_moment_gain}
For each layer $\ell$ and head $h\in\{1,\dots,H\}$, let
\begin{equation}
Q_{\ell,h}(X)=XW_{\ell,h}^{Q},~
K_{\ell,h}(X)=XW_{\ell,h}^{K},~
V_{\ell,h}(X)=XW_{\ell,h}^{V},~
P_{\ell,h}(X)=\mathrm{Softmax} \left(\frac{Q_{\ell,h}(X)K_{\ell,h}(X)^\top}{\sqrt{d_h}}\right),
\end{equation}
and define $ \mathrm{MHA}_\ell(X)=\bigl[ P_{\ell,1}(X)V_{\ell,1}(X); \dots; P_{\ell,H}(X)V_{\ell,H}(X)\bigr]W_\ell^{O}$.
Assume there exist nonnegative constants $\kappa_{\ell,h}$, $\nu_{\ell,h}$, and $\omega_\ell$ such that for all $X\in\mathbb{R}^{n\times d}$,
\begin{equation}
\|P_{\ell,h}(X)\|_2 \le \kappa_{\ell,h},
\qquad
\|W_{\ell,h}^{V}\|_2 \le \nu_{\ell,h},
\qquad
\|W_\ell^{O}\|_2 \le \omega_\ell.
\end{equation}
Define
\begin{equation}\label{eq:alpha_attn_def}
\alpha_\ell^{\mathrm{attn}} := \omega_\ell^2 \sum_{h=1}^H \kappa_{\ell,h}^2 \nu_{\ell,h}^2.
\end{equation}
\end{assumption}

\begin{assumption}[FFN second-moment gain]
\label{as:ffn_second_moment_gain}
For each layer $\ell$, $\mathrm{FFN}_\ell:\mathbb{R}^{n\times d}\to\mathbb{R}^{n\times d}$ satisfies: there exists $\alpha_\ell^{\mathrm{ffn}}\ge 0$ such that for all $X\in\mathbb{R}^{n\times d}$,
\begin{equation}
\|\mathrm{FFN}_\ell(X)\|_F^2 \le \alpha_\ell^{\mathrm{ffn}} \|X\|_F^2.
\end{equation}
For example, if $\mathrm{FFN}_\ell(X)=\sigma(XW_{\ell,1})W_{\ell,2}$ with $\sigma$ being $L_\sigma$-Lipschitz and $\|W_{\ell,1}\|_2\le s_{\ell,1}$, $\|W_{\ell,2}\|_2\le s_{\ell,2}$, then one may take $\alpha_\ell^{\mathrm{ffn}}=(L_\sigma s_{\ell,1}s_{\ell,2})^2$.
\end{assumption}

\begin{theorem}[Sparsity reduces variance propagation in Transformer]
\label{thm:transformer_sparsity_common_mask}
Let $\{R_\ell\}_{\ell=0}^L$ follow the no-LayerNorm Transformer recursion \eqref{eq:tr_seq_residual}. Under Assumptions~\ref{as:common_mask_retention}--\ref{as:ffn_second_moment_gain}, the depth-$L$ variance satisfies
\begin{equation}
\mathrm{Var}(R_L)
\le
\mathrm{Var}(R_0)\prod_{\ell=0}^{L-1}
\Bigl(1+\sqrt{\alpha_\ell^{\mathrm{attn}}\rho_\ell}\Bigr)^2
\Bigl(1+\sqrt{\alpha_\ell^{\mathrm{ffn}}\rho_\ell}\Bigr)^2.
\label{eq:tr_var_bound_common_mask}
\end{equation}
In particular, if $\alpha_\ell^{\mathrm{attn}}\le \alpha^{\mathrm{attn}}$, $\alpha_\ell^{\mathrm{ffn}}\le \alpha^{\mathrm{ffn}}$ and $\rho_\ell\le \rho<1$ for all $\ell$, then
\begin{equation}\label{eq:tr_uniform_common_mask}
\begin{aligned}
\mathrm{Var}(R_L)
&\le
\mathrm{Var}(R_0)\Bigl(1+\sqrt{\alpha^{\mathrm{attn}}\rho}\Bigr)^{2L}
\Bigl(1+\sqrt{\alpha^{\mathrm{ffn}}\rho}\Bigr)^{2L} \\
&= O\Bigl(\Bigl(1+\sqrt{\alpha^{\mathrm{ffn}}\rho}\Bigr)^{2L}\Bigl(1+\sqrt{\alpha^{\mathrm{attn}}\rho}\Bigr)^{2L}\Bigr).
\end{aligned}
\end{equation}
\end{theorem}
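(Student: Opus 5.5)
The plan is to reuse the one-step argument from the proof of Theorem~\ref{thm:sparsity_global_var}, applied twice per layer: once to the attention sublayer $R_\ell\mapsto Z_\ell$, once to the FFN sublayer $Z_\ell\mapsto R_{\ell+1}$. Throughout, $\mathrm{Var}(X)$ means $\mathbb{E}\|X\|_F^2$ for a centered token matrix, matching the convention $\mathbb{E}[r_\ell]=0$ used there. The only new ingredient is to check that each sublayer's update satisfies a second-moment bound of the form ``gain $\times$ retention''. Once that holds, the Cauchy--Schwarz cross-term bound goes through verbatim.

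\textbf{Step 1 (attention gain).} For the update $U_\ell:=\mathrm{MHA}_\ell(R_\ell D_\ell)$, write $X=R_\ell D_\ell$. Because the output projection is applied on the right,
\[
\|U_\ell\|_F\le \|W^O_\ell\|_2\,\|[H_{\ell,1}(X);\dots;H_{\ell,H}(X)]\|_F .
\]
Concatenation along features gives $\|[\cdots]\|_F^2=\sum_h\|P_{\ell,h}(X)XW^V_{\ell,h}\|_F^2$. Each summand is bounded by $\|P_{\ell,h}(X)\|_2^2\|W^V_{\ell,h}\|_2^2\|X\|_F^2\le\kappa_{\ell,h}^2\nu_{\ell,h}^2\|X\|_F^2$, using $\|AXB\|_F\le\|A\|_2\|X\|_F\|B\|_2$. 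Combining these yields $\|U_\ell\|_F^2\le\alpha^{\mathrm{attn}}_\ell\|R_\ell D_\ell\|_F^2$ pointwise. The key observation is that the softmax nonlinearity, which depends on $X$, is handled entirely by the uniform operator-norm bound $\kappa_{\ell,h}$, so no independence is needed here. Conditioning on $R_\ell$ and applying Assumption~\ref{as:common_mask_retention} then gives $\mathbb{E}\|U_\ell\|_F^2\le\alpha^{\mathrm{attn}}_\ell\rho_\ell\,\mathbb{E}\|R_\ell\|_F^2$.

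\textbf{Step 2 (FFN gain).} The same argument applies to $V_\ell:=\mathrm{FFN}_\ell(Z_\ell D_\ell)$. Assumption~\ref{as:ffn_second_moment_gain} gives $\|V_\ell\|_F^2\le\alpha^{\mathrm{ffn}}_\ell\|Z_\ell D_\ell\|_F^2$, and Assumption~\ref{as:common_mask_retention}, applied conditionally on $Z_\ell$, gives $\mathbb{E}\|V_\ell\|_F^2\le\alpha^{\mathrm{ffn}}_\ell\rho_\ell\,\mathbb{E}\|Z_\ell\|_F^2$.

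This is the step I expect to be the main obstacle, because the same mask $D_\ell$ enters $Z_\ell$ (through the attention input) and is then reused on $Z_\ell$. The retention assumption is stated for deterministic $X$, so applying it conditionally requires $D_\ell$ to act as fresh randomness given $Z_\ell$. I would handle this in one of two ways:
\begin{itemize}[nosep]
\item read the assumption as a conditional statement, $\mathbb{E}[\|XD_\ell\|_F^2\mid X]\le\rho_\ell\|X\|_F^2$ for $X$ measurable with respect to the past, which is how the proof of Theorem~\ref{thm:sparsity_global_var} already uses it; or
\item if that reading is too strong, note that for a deterministic mask with $\rho_\ell=\max$ retention the bound holds pointwise anyway.
\end{itemize}

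\textbf{Step 3 (cross terms and chaining).} Expand
\[
\mathbb{E}\|R_\ell+U_\ell\|_F^2=\mathbb{E}\|R_\ell\|_F^2+\mathbb{E}\|U_\ell\|_F^2+2\,\mathbb{E}\langle R_\ell,U_\ell\rangle_F .
\]
Bounding the cross term by Cauchy--Schwarz, exactly as in \eqref{eq:cross_bound}, gives
\[
\mathrm{Var}(Z_\ell)\le\bigl(1+\sqrt{\alpha^{\mathrm{attn}}_\ell\rho_\ell}\bigr)^2\mathrm{Var}(R_\ell),
\qquad
\mathrm{Var}(R_{\ell+1})\le\bigl(1+\sqrt{\alpha^{\mathrm{ffn}}_\ell\rho_\ell}\bigr)^2\mathrm{Var}(Z_\ell).
\]
Composing the two inequalities and iterating over $\ell=0,\dots,L-1$ yields \eqref{eq:tr_var_bound_common_mask}. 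Finally, monotonicity of $t\mapsto(1+\sqrt{t})^2$ under the uniform bounds $\alpha^{\mathrm{attn}}_\ell\le\alpha^{\mathrm{attn}}$, $\alpha^{\mathrm{ffn}}_\ell\le\alpha^{\mathrm{ffn}}$, $\rho_\ell\le\rho$ gives \eqref{eq:tr_uniform_common_mask}.
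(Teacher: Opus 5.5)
Your outline matches the paper's proof step for step: operator-norm bound on $\mathrm{MHA}_\ell$, retention, Cauchy--Schwarz, ``repeat for the FFN'', iterate. The paper's proof passes over exactly the point you flag in Step~2; it applies Assumption~\ref{as:common_mask_retention} to $Z_\ell$ without comment. Your worry is justified, but neither patch closes the gap.

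For patch (1): in Theorem~\ref{thm:sparsity_global_var} the masked vector $r_\ell$ is built from randomness that precedes $D_\ell$. Here $Z_\ell=R_\ell+\mathrm{MHA}_\ell(R_\ell D_\ell)$ is itself a function of $D_\ell$. If ``the past'' excludes $D_\ell$, the conditional reading does not cover $X=Z_\ell$. If it includes $D_\ell$, then $X=\mathbf 1\mathbf 1^\top D_\ell$ satisfies $XD_\ell=X$, which forces $\rho_\ell=1$ unless $D_\ell=0$.

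Patch (2) is vacuous for the same reason. A deterministic diagonal $0$--$1$ mask has best constant $\sup_{X\neq0}\|XD\|_F^2/\|X\|_F^2\in\{0,1\}$. So you only recover the no-sparsity case, and the clause with $\rho<1$ would force $D_\ell=0$.

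No patch can work, because the inequality is false under the stated assumptions. Take $L=1$, $n=1$, $d=2$, $H=1$, and let $D_0$ be uniform on $\{\mathrm{diag}(1,0),\mathrm{diag}(0,1)\}$ and independent of a centered $R_0$, so $\rho_0=\tfrac12$. Set $W^V_0=W^O_0=\sqrt2\,I$: the $1\times1$ softmax equals $1$, so $\kappa=1$, $\mathrm{MHA}_0(X)=2X$ and $\alpha^{\mathrm{attn}}_0=4$. Set $\mathrm{FFN}_0(X)=2X$, so $\alpha^{\mathrm{ffn}}_0=4$.

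Then $Z_0=R_0(I+2D_0)$ and $Z_0D_0=3R_0D_0$. Hence $\mathbb{E}\|Z_0D_0\|_F^2=\tfrac92\,\mathbb{E}\|R_0\|_F^2$, while $\rho_0\,\mathbb{E}\|Z_0\|_F^2=\tfrac52\,\mathbb{E}\|R_0\|_F^2$, which breaks your Step~2. Moreover $R_1=R_0(I+8D_0)$ gives $\mathrm{Var}(R_1)=41\,\mathrm{Var}(R_0)$. This exceeds the claimed $(1+\sqrt2)^4\,\mathrm{Var}(R_0)=(17+12\sqrt2)\,\mathrm{Var}(R_0)\approx 33.97\,\mathrm{Var}(R_0)$.

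To obtain a true statement you must change a hypothesis or the conclusion. One option is to give the FFN its own mask $D'_\ell$, independent of $(R_\ell,D_\ell)$ and with retention $\rho_\ell$. Then your Step~2 (conditioning on $Z_\ell$) is exactly right and the product bound holds. The other option is to keep the shared mask and use Minkowski with $\|Z_\ell D_\ell\|_F\le\|R_\ell D_\ell\|_F+\|U^{\mathrm{attn}}_\ell\|_F$. This gives the weaker but valid per-layer factor $\bigl(1+\sqrt{\rho_\ell}\,(\sqrt{\alpha^{\mathrm{attn}}_\ell}+\sqrt{\alpha^{\mathrm{ffn}}_\ell}+\sqrt{\alpha^{\mathrm{attn}}_\ell\alpha^{\mathrm{ffn}}_\ell})\bigr)^2$, in which the cross term carries $\sqrt{\rho_\ell}$ instead of $\rho_\ell$.
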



The proof is provided in Section~\ref{sec:proof10}. Theorem~\ref{thm:transformer_sparsity_common_mask} shows that, even with the nonlinear $\mathrm{Softmax}$ in MHA, the depth-wise variance propagation can be controlled by a multiplicative gain whose dependence on sparsity appears only through the common retention factor $\rho_\ell$. In this sense, feature sparsity acts as a variance regularizer: smaller $\rho_\ell$ reduces both the attention-branch gain $(1+\sqrt{\alpha_\ell^{\mathrm{attn}}\rho_\ell})^2$ and the FFN-branch gain $(1+\sqrt{\alpha_\ell^{\mathrm{ffn}}\rho_\ell})^2$, hence decreases the overall variance growth across depth. The theorem therefore applies directly only to mechanisms that can be represented by such an independent mask model. For training-induced or data-dependent sparsity, the same expression should be viewed as an analogy based on an empirical retention factor, not as a formal implication. When effective sparsity is induced by training dynamics, this interpretation becomes heuristic because the mask, activations, and weights are coupled.

\subsection{Proof of Theorem~\ref{thm:transformer_sparsity_common_mask}}\label{sec:proof10}
\begin{proof}
Let $U_\ell^{\mathrm{attn}}:=\mathrm{MHA}_\ell(R_\ell D_\ell)$, so $Z_\ell=R_\ell+U_\ell^{\mathrm{attn}}$.
Expand
\begin{equation}
\mathrm{Var}(Z_\ell)
=
\mathbb{E}\|Z_\ell\|_F^2
=
\mathbb{E}\|R_\ell\|_F^2
+
\mathbb{E}\|U_\ell^{\mathrm{attn}}\|_F^2
+
2 \mathbb{E}\langle R_\ell,U_\ell^{\mathrm{attn}}\rangle_F,
\label{eq:var_expand_attn}
\end{equation}
where $\langle A,B\rangle_F=\mathrm{trace}(A^\top B)$.

We first bound $\mathbb{E}\|U_\ell^{\mathrm{attn}}\|_F^2$ using the explicit $QKV/O$ structure.
Fix any $X\in\mathbb{R}^{n\times d}$ and consider $\mathrm{MHA}_\ell(X)$.
By \eqref{eq:msa_full} and the spectral norm bound on $W_\ell^{O}$,
\begin{equation}\label{eq:msa_O_bound}
\begin{aligned}
\|\mathrm{MHA}_\ell(X)\|_F
&=
\|H_\ell(X)W_\ell^{O}\|_F
\le
\|W_\ell^{O}\|_2 \|H_\ell(X)\|_F\\
&\le
\omega_\ell \|H_\ell(X)\|_F.
\end{aligned}
\end{equation}
By \eqref{eq:concat_heads}, $\|H_\ell(X)\|_F^2=\sum_{h=1}^H \|H_{\ell,h}(X)\|_F^2$.
For each head, using \eqref{eq:head_out} and $\|AB\|_F\le \|A\|_2\|B\|_F$,
\begin{equation}
\begin{aligned}
\|H_{\ell,h}(X)\|_F
&=
\|P_{\ell,h}(X)V_{\ell,h}(X)\|_F
\le
\|P_{\ell,h}(X)\|_2 \|V_{\ell,h}(X)\|_F\\
&\le
\kappa_{\ell,h} \|V_{\ell,h}(X)\|_F.
\end{aligned}
\label{eq:head_bound}
\end{equation}
Also, $V_{\ell,h}(X)=XW_{\ell,h}^{V}$ and $\|XW\|_F\le \|W\|_2\|X\|_F$, so
\begin{equation}
\|V_{\ell,h}(X)\|_F
\le
\|W_{\ell,h}^{V}\|_2 \|X\|_F
\le
\nu_{\ell,h}\|X\|_F.
\label{eq:V_bound}
\end{equation}
Combining \eqref{eq:msa_O_bound}--\eqref{eq:V_bound} yields
\begin{equation}
\begin{aligned}
\|\mathrm{MHA}_\ell(X)\|_F^2
&\le
\omega_\ell^2 \sum_{h=1}^H \kappa_{\ell,h}^2 \nu_{\ell,h}^2  \|X\|_F^2\\
&=
\alpha_\ell^{\mathrm{attn}}\|X\|_F^2,
\end{aligned}
\label{eq:msa_second_moment_deterministic}
\end{equation}
with $\alpha_\ell^{\mathrm{attn}}$ from \eqref{eq:alpha_attn_def}.
Applying \eqref{eq:msa_second_moment_deterministic} to $X=R_\ell D_\ell$ and taking expectation gives
\begin{equation}
\mathbb{E}\|U_\ell^{\mathrm{attn}}\|_F^2
=
\mathbb{E}\|\mathrm{MHA}_\ell(R_\ell D_\ell)\|_F^2
\le
\alpha_\ell^{\mathrm{attn}} \mathbb{E}\|R_\ell D_\ell\|_F^2.
\label{eq:U_attn_bound_1}
\end{equation}
Using the Assumption~\ref{as:common_mask_retention} with $X=R_\ell$ yields
\begin{equation}
\mathbb{E}\|R_\ell D_\ell\|_F^2
\le
\rho_\ell \mathbb{E}\|R_\ell\|_F^2
=
\rho_\ell \mathrm{Var}(R_\ell),
\label{eq:mask_use_attn}
\end{equation}
so
\begin{equation}
\mathbb{E}\|U_\ell^{\mathrm{attn}}\|_F^2
\le
\alpha_\ell^{\mathrm{attn}}\rho_\ell \mathrm{Var}(R_\ell).
\label{eq:U_attn_bound_2}
\end{equation}
Using Cauchy--Schwarz gives
\begin{equation}
\begin{aligned}
\mathbb{E}\langle R_\ell,U_\ell^{\mathrm{attn}}\rangle_F
&\le
\sqrt{\mathbb{E}\|R_\ell\|_F^2} \sqrt{\mathbb{E}\|U_\ell^{\mathrm{attn}}\|_F^2}\\
&\le
\sqrt{\alpha_\ell^{\mathrm{attn}}\rho_\ell} \mathrm{Var}(R_\ell).
\end{aligned}
\label{eq:cross_attn}
\end{equation}
Substitute \eqref{eq:U_attn_bound_2} and \eqref{eq:cross_attn} into \eqref{eq:var_expand_attn} to obtain
\begin{equation}
\mathrm{Var}(Z_\ell)
\le
\Bigl(1+\sqrt{\alpha_\ell^{\mathrm{attn}}\rho_\ell}\Bigr)^2 \mathrm{Var}(R_\ell).
\label{eq:attn_one_step_gain}
\end{equation}
Following Theorem~\ref{thm:sparsity_global_var}, let $U_\ell^{\mathrm{ffn}}:=\mathrm{FFN}_\ell(Z_\ell D_\ell)$.
Repeating the same variance expansion and Cauchy--Schwarz argument, and using \eqref{as:ffn_second_moment_gain} plus \eqref{as:common_mask_retention}, we get
\begin{equation}
\mathrm{Var}(R_{\ell+1})
\le
\Bigl(1+\sqrt{\alpha_\ell^{\mathrm{ffn}}\rho_\ell}\Bigr)^2 \mathrm{Var}(Z_\ell).
\label{eq:ffn_one_step_gain}
\end{equation}
Combine \eqref{eq:attn_one_step_gain} and \eqref{eq:ffn_one_step_gain}:
\begin{equation}
\begin{aligned}
\mathrm{Var}(R_{\ell+1})
&\le
\Bigl(1+\sqrt{\alpha_\ell^{\mathrm{attn}}\rho_\ell}\Bigr)^2
\Bigl(1+\sqrt{\alpha_\ell^{\mathrm{ffn}}\rho_\ell}\Bigr)^2
\mathrm{Var}(R_\ell)\\
&= O\Bigl(\Bigl(1+\sqrt{\alpha^{\mathrm{ffn}}\rho}\Bigr)^{2L}\Bigl(1+\sqrt{\alpha^{\mathrm{attn}}\rho}\Bigr)^{2L}\Bigr).
\end{aligned}
\label{eq:one_layer_gain_both}
\end{equation}
Iterating \eqref{eq:one_layer_gain_both} for $\ell=0,\dots,L-1$ yields \eqref{eq:tr_var_bound_common_mask}.
The uniform bound \eqref{eq:tr_uniform_common_mask} follows by $\alpha_\ell^{\mathrm{attn}}\le \alpha^{\mathrm{attn}}$, $\alpha_\ell^{\mathrm{ffn}}\le \alpha^{\mathrm{ffn}}$, and $\rho_\ell\le \rho$.
\end{proof}

\subsection{Weight Decay Variance Control: Theory and Proof}
\label{app:wd_theory}

We provide the formal theoretical analysis showing that weight decay contracts parameter variance and reduces layer output variance. This result is a linearized variance-control calculation. In full Transformer training, gradients, weights, and activations are coupled, so the theorem should be interpreted as explaining one possible regularizing effect of weight decay rather than as a complete model.

\begin{theorem}[Weight decay contracts parameter variance and reduces layer-output variance]
\label{thm:wd_var_rigid}
Let $W_t\in\mathbb{R}^{d_{\mathrm{out}}\times d_{\mathrm{in}}}$ follow the decoupled weight-decay update $W_{t+1}=(1-\eta\lambda)W_t-\eta G_t$, Assume $0<\eta\lambda<2$, and that the gradient noise is independent of the weight. If $\mathrm{Var}(G_t)\le \sigma_G^2$ for all $t$, then for a linear layer output $u_t$ , we have
\begin{equation}\label{eq:varU_bound}
\begin{aligned}
\mathrm{Var}(u_t)
&\le \|\Sigma_x\|_2\Bigl(\mathrm{Var}(W_t)+\|\mathbb{E}[W_t]\|_F^2\Bigr),\\
&= O \left((1-\eta\lambda)^{2t}+\frac{\eta\sigma_G^2}{\lambda}\right).
\end{aligned}
\end{equation}
where $\Sigma_x=\mathbb{E}[xx^\top]$ is the covariance matrix of $x$, and $x$ independent of $W_t$
\end{theorem}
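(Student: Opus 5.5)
The plan is to split the claim into two parts. The first part is a deterministic-to-expectation bound linking $\mathrm{Var}(u_t)$ to the second moment of $W_t$. The second part is a linear recursion for the mean and the variance of $W_t$ under the decoupled update.

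\textbf{Step 1 (output variance).} Write $u_t=W_t x$ with $x$ independent of $W_t$, and interpret $\mathrm{Var}(W_t)=\mathbb{E}\|W_t-\mathbb{E}W_t\|_F^2$. Bound the variance by the second moment and condition on $W_t$:
\[
\mathrm{Var}(u_t)\le \mathbb{E}\|W_t x\|_2^2=\mathbb{E}\,\mathrm{tr}\bigl(W_t\Sigma_x W_t^\top\bigr)\le \|\Sigma_x\|_2\,\mathbb{E}\|W_t\|_F^2 .
\]
Then use the bias--variance split $\mathbb{E}\|W_t\|_F^2=\mathrm{Var}(W_t)+\|\mathbb{E}W_t\|_F^2$. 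This gives the first line of \eqref{eq:varU_bound}.

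\textbf{Step 2 (recursions).} Decompose $G_t=\bar G_t+\xi_t$, where $\xi_t$ is zero-mean noise independent of $W_t$ with $\mathbb{E}\|\xi_t\|_F^2\le\sigma_G^2$.

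For the mean, the update gives $\mathbb{E}W_{t+1}=(1-\eta\lambda)\mathbb{E}W_t-\eta\bar G_t$.

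For the variance, the cross term vanishes because $\xi_t$ is independent of $W_t$. Treating $\bar G_t$ as deterministic, this gives
\[
\mathrm{Var}(W_{t+1})\le(1-\eta\lambda)^2\mathrm{Var}(W_t)+\eta^2\sigma_G^2 .
\]
Unrolling with $q=(1-\eta\lambda)^2<1$, which holds because $0<\eta\lambda<2$, and summing the geometric series yields
\[
\mathrm{Var}(W_t)\le q^t\mathrm{Var}(W_0)+\frac{\eta^2\sigma_G^2}{1-q}.
\]
Since $1-q=\eta\lambda(2-\eta\lambda)$, the stationary term equals $\eta\sigma_G^2/(\lambda(2-\eta\lambda))=O(\eta\sigma_G^2/\lambda)$.

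\textbf{Step 3 (mean term).} The mean contracts as $(1-\eta\lambda)^t\mathbb{E}W_0$ plus a driven part. To get the stated rate, I would either assume zero-mean gradients ($\bar G_t=0$, the pure-noise regime implicit in the statement), or absorb a bounded drift $\|\bar G_t\|_F$ into the constant inside the $\sigma_G^2$ term. Under zero mean, $\|\mathbb{E}W_t\|_F^2=(1-\eta\lambda)^{2t}\|\mathbb{E}W_0\|_F^2$.

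Combining Steps 1--3 gives $O\bigl((1-\eta\lambda)^{2t}+\eta\sigma_G^2/\lambda\bigr)$, with constants depending on $\|\Sigma_x\|_2$, $\mathbb{E}\|W_0\|_F^2$ and $2-\eta\lambda$.

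\textbf{Main obstacle.} The hard step is handling the gradient mean $\bar G_t$ and the precise meaning of ``gradient noise is independent of the weight.'' A nonzero mean gradient would add a term of order $\eta^2\sup_t\|\bar G_t\|_F^2/(\eta\lambda)^2$ to $\|\mathbb{E}W_t\|_F^2$, which is not covered by the stated rate. I would therefore first make explicit the reading that $G_t$ is a zero-mean noise term with $\mathbb{E}\|G_t\|_F^2\le\sigma_G^2$. Under that reading the cross terms vanish, and the dependence of the constant on $2-\eta\lambda$ stays bounded away from the boundary case.
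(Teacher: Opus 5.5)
Your proposal is correct and follows the paper's proof step for step: the trace bound with $\|\Sigma_x\|_2$, the bias--variance split of $\mathbb{E}\|W_t\|_F^2$, the centered recursion $\mathrm{Var}(W_{t+1})\le(1-\eta\lambda)^2\mathrm{Var}(W_t)+\eta^2\sigma_G^2$ unrolled as a geometric series, and $\mathbb{E}[G_t]=0$ for the mean term (the paper also uses this in its proof without listing it in the theorem), with the small bonus that bounding $\mathrm{Var}(u_t)$ by the second moment removes the need for the paper's $\mathbb{E}[x]=0$. Drop the alternative of absorbing a nonzero drift into the $\sigma_G^2$ term, since, as your own obstacle paragraph shows, it adds an order-$1/\lambda^2$ term, so only the zero-mean reading gives the stated rate.
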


Under $0<\eta\lambda\le 1$, the bound in \eqref{eq:varU_bound} is decreasing in $\lambda$ term-by-term:
$(1-\eta\lambda)^{2t}$ decreases monotonically with $\lambda$ and captures contraction of the initialization contribution, while $\eta\sigma_G^2/\lambda$ decreases with $\lambda$ and captures the steady-state variance induced by gradient noise. Hence, larger weight decay $\lambda$ yields a smaller upper bound on $\mathrm{Var}(u_t)$. Since weight decay is applied during training and continuously shrinks parameter magnitude, this variance reduction can be viewed as an implicit regularization effect induced by optimization.

\begin{proof}
\label{sec:proof_wd}
Let $\bar W_t:=\mathbb{E}[W_t]$ and $\widetilde W_t:=W_t-\bar W_t$, and similarly $\bar G_t:=\mathbb{E}[G_t]$, $\widetilde G_t:=G_t-\bar G_t$.
Taking expectation of $W_{t+1}=(1-\eta\lambda)W_t-\eta G_t$ gives
\begin{equation}
\bar W_{t+1}=(1-\eta\lambda)\bar W_t-\eta \bar G_t.
\label{eq:mean_rec}
\end{equation}
Subtract \eqref{eq:mean_rec} from $W_{t+1}=(1-\eta\lambda)W_t-\eta G_t$ to obtain the centered update
\begin{equation}
\widetilde W_{t+1}=(1-\eta\lambda)\widetilde W_t-\eta \widetilde G_t.
\label{eq:centered_rec}
\end{equation}
By definition,
\begin{equation}
\mathrm{Var}(W_{t+1})=\mathbb{E} \left[\|\widetilde W_{t+1}\|_F^2\right]
=\mathbb{E} \left[\|(1-\eta\lambda)\widetilde W_t-\eta \widetilde G_t\|_F^2\right].
\end{equation}
Thus we have:
\begin{equation}
\|(1-\eta\lambda)\widetilde W_t-\eta \widetilde G_t\|_F^2
=(1-\eta\lambda)^2\|\widetilde W_t\|_F^2
+\eta^2\|\widetilde G_t\|_F^2
-2\eta(1-\eta\lambda)\langle \widetilde W_t,\widetilde G_t\rangle_F.
\end{equation}
Taking expectation and by the assumption that $W_t$ and $G_t$ are independent:
\begin{equation}
\begin{aligned}
\mathrm{Var}(W_{t+1})
&=(1-\eta\lambda)^2 \mathbb{E} \left[\|\widetilde W_t\|_F^2\right]
+\eta^2 \mathbb{E} \left[\|\widetilde G_t\|_F^2\right]\\
&=(1-\eta\lambda)^2 \mathrm{Var}(W_t)+\eta^2 \mathrm{Var}(G_t),
\end{aligned}
\end{equation}
Let $\rho:=(1-\eta\lambda)^2$. Under $0<\eta\lambda<2$, we have $0\le \rho<1$. If $\mathrm{Var}(G_t)\le \sigma_G^2$, then we have:
\begin{equation}
\mathrm{Var}(W_{t+1})\le \rho \mathrm{Var}(W_t)+\eta^2\sigma_G^2.
\end{equation}
Iterating the inequality gives
\begin{equation}
\begin{aligned}
\mathrm{Var}(W_t)&\le \rho^t \mathrm{Var}(W_0)+\eta^2\sigma_G^2\sum_{s=0}^{t-1}\rho^s\\
&=\rho^t \mathrm{Var}(W_0)+\eta^2\sigma_G^2\frac{1-\rho^t}{1-\rho}\\
&\le \rho^t \mathrm{Var}(W_0)+\frac{\eta^2\sigma_G^2}{1-\rho},
\end{aligned}
\end{equation}
Let $u_t=W_t x$. Since $\mathbb{E}[x]=0$ and $x$ is independent of $W_t$,
\begin{equation}
\begin{aligned}
\mathbb{E}[u_t]&=\mathbb{E}[W_t]\mathbb{E}[x]=0,
\\
\mathrm{Var}(u_t)&=\mathbb{E} \left[\|u_t\|_2^2\right]
=\mathbb{E} \left[x^\top W_t^\top W_t x\right].
\end{aligned}
\end{equation}
Condition on $W_t$ and use $\mathbb{E}[xx^\top]=\Sigma_x$:
\begin{equation}
\begin{aligned}
\mathbb{E} \left[\|u_t\|_2^2\mid W_t\right]
&=\mathrm{tr} \left(W_t^\top W_t \Sigma_x\right)\\
&\le \|\Sigma_x\|_2 \mathrm{tr}(W_t^\top W_t)\\
&=\|\Sigma_x\|_2 \|W_t\|_F^2.
\end{aligned}
\end{equation}
Decomposing $\|W_t\|_F^2$ into mean and variance terms,
\begin{equation}
\mathbb{E} \left[\|W_t\|_F^2\right]
=\mathbb{E} \left[\|\widetilde W_t+\bar W_t\|_F^2\right]
=\mathbb{E} \left[\|\widetilde W_t\|_F^2\right]+\|\bar W_t\|_F^2
=\mathrm{Var}(W_t)+\|\mathbb{E}[W_t]\|_F^2,
\end{equation}
where the cross term vanishes since $\mathbb{E}[\widetilde W_t]=0$. Therefore,
\begin{equation}\label{eq:39}
\mathrm{Var}(u_t)
\le \|\Sigma_x\|_2\Bigl(\mathrm{Var}(W_t)+\|\mathbb{E}[W_t]\|_F^2\Bigr),
\end{equation}
By $\mathbb{E}[G_t]=0$, the mean recursion \eqref{eq:mean_rec} becomes
\begin{equation}
\mathbb{E}[W_{t+1}]=(1-\eta\lambda)\mathbb{E}[W_t],
\end{equation}
hence
\begin{equation}
\|\mathbb{E}[W_t]\|_F^2=(1-\eta\lambda)^{2t}\|\mathbb{E}[W_0]\|_F^2.
\label{eq:mean_sq_closed}
\end{equation}
From the assumption in Theorem~\ref{thm:wd_var_rigid}, $\mathrm{Var}(G_t)\le \sigma_G^2$,
\begin{equation}
\mathrm{Var}(W_t)\le (1-\eta\lambda)^{2t}\mathrm{Var}(W_0)
+\frac{\eta^2\sigma_G^2}{1-(1-\eta\lambda)^2}.
\label{eq:varW_bound_repeat}
\end{equation}
Substituting \eqref{eq:varW_bound_repeat} and \eqref{eq:mean_sq_closed} into~\eqref{eq:39}, we obtain
\begin{equation}
\begin{aligned}
\mathrm{Var}(u_t)
&\le \|\Sigma_x\|_2\left(
(1-\eta\lambda)^{2t}\Bigl(\mathrm{Var}(W_0)+\|\mathbb{E}[W_0]\|_F^2\Bigr)
+\frac{\eta^2\sigma_G^2}{1-(1-\eta\lambda)^2}
\right)\\
&= \|\Sigma_x\|_2\left(
(1-\eta\lambda)^{2t}\Bigl(\mathrm{Var}(W_0)+\|\mathbb{E}[W_0]\|_F^2\Bigr)
+\frac{\eta\sigma_G^2}{\lambda(2-\eta\lambda)}
\right)\\
&= O \left((1-\eta\lambda)^{2t}+\frac{\eta\sigma_G^2}{\lambda}\right),
\end{aligned}
\end{equation}
where the last step uses $0<\eta\lambda\le 1$ so that $2-\eta\lambda=\Theta(1)$.

The bound is the sum of two terms with different decay behaviors in $\lambda$. First, $(1-\eta\lambda)^{2t}$ decreases monotonically in $\lambda$ and captures exponential contraction of the initialization contribution under weight decay. Second, $\frac{\eta\sigma_G^2}{\lambda}$ also decreases in $\lambda$ and corresponds to the steady-state variance induced by gradient noise. Therefore, increasing $\lambda$ tightens both parts of the upper bound, implying smaller $\mathrm{Var}(u_t)$ under the stated assumptions.

\end{proof}

\subsection{Sequence Length Variance Control: Theory and Proof}
\label{app:seq_theory}

\begin{theorem}[Sequence length reduces attention-output variance]
\label{thm:seq_len_var}
Assume the attention output is the uniform average $x=\frac{1}{T}\sum_{i=1}^T h_i$, where $\{h_i\}_{i=1}^T$ are scalar random variables (e.g., one fixed coordinate of the value vectors) that are zero-mean and mutually independent, with $\mathrm{Var}(h_i)=\sigma^2$ for all $i$. Then $\mathrm{Var}(x)=\frac{\sigma^2}{T}$.
\end{theorem}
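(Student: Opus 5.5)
The plan is a direct second-moment computation, using only linearity of expectation and the vanishing of cross terms under independence. Write $x=\frac1T\sum_{i=1}^T h_i$.

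First, linearity and $\mathbb{E}[h_i]=0$ give $\mathbb{E}[x]=0$. Hence $\mathrm{Var}(x)=\mathbb{E}[x^2]$.

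Next, expand the square:
\begin{equation*}
\mathbb{E}[x^2]=\frac{1}{T^2}\sum_{i=1}^T\sum_{j=1}^T \mathbb{E}[h_ih_j].
\end{equation*}
Split the double sum into diagonal terms $i=j$ and off-diagonal terms $i\neq j$.

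For the diagonal terms, each contributes $\mathbb{E}[h_i^2]=\mathrm{Var}(h_i)=\sigma^2$, since $h_i$ has zero mean. There are $T$ such terms, so together they give $T\sigma^2$.

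For the off-diagonal terms, independence of $h_i$ and $h_j$ gives $\mathbb{E}[h_ih_j]=\mathbb{E}[h_i]\,\mathbb{E}[h_j]=0$. Only pairwise uncorrelatedness is actually needed here. This step also requires that $\mathbb{E}[h_ih_j]$ be finite, which is guaranteed because each $h_i$ has finite variance.

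Combining the two parts gives $\mathrm{Var}(x)=\frac{1}{T^2}\,T\sigma^2=\frac{\sigma^2}{T}$.

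None of these steps is a real obstacle. The only point that needs care is stating exactly where independence and finite second moments enter, which is the vanishing of the cross terms.

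I would close with a remark interpreting the uniform average as the idealized attention row, with $A_{i,j}=1/T$. This is what links the result to the sparsity discussion in \cref{sec:sequence_length_sparsity}.
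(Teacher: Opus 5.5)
Your proof is correct and is essentially the paper's argument: expand $\mathbb{E}[x^2]$, use independence and zero mean to eliminate the cross terms, and collect the $T$ diagonal contributions of $\sigma^2$. The paper only differs by carrying general weights $a_i\ge 0$ with $\sum_i a_i=1$, obtaining $\mathrm{Var}(x)=\sigma^2\sum_i a_i^2$ before setting $a_i=1/T$. That formula shows concentrated attention \emph{increases} the variance (since $\sum_i a_i^2\ge 1/T$), so your closing remark linking the uniform case to the sparsity discussion should describe an averaging effect, not a sparsity effect.
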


\begin{proof}
Let $x=\sum_{i=1}^T a_i h_i$ with $a_i\ge 0$ and $\sum_{i=1}^T a_i=1$. Assume $\{h_i\}_{i=1}^T$ are independent and $\mathbb{E}[h_i]=0$ for all $i$, with $\mathrm{Var}(h_i)=\sigma^2$. Then $\mathbb{E}[x]=0$ and
\begin{equation}
\begin{aligned}
\mathrm{Var}(x)&=\mathbb{E}\big[x^2\big]
=\mathbb{E}\left[\left(\sum_{i=1}^T a_i h_i\right)^2\right]\\
&=\sum_{i=1}^T a_i^2 \mathbb{E}[h_i^2]
+2\sum_{1\le i<j\le T} a_i a_j \mathbb{E}[h_i h_j].
\end{aligned}
\end{equation}
By independence and $\mathbb{E}[h_i]=0$, for $i\neq j$ we have
\begin{equation}
\mathbb{E}[h_i h_j]=\mathbb{E}[h_i]\mathbb{E}[h_j]=0,
\end{equation}
so all cross terms vanish. Also $\mathbb{E}[h_i^2]=\mathrm{Var}(h_i)=\sigma^2$. Hence,
\begin{equation}
\mathrm{Var}(x)=\sum_{i=1}^T a_i^2 \sigma^2=\sigma^2\sum_{i=1}^T a_i^2.
\end{equation}
If $a_i=1/T$, then $\sum_{i=1}^T a_i^2=1/T$, giving $\mathrm{Var}(x)=\sigma^2/T$.
\end{proof}

Theorem~\ref{thm:seq_len_var} captures only an averaging effect under uniform independent values. It does not by itself prove that longer-context training produces sparse attention; that claim is evaluated empirically through thresholded sparsity and entropy measurements.

\subsection{Mixture of Experts Variance Analysis: Theory and Proof}
\label{app:moe_theory}
We formally characterize how Top-$k$ MoE routing reduces variance through selective averaging.

We write the gating-based MoE update in Top-$k$ set notation. Let $S(x):=\text{Top-}k(\mathbf W_g x)$ be the selected experts, so
\begin{equation}
\mathrm{MoE}(x)=\sum_{i\in S(x)} g_i(x) \mathrm{FFN}_i(x),\qquad |S(x)|=k.
\end{equation}
In the uniform-gating case $g_i(x)=1/k$ (for $i\in S(x)$), this reduces to $\mathrm{MoE}(x)=\frac{1}{k}\sum_{i\in S(x)}\mathrm{FFN}_i(x)$, which is the form used in Theorem~\ref{thm:moe_var}. These assumptions are intentionally idealized. In trained MoE layers, routing is input-dependent, gates are unequal, and experts are statistically coupled through joint training; hence the $1/k$ factor should be read as a limiting intuition rather than a predictive law.

\begin{theorem}[Top-$k$ MoE reduces layer-output and Jacobian variance]\label{thm:moe_var}
Fix a Top-$k$ MoE layer with uniform gating
\begin{equation}
\mathrm{MoE}(x)=\frac{1}{k}\sum_{i\in S(x)} \mathrm{FFN}_i(x),\qquad |S(x)|=k,
\end{equation}
and assume the routing set $S(x)$ is locally constant around $x$.

\textbf{(Output variance).} If $\{\mathrm{FFN}_i(x)\}_{i\in S(x)}$ are mutually independent with common mean and variance, then
\begin{equation}\label{eq:output_var_MOE}
\mathrm{Var} \left[\mathrm{MoE}(x)\right]
=\frac{1}{k} \mathrm{Var} \left[\mathrm{FFN}_1(x)\right].
\end{equation}

\textbf{(Jacobian variance).} Let $J_i(x):=\frac{\partial \mathrm{FFN}_i(x)}{\partial x}$ and 
$J^{\mathrm{MoE}}(x):=\frac{\partial \mathrm{MoE}(x)}{\partial x}
=\frac{1}{k}\sum_{i\in S(x)} J_i(x)$.
If $\{J_i(x)\}_{i\in S(x)}$ are mutually independent with common mean and variance, then
\begin{equation}\label{eq:jacobian_var_MOE}
\mathrm{Var} \left[J^{\mathrm{MoE}}(x)\right]
=\frac{1}{k} \mathrm{Var} \left[J_1(x)\right].
\end{equation}
\end{theorem}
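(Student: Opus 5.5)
The plan is to reduce both claims to the elementary variance-of-an-average computation already used in the proof of Theorem~\ref{thm:seq_len_var}, now applied with $k$ summands instead of $T$. The key preliminary step is to use the local constancy of $S(x)$: in a neighbourhood of $x$ the routing set is fixed, say $S(x)=S$ with $|S|=k$. On that neighbourhood $\mathrm{MoE}$ is the fixed linear combination $\frac1k\sum_{i\in S}\mathrm{FFN}_i$. This does two things. It removes the randomness and discontinuity of the Top-$k$ selection from the variance computation, since we treat $S$ as deterministic given $x$. It also justifies differentiating term by term, so that $J^{\mathrm{MoE}}(x)=\frac1k\sum_{i\in S}J_i(x)$ holds as stated.

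\textbf{Output variance.} Fix one output coordinate, or work with the trace variance $\mathbb{E}\|\cdot-\mathbb{E}\cdot\|_2^2$, which is additive over coordinates. Let $\mu$ and $\sigma^2$ be the common mean and variance of $\mathrm{FFN}_i(x)$. Write $\mathrm{MoE}(x)-\mu=\frac1k\sum_{i\in S}(\mathrm{FFN}_i(x)-\mu)$ and expand the square. Mutual independence makes the cross terms vanish: for $i\ne j$, $\mathbb{E}[(\mathrm{FFN}_i-\mu)(\mathrm{FFN}_j-\mu)]=0$, exactly as in the proof of Theorem~\ref{thm:seq_len_var}. The diagonal terms contribute $\frac{1}{k^2}\cdot k\sigma^2=\sigma^2/k$. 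Since the variances are common, $\sigma^2=\mathrm{Var}[\mathrm{FFN}_1(x)]$, which gives \eqref{eq:output_var_MOE}. (Index $1$ stands for a generic selected expert.)

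\textbf{Jacobian variance.} Repeat the same argument with matrix-valued random variables, defining $\mathrm{Var}[J]:=\mathbb{E}\|J-\mathbb{E}J\|_F^2$, or equivalently entrywise. Expand $\|\frac1k\sum_{i\in S}(J_i-\bar J)\|_F^2$ using the Frobenius inner product. Independence kills the expectations of the cross inner products $\langle J_i-\bar J,J_j-\bar J\rangle_F$, leaving $\frac1k\mathrm{Var}[J_1(x)]$, which is \eqref{eq:jacobian_var_MOE}.

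The main obstacle is the first step, not the algebra. One must be careful that conditioning on $S(x)$ does not break the independence and identical-moment hypotheses. If $S(x)$ depends on the same randomness as the experts, selection bias could alter the moments. I would therefore interpret the hypotheses as holding conditionally on the (locally constant) routing set, and state this explicitly. With that reading the rest is routine.
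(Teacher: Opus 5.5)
Your proposal is correct and follows essentially the same route as the paper: center the average at the common mean, expand the squared norm (Euclidean for the output, Frobenius or entrywise for the Jacobian), use independence to make the cross terms vanish, and collect $k\cdot\frac{1}{k^2}$ from the diagonal terms. You also do two things the paper leaves implicit: you use local constancy of $S(x)$ to fix the routing set and to justify differentiating term by term, and you note that the independence and moment hypotheses should be read conditionally on the routing set.
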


Theorem~\ref{thm:moe_var} shows that Top-$k$ MoE reduces variance by an averaging effect: under uniform gating, the MoE output and Jacobian are averages over $k$ selected experts, and under independence the variance decreases by a factor $1/k$.

\begin{proof}
First we want to prove the \textbf{output variance} in Equation~\eqref{eq:output_var_MOE}. Let $u_i(x_\ell):=\mathrm{FFN}_i(x_\ell)\in\mathbb{R}^d$ denote the $i$-th expert output, and
\begin{equation}
u^{\mathrm{MoE}}(x_\ell):=\mathrm{MoE}(x_\ell)=\frac{1}{k}\sum_{i\in\mathrm{Top}\text{-}k} u_i(x_\ell).
\end{equation}
By the assumption that the selected expert outputs $\{u_i(x_\ell)\}_{i\in\mathrm{Top}\text{-}k}$ are mutually independent, with common mean $\mathbb{E}[u_i(x_\ell)]=m_\ell$ and common second moment, then $\mathbb{E}[u^{\mathrm{MoE}}(x_\ell)]=m_\ell$, and
\begin{equation}
u^{\mathrm{MoE}}(x_\ell)-m_\ell=\frac{1}{k}\sum_{i\in\mathrm{Top}\text{-}k}\big(u_i(x_\ell)-m_\ell\big).
\end{equation}
Therefore,
\begin{equation}\label{eq:22}
\begin{aligned}
\mathrm{Var} \left[u^{\mathrm{MoE}}(x_\ell)\right]
&=\mathbb{E}\left[\left\|\frac{1}{k}\sum_{i\in\mathrm{Top}\text{-}k}\big(u_i(x_\ell)-m_\ell\big)\right\|_2^2\right]\\
&=\frac{1}{k^2} \mathbb{E}\left[\left\|\sum_{i\in\mathrm{Top}\text{-}k}\big(u_i(x_\ell)-m_\ell\big)\right\|_2^2\right].
\end{aligned}
\end{equation}
Expanding the Equation~\eqref{eq:22} yields
\begin{equation}
\left\|\sum_{i\in\mathrm{Top}\text{-}k}\big(u_i(x_\ell)-m_\ell\big)\right\|_2^2
=\sum_{i\in\mathrm{Top}\text{-}k}\|u_i(x_\ell)-m_\ell\|_2^2
+2\sum_{\substack{i,j\in\mathrm{Top}\text{-}k\\ i<j}}
\left\langle u_i(x_\ell)-m_\ell, u_j(x_\ell)-m_\ell\right\rangle.
\end{equation}
Using independence with zero-mean centered terms gives
\begin{equation}
\mathbb{E}\left[\left\langle u_i(x_\ell)-m_\ell, u_j(x_\ell)-m_\ell\right\rangle\right]
=\left\langle \mathbb{E}[u_i(x_\ell)-m_\ell], \mathbb{E}[u_j(x_\ell)-m_\ell]\right\rangle
=0,\qquad (i\neq j),
\end{equation}
so all cross terms vanish. Hence
\begin{equation}
\begin{aligned}
\mathrm{Var} \left[u^{\mathrm{MoE}}(x_\ell)\right]
&=\frac{1}{k^2}\sum_{i\in\mathrm{Top}\text{-}k}\mathbb{E}\big[\|u_i(x_\ell)-m_\ell\|_2^2\big]\\
&=\frac{1}{k^2}\cdot k \mathrm{Var} \left[u_1(x_\ell)\right]\\
&=\frac{1}{k} \mathrm{Var} \left[\mathrm{FFN}_1(x_\ell)\right].
\end{aligned}
\end{equation}

Next we want to prove the \textbf{Jacobian variance} in Equation~\eqref{eq:jacobian_var_MOE}. For the variance, in a standard feed-forward network (FFN), the Jacobian variance at layer $\ell$ is
\begin{equation}
\mathrm{Var} \left[J_\ell\right]
=\mathrm{Var} \left(\frac{\partial \mathrm{FFN}(x_\ell)}{\partial x_\ell}\right).
\end{equation}
For a mixture-of-experts (MoE) layer with Top-$k$ routing, the output is
\begin{equation}
\mathrm{MoE}(x_\ell)=\frac{1}{k}\sum_{i\in\mathrm{Top}\text{-}k}\mathrm{FFN}_i(x_\ell),
\end{equation}
with Jacobian
\begin{equation}
J_\ell^{\mathrm{MoE}}
=\frac{\partial \mathrm{MoE}(x_\ell)}{\partial x_\ell}
=\frac{1}{k}\sum_{i\in\mathrm{Top}\text{-}k}\frac{\partial \mathrm{FFN}_i(x_\ell)}{\partial x_\ell}
=\frac{1}{k}\sum_{i\in\mathrm{Top}\text{-}k} J_{\ell,i}.
\end{equation}
Assume the selected experts have mutually independent Jacobians and identical second-moment structure, we have
\begin{equation}
\mathrm{Var} \left[J_\ell^{\mathrm{MoE}}\right]
=\mathrm{Var} \left(\frac{1}{k}\sum_{i\in\mathrm{Top}\text{-}k} J_{\ell,i}\right)
=\frac{1}{k^2}\sum_{i\in\mathrm{Top}\text{-}k}\mathrm{Var} \left[J_{\ell,i}\right].
\end{equation}
If each expert has the same variance as a standard FFN, then
\begin{equation}
\mathrm{Var} \left[J_\ell^{\mathrm{MoE}}\right]
=\frac{1}{k} \mathrm{Var} \left[J_\ell^{\mathrm{FFN}}\right].
\end{equation}
Thus, under these assumptions, the MoE layer introduces a variance reduction factor $1/k$ compared with a single FFN layer.
\end{proof}

\subsection{MoE Jacobian Norm Scaling under Uniform Top-$k$ Routing}
By~\cite{sun2025curse}, we can estimate the upper bound of the gradient norm of $\frac{\partial y_\ell}{\partial x_1}$:
\begin{equation}\label{eq:34}
    \left\| \frac{\partial y_\ell}{\partial x^\prime_\ell} \right\|_2 \leq 1 + \left\| \frac{\partial \mathrm{FFN}(\mathrm{LN}(x^\prime_\ell))}{\partial \mathrm{LN}(x^\prime_\ell)} \right\|_2 \left\| \frac{\partial \mathrm{LN}(x^\prime_\ell)}{\partial x^\prime_\ell} \right\|_2.
\end{equation}
For MoE systems~\cite{shazeer2017outrageouslylargeneuralnetworks}, based on Equation~\eqref{eq:34}, we have
\begin{equation}
\begin{aligned}
y_{\ell}
&= x'_{\ell} + \mathrm{MoE} \left(\mathrm{LN}(x'_{\ell})\right) \\
&= x'_{\ell}+ \sum_{i \in \mathrm{Top}\text{-}k} g_i  \mathrm{FFN}_i \left(\mathrm{LN}(x'_{\ell})\right) \\
&= x'_{\ell}+ \frac{1}{k} \sum_{i \in \mathrm{Top}\text{-}k} \mathrm{FFN}_i \left(\mathrm{LN}(x'_{\ell})\right),
\qquad (\text{since } g_i = \frac{1}{k}) .
\end{aligned}
\end{equation}
Assuming the routing set $\mathrm{Top}\text{-}k$ is locally constant around $x'_\ell$, the Jacobian satisfies
\begin{equation}
\begin{aligned}
\left\|\frac{\partial y_{\ell}}{\partial x'_{\ell}}\right\|_2 
&= \left\| I + \frac{1}{k} \sum_{i \in \mathrm{Top}\text{-}k}
\frac{\partial \mathrm{FFN}_i(\mathrm{LN}(x'_{\ell}))}{\partial x'_{\ell}} \right\|_2\\
&\le 1 + \frac{1}{k} \left\| \sum_{i \in \mathrm{Top}\text{-}k}
\frac{\partial \mathrm{FFN}_i(\mathrm{LN}(x'_{\ell}))}{\partial x'_{\ell}} \right\|_2 .
\end{aligned}
\end{equation}
We know that,
\begin{equation}
\frac{\partial \mathrm{FFN}_i(\mathrm{LN}(x'_{\ell}))}{\partial x'_{\ell}}
=
\frac{\partial \mathrm{FFN}_i(\mathrm{LN}(x'_{\ell}))}{\partial \mathrm{LN}(x'_{\ell})} 
\frac{\partial \mathrm{LN}(x'_{\ell})}{\partial x'_{\ell}},
\end{equation}
hence
\begin{equation}
\left\|\frac{\partial y_{\ell}}{\partial x'_{\ell}}\right\|_2 
\le 1 + \frac{1}{k}  \left\| \sum_{i \in \mathrm{Top}\text{-}k}
\frac{\partial \mathrm{FFN}_i(\mathrm{LN}(x'_{\ell}))}{\partial \mathrm{LN}(x'_{\ell})}\right\|_2 
\left\|\frac{\partial \mathrm{LN}(x'_{\ell})}{\partial x'_{\ell}}\right\|_2 .
\end{equation}

After adding a scaling factor, we obtain
\begin{equation}
\begin{aligned}
\left\| \frac{\partial y_\ell}{\partial x^\prime_\ell} \right\|_2 
&\le 1 + \frac{1}{k}\left\| \sum_{i \in \mathrm{Top}\text{-}k}
        \frac{\partial \mathrm{FFN}_i(\mathrm{LN}(x^\prime_\ell))}
             {\partial \mathrm{LN}(x^\prime_\ell)} 
\right\|_2  
\left\|  \frac{1}{\sqrt{\ell}} \right\|_2 
\left\| \frac{\partial \mathrm{LN}(x^\prime_\ell)}{\partial x^\prime_\ell} \right\|_2 .
\end{aligned}
\end{equation}
For layers with MoE-FFN, we apply this MoE-specific scaling factor; for layers other than MoE-FFN, we use the original scaling factor.

\section{Experimental Settings}
Across all our experiments, we implement a standard LLM transformer architecture with RoPE~\citep{su2024roformer}, SwiGLU~\citep{shazeer2020gluvariantsimprovetransformer}, and RMSNorm~\citep{zhang2019root}.
We utilize the FineWeb-Edu dataset\footnote{\url{https://huggingface.co/datasets/HuggingFaceFW/fineweb-edu}} for training models across all our experiments and tokenize the dataset with the GPT-NeoX tokenizer\footnote{\url{https://github.com/EleutherAI/gpt-neox}}.
Although different experiment groups may utilize different data sizes, all our evaluation perplexities are computed on the same held-out evaluation set from FineWeb-Edu.
In our experiments, we adapt the training framework from OLMo\footnote{\url{https://github.com/allenai/OLMo}} and use the AdamW optimizer with mixed precision training.
All our experiments are run on Hopper-series GPUs.

\subsection{Depth Control Experiments}
\label{sec:depth_exp_setting}
\begin{table}[th]
\centering
\caption{Model configuration for \cref{sec:variance_cod}.}
\label{tab:depth_exp_model}
\resizebox{0.8\columnwidth}{!}{%
\begin{tabular}{@{}lcccccc@{}}
\toprule
Depth$L$ & \# of Param. & Hidden Dim            & MLP Hidden Dim        & Query Head          & Key-Value Head      & Max Training Length   \\ \midrule
L=12     & 900M         & \multirow{4}{*}{2048} & \multirow{4}{*}{8192} & \multirow{4}{*}{16} & \multirow{4}{*}{16} & \multirow{4}{*}{1024} \\
L=16     & 1.2B         &                       &                       &                     &                     &                       \\
L=24     & 1.7B         &                       &                       &                     &                     &                       \\
L=32     & 2.3B         &                       &                       &                     &                     &                       \\ \bottomrule
\end{tabular}%
}
\end{table}
In~\cref{sec:variance_cod}, we vary the model depths to verify the phenomenon of CoD.
Specific details about each model are in~\cref{tab:depth_exp_model}.
In all runs, we keep the weight decay at 0.1 and warmup iterations at 1000 with a warmup cosine learning rate scheduler.
We sweep the learning rate over
$\{5\times10^{-5}, 1\times10^{-4},5\times10^{-4},8\times10^{-4}, 1\times10^{-3}\}$ and choose 
$1\times10^{-3}$ for $L \in \{12, 16, 24\}$, and $8\times{-4}$ for $L=32$.
All runs are trained on 10B tokens with global batch size of 256, and we evaluate the downstream performance on MMLU, ARC-Easy, ARC-Challenge, and Hellaswag using the implementation from lm-eval-harness\footnote{\url{https://github.com/EleutherAI/lm-evaluation-harness}} with zero-shot settings, reporting accuracy for MMLU and normalized accuracy (acc\_norm) for other tasks.

\subsection{Weight Decay Experiments}
\label{sec:wd_exp_setting}
All of the experiments in~\cref{sec:weight_decay_main_result} are trained with a model of $L=16$ layers, $d=2048$ hidden dimensions, MLP dimension of 8192, and $h=16$ attention heads (with the same number of key-value heads).
We vary the weight decay strength of the AdamW optimizer and use the same learning rate of $1\times10^{-3}$, 1000 warmup iterations, and a warmup cosine learning rate scheduler.
All models are trained on 5B tokens with a global batch size of 128.

\subsection{Sequence Length Experiments}
\label{sec:sequence_length_exp_setting}
All of the experiments regarding sequence length in~\cref{sec:sequence_length_main_result} are trained with a model of $L=16$ layers, $d=2048$ hidden dimensions, MLP dimension of 8192, and $h=16$ attention heads (with the same number of key-value heads).
We keep the weight decay at 0.1, learning rate at
$1\times10^{-3}$, 1000 warmup iterations, and a warmup cosine learning rate scheduler.
All models are trained on 5B tokens with a global batch size of 256.

\subsection{Group Query Attention Experiments}
\label{sec:group_query_attention_exp_setting}
The experiments regarding GQA in~\cref{sec:gqa_main_result} are trained with a model of $L=16$ layers, $d=2048$ hidden dimensions, MLP dimension of 8192, and $h=16$ attention heads.
The group size settings $G\in\{1, 4, 16\}$ correspond to key-value head sizes of 16, 4, and 1, respectively.
With this configuration, the model sizes are approximately 1.2B (1.176B), 1.1B (1.076B), and 1.051B parameters.
We keep the training sequence length at 4096, weight decay at 0.1, learning rate at
$1\times10^{-3}$, 1000 warmup iterations, and a warmup cosine learning rate scheduler.
The $G=1$ models are trained on 5B tokens, while $G=4$ and $G=16$ are trained on 5.35B and 5.6B tokens, respectively, to keep the training FLOPs constant.
All experiments are trained with a global batch size of 256.

\subsection{Mixture of Expert Experiments}
\label{sec:moe_exp_setting}
\begin{table}[th]
\centering
\caption{Model configuration for~\cref{sec:moe_matin_result}.}
\label{tab:moe_model_detail}
\resizebox{\columnwidth}{!}{%
\begin{tabular}{@{}lcccccccc@{}}
\toprule
            & Depth               & Hidden Dim & MLP Hidden Dim & Query Head & Key-Value Head & Total Experts & Activated Experts & Shared Experts \\ \midrule
Dense-400M  & \multirow{4}{*}{16} & 1536       & 3072           & \multicolumn{2}{c}{12}      & -             & -                 & -              \\
MoE-2BA400M &                     & 768        & 1536           & \multicolumn{2}{c}{12}      & 32            & 4                 & 1              \\
Dense-1B    &                     & 2048       & 8192           & \multicolumn{2}{c}{16}      & -             & -                 & -              \\
MoE-7BA1B &                     & 2048       & 2048           & \multicolumn{2}{c}{16}      & 64            & 8                 & -              \\ \bottomrule
\end{tabular}%
}
\end{table}
We adopt the kernel and MoE implementation from OLMo-core\footnote{\url{https://github.com/allenai/OLMo-core}} and utilize the implementation of DropLess MLP~\citep{gale2022megablocksefficientsparsetraining}.
For both size configurations of MoE training, we set the load balancing loss~\citep{shazeer2017outrageouslylargeneuralnetworks} to 0.01 and router z-loss to 0.001~\citep{zoph2022stmoedesigningstabletransferable} according to the practice of OLMoE~\citep{muennighoff2024olmoe}.
The detailed configurations of the models for the two setups are presented in~\cref{tab:moe_model_detail}.
For all runs, we set the training sequence length at 4096, weight decay at 0.1, and 1000 warmup iterations with a warmup cosine learning rate scheduler, training on 5B total tokens with a global batch size of 256.
We set the learning rate to $1\times10^{-3}$ for Dense-400M, MoE-2BA400M, and Dense-1B experimental runs, but set the learning rate for MoE-7BA1B to $4\times10^{-4}$ according to learning rate sweep results.

\subsection{Ablation Experiments}
\label{app:sec:ablation_exp_settings}
\begin{table}[th]
\centering
\caption{Model configuration for \cref{sec:sparsity_ablation}}
\label{tab:sparsity_ab_model_configs}
\resizebox{\columnwidth}{!}{%
\begin{tabular}{@{}lcccccccc@{}}
\toprule
           & Depth & Hidden Dim & MLP Hidden Dim & Query Head & Key-Value Head & Total Experts & Activated Experts & Shared Experts \\ \midrule
$L=16$     & 16    & 2048       & 8192           & \multicolumn{2}{c}{16}      & -             & -                 & -              \\
$L=32$     & 32    & 1536       & 6144           & \multicolumn{2}{c}{16}      & -             & -                 & -              \\
MoE-$L=32$ & 32    & 1120       & 1120           & \multicolumn{2}{c}{16}      & 64            & 8                 & -              \\ \bottomrule
\end{tabular}%
}
\end{table}
For all experiments in \cref{sec:sparsity_ablation}, we use 1.2B parameter models with configurations detailed in \cref{tab:sparsity_ab_model_configs}.
We train on 10B tokens from FineWeb-Edu and report evaluation perplexity on a held-out dataset, along with zero-shot accuracy on ARC-Challenge~\citep{clark2018thinksolvedquestionanswering} and HellaSwag~\citep{zellers2019hellaswagmachinereallyfinish}.

All experiments use the AdamW optimizer with a global batch size of 256.
Unless otherwise specified, models are trained with weight decay of 0.1, context length of 1024, 1000 warmup iterations, and a cosine learning rate scheduler.
We use a learning rate of $4\times10^{-4}$ for MoE models and $1\times10^{-3}$ for all other configurations.

\newpage

\subsection{Weight Decay}
\begin{figure*}[h]
  \centering
      \begin{subfigure}[t]{0.26\textwidth}
        \centering
        \includegraphics[width=\linewidth]{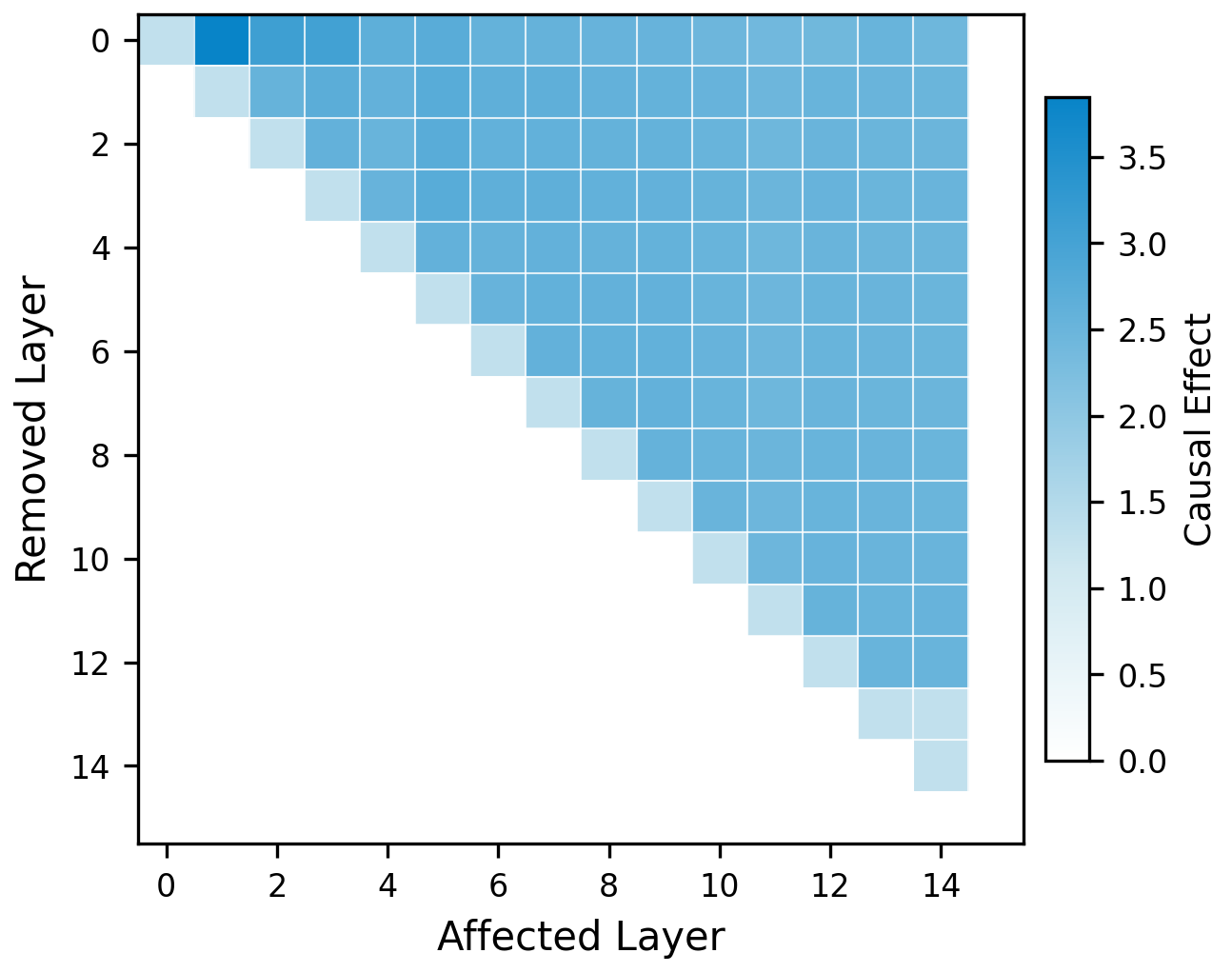}
        \vspace{-5mm}
        \caption{Causal score}
      \end{subfigure}
      \begin{subfigure}[t]{0.26\textwidth}
        \centering
        \includegraphics[width=\linewidth]{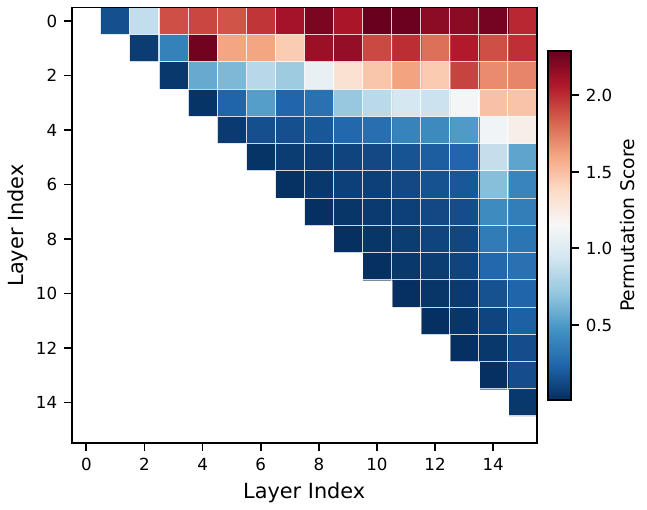}
        \vspace{-5mm}
        \caption{Permutation score}
      \end{subfigure}
      \begin{subfigure}[t]{0.36\textwidth}
        \centering
        \includegraphics[width=\linewidth]{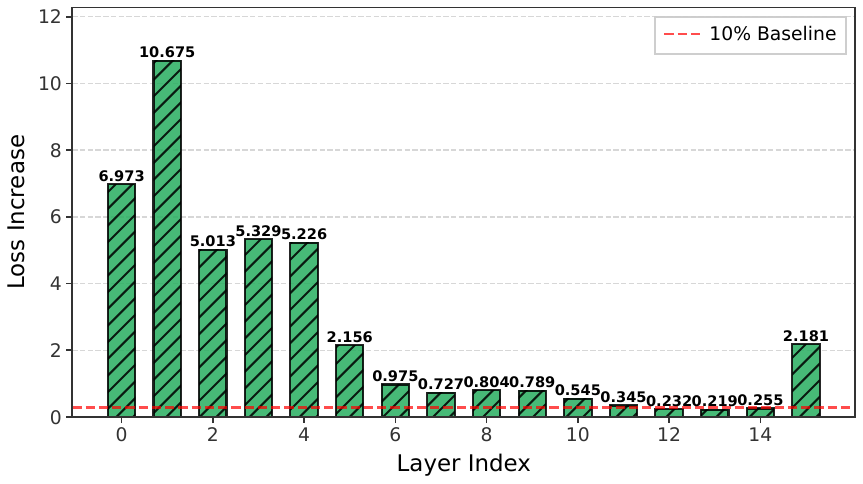}
        \vspace{-5mm}
        \caption{Loss increase}
      \end{subfigure}
      \vspace{-1mm}
        \caption{
        Score visualziation for weight decay $\lambda=0.1$.
        }
        \label{fig:wd_score_01}
\end{figure*}
\begin{figure*}[h]
  \centering
      \begin{subfigure}[t]{0.26\textwidth}
        \centering
        \includegraphics[width=\linewidth]{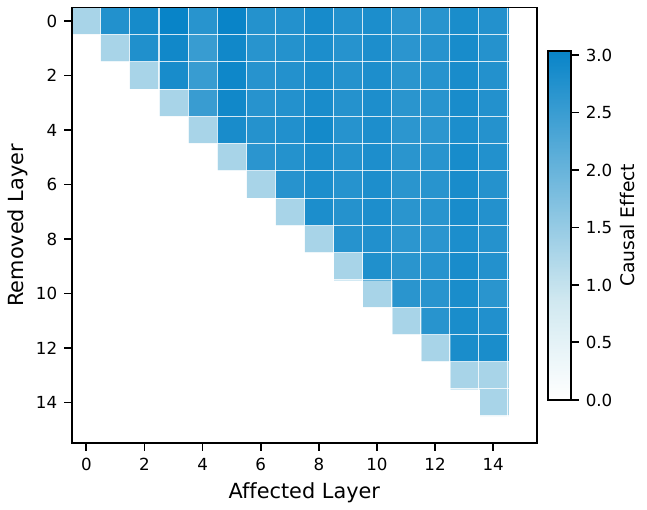}
        \vspace{-5mm}
        \caption{Causal score}
      \end{subfigure}
      \begin{subfigure}[t]{0.26\textwidth}
        \centering
        \includegraphics[width=\linewidth]{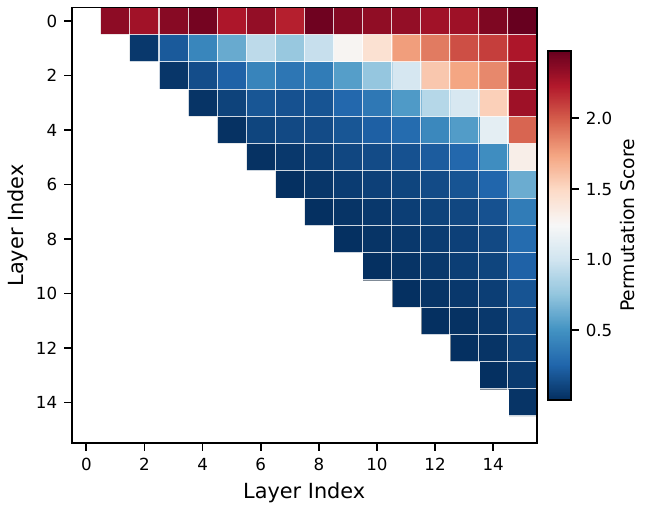}
        \vspace{-5mm}
        \caption{Permutation score}
      \end{subfigure}
      \begin{subfigure}[t]{0.36\textwidth}
        \centering
        \includegraphics[width=\linewidth]{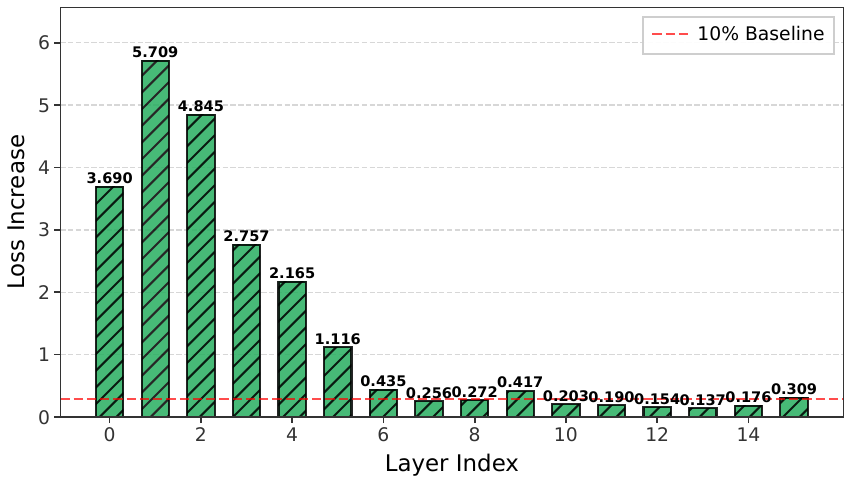}
        \vspace{-5mm}
        \caption{Loss increase}
      \end{subfigure}
      \vspace{-1mm}
        \caption{
        Score visualziation for weight decay $\lambda=1$.
        }
        \label{fig:wd_score_1}
\end{figure*}
\begin{figure*}[h]
  \centering
      \begin{subfigure}[t]{0.26\textwidth}
        \centering
        \includegraphics[width=\linewidth]{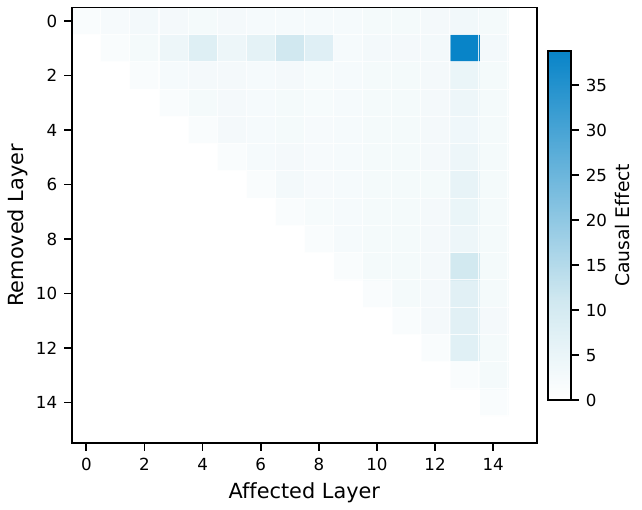}
        \vspace{-5mm}
        \caption{Causal score}
      \end{subfigure}
      \begin{subfigure}[t]{0.26\textwidth}
        \centering
        \includegraphics[width=\linewidth]{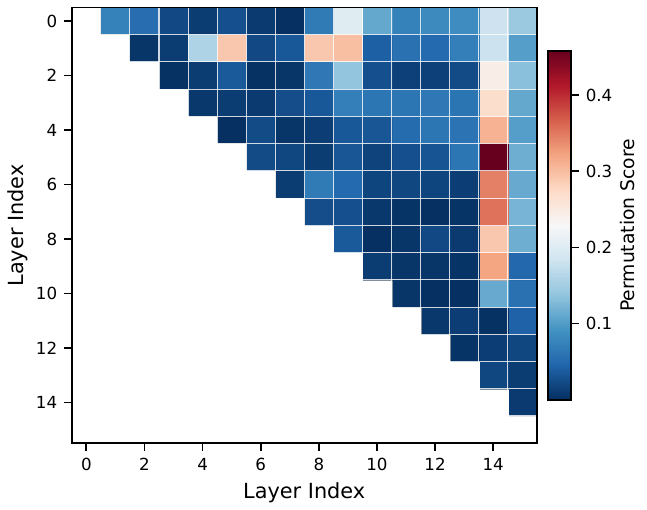}
        \vspace{-5mm}
        \caption{Permutation score}
      \end{subfigure}
      \begin{subfigure}[t]{0.36\textwidth}
        \centering
        \includegraphics[width=\linewidth]{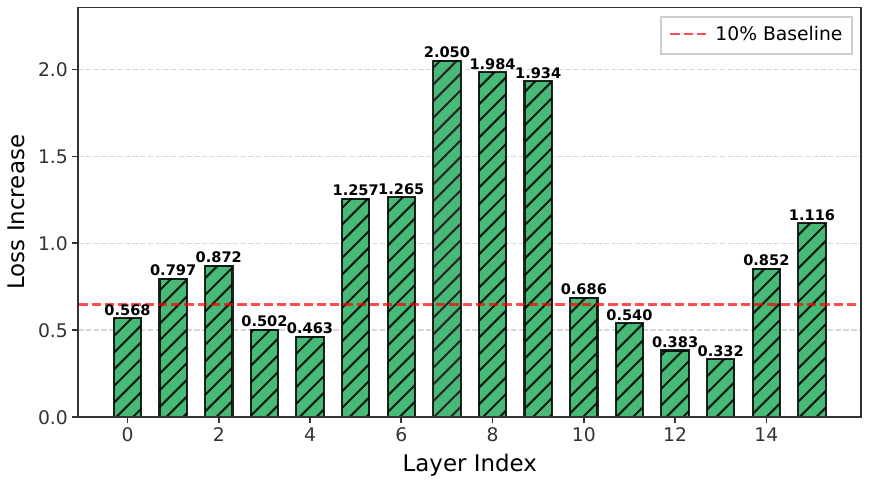}
        \vspace{-5mm}
        \caption{Loss increase}
      \end{subfigure}
      \vspace{-1mm}
        \caption{
        Score visualziation for weight decay $\lambda=3$.
        }
        \label{fig:wd_score_3}
\end{figure*}
Visualizations of each score for $\lambda \in \{0.1, 1, 3\}$ are provided in~\cref{fig:wd_score_01,fig:wd_score_1,fig:wd_score_3}, respectively.
\newpage

\subsection{Sequence Length}
\begin{figure*}[h]
  \centering
      \begin{subfigure}[t]{0.26\textwidth}
        \centering
        \includegraphics[width=\linewidth]{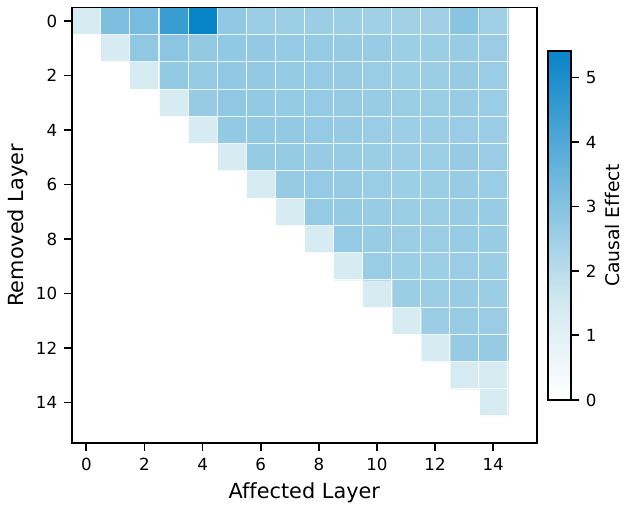}
        \vspace{-5mm}
        \caption{Causal score}
      \end{subfigure}
      \begin{subfigure}[t]{0.26\textwidth}
        \centering
        \includegraphics[width=\linewidth]{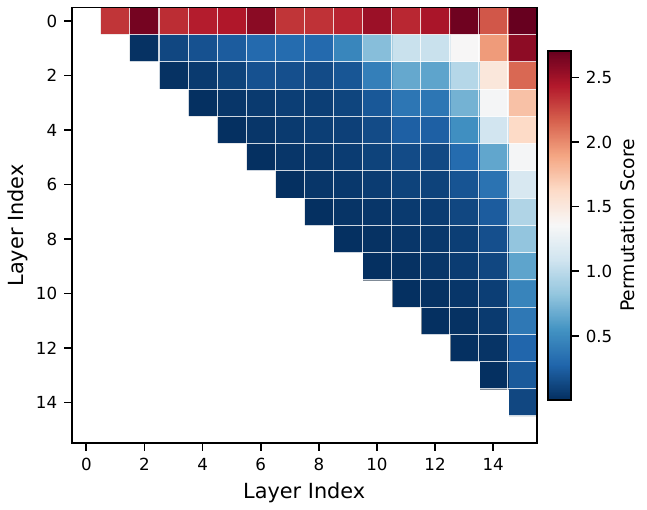}
        \vspace{-5mm}
        \caption{Permutation score}
      \end{subfigure}
      \begin{subfigure}[t]{0.36\textwidth}
        \centering
        \includegraphics[width=\linewidth]{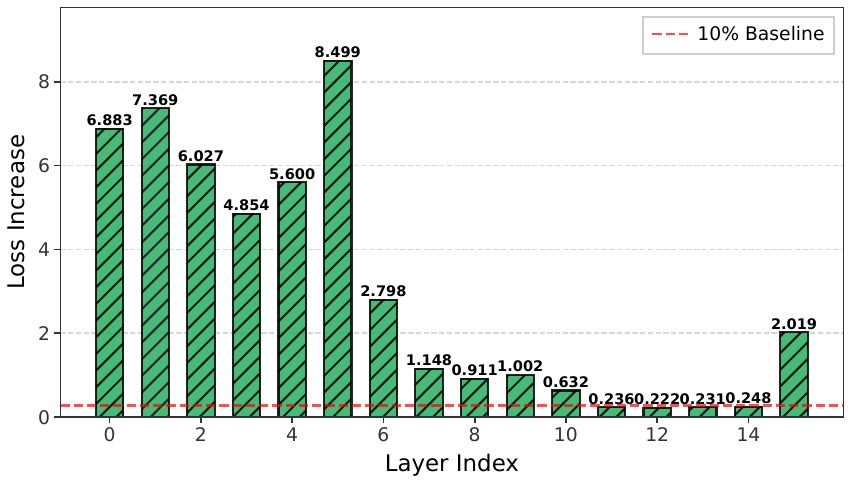}
        \vspace{-5mm}
        \caption{Loss increase}
      \end{subfigure}
      \vspace{-1mm}
        \caption{
        Score visualziation for sequence length $T=1024$.
        }
        \label{fig:sq_score_1024}
\end{figure*}
\begin{figure*}[h]
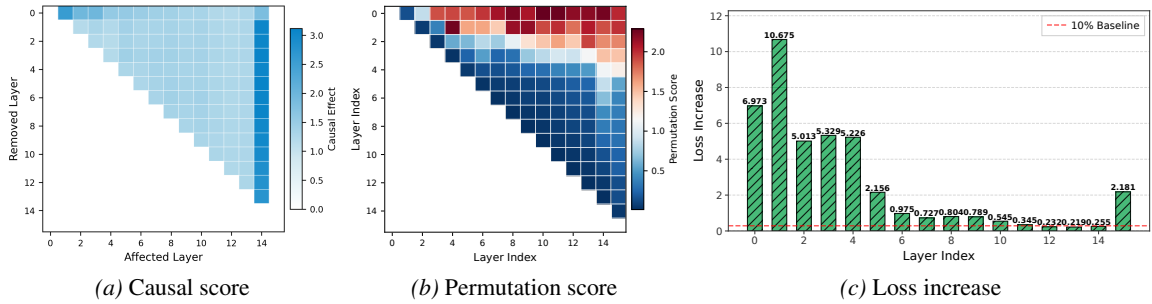

  \centering
      \begin{subfigure}[t]{0.26\textwidth}
        \centering
        \includegraphics[width=\linewidth]{image/score_weight_decay/wd01/causal_effect_heatmap.png}
        \vspace{-5mm}
        \caption{Causal score}
      \end{subfigure}
      \begin{subfigure}[t]{0.26\textwidth}
        \centering
        \includegraphics[width=\linewidth]{image/score_weight_decay/wd01/permutation_score_heatmap.pdf}
        \vspace{-5mm}
        \caption{Permutation score}
      \end{subfigure}
      \begin{subfigure}[t]{0.36\textwidth}
        \centering
        \includegraphics[width=\linewidth]{image/score_weight_decay/wd01/loss_increase_bar_loss_increase.pdf}
        \vspace{-5mm}
        \caption{Loss increase}
      \end{subfigure}
      \vspace{-1mm}
        \caption{
        Score visualziation for sequence length $T=4096$.
        }
        \label{fig:sq_score_4096}
\end{figure*}
\begin{figure*}[h]
  \centering
      \begin{subfigure}[t]{0.26\textwidth}
        \centering
        \includegraphics[width=\linewidth]{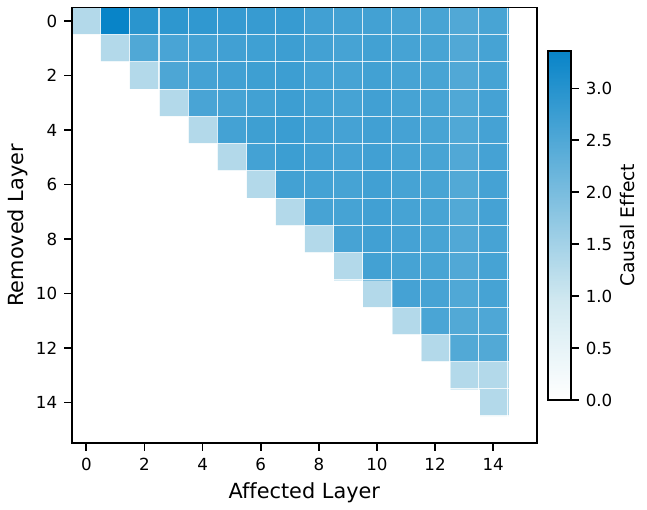}
        \vspace{-5mm}
        \caption{Causal score}
      \end{subfigure}
      \begin{subfigure}[t]{0.26\textwidth}
        \centering
        \includegraphics[width=\linewidth]{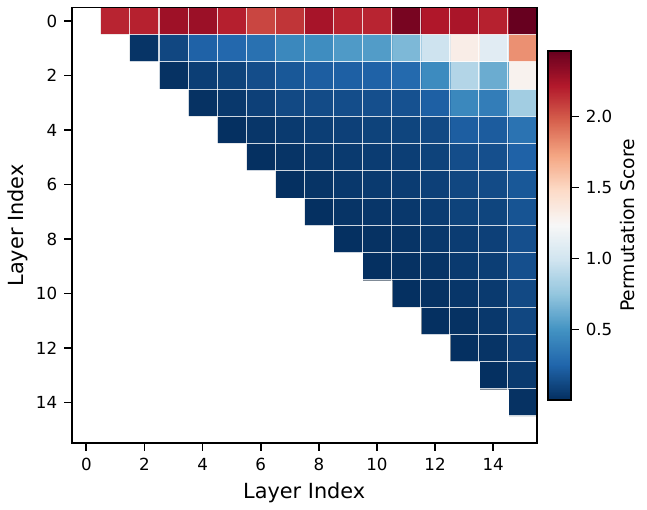}
        \vspace{-5mm}
        \caption{Permutation score}
      \end{subfigure}
      \begin{subfigure}[t]{0.36\textwidth}
        \centering
        \includegraphics[width=\linewidth]{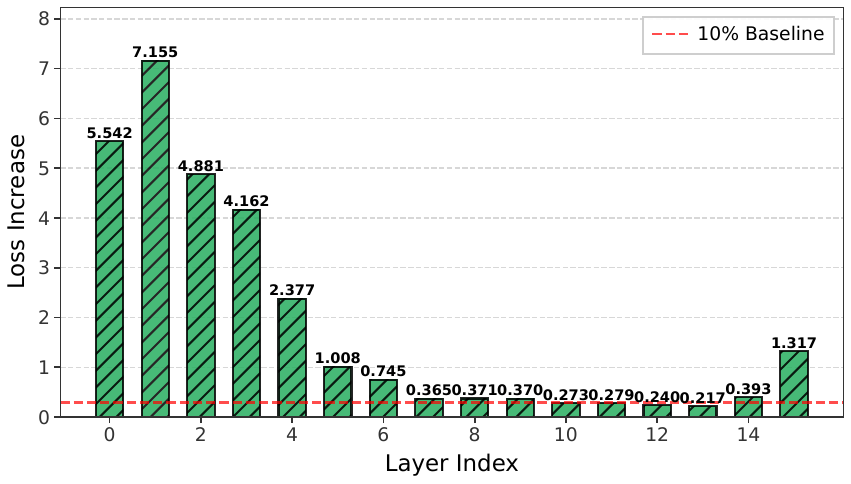}
        \vspace{-5mm}
        \caption{Loss increase}
      \end{subfigure}
      \vspace{-1mm}
        \caption{
        Score visualziation for sequence length $T=8192$.
        }
        \label{fig:sq_score_8192}
\end{figure*}
Visualizations of each score for $T \in \{1024, 4096, 8192\}$ are provided in~\cref{fig:sq_score_1024,fig:sq_score_4096,fig:sq_score_8192}, respectively.
\newpage

\subsection{Grouped Query Attention}
\begin{figure*}[h]
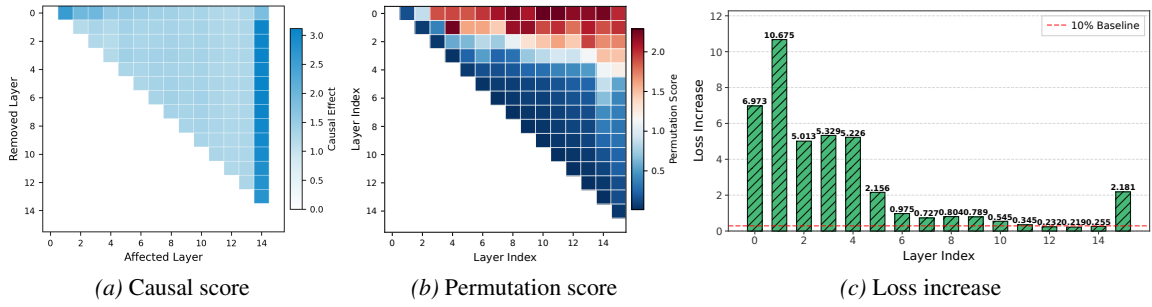

  \centering
      \begin{subfigure}[t]{0.26\textwidth}
        \centering
        \includegraphics[width=\linewidth]{image/score_weight_decay/wd01/causal_effect_heatmap.png}
        \vspace{-5mm}
        \caption{Causal score}
      \end{subfigure}
      \begin{subfigure}[t]{0.26\textwidth}
        \centering
        \includegraphics[width=\linewidth]{image/score_weight_decay/wd01/permutation_score_heatmap.pdf}
        \vspace{-5mm}
        \caption{Permutation score}
      \end{subfigure}
      \begin{subfigure}[t]{0.36\textwidth}
        \centering
        \includegraphics[width=\linewidth]{image/score_weight_decay/wd01/loss_increase_bar_loss_increase.pdf}
        \vspace{-5mm}
        \caption{Loss increase}
      \end{subfigure}
      \vspace{-1mm}
        \caption{
        Score visualziation for MHA $G=1$.
        }
        \label{fig:gqa_mha_score}
\end{figure*}
\begin{figure*}[h]
  \centering
      \begin{subfigure}[t]{0.26\textwidth}
        \centering
        \includegraphics[width=\linewidth]{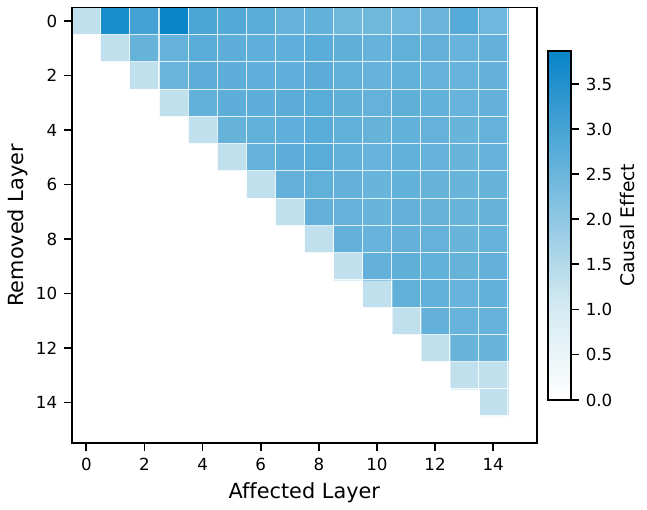}
        \vspace{-5mm}
        \caption{Causal score}
      \end{subfigure}
      \begin{subfigure}[t]{0.26\textwidth}
        \centering
        \includegraphics[width=\linewidth]{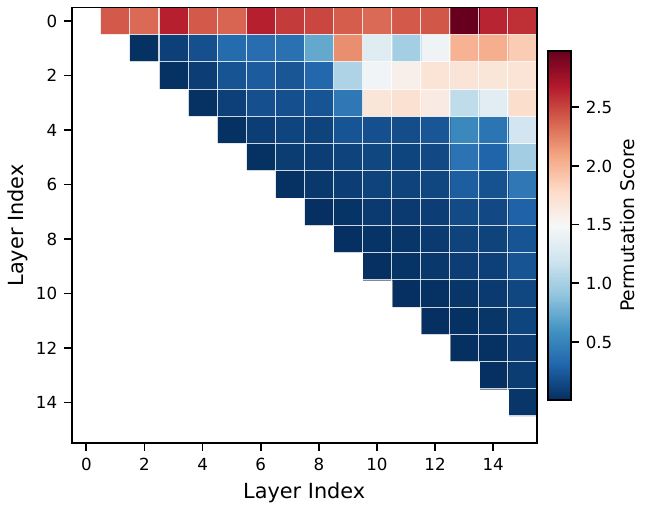}
        \vspace{-5mm}
        \caption{Permutation score}
      \end{subfigure}
      \begin{subfigure}[t]{0.36\textwidth}
        \centering
        \includegraphics[width=\linewidth]{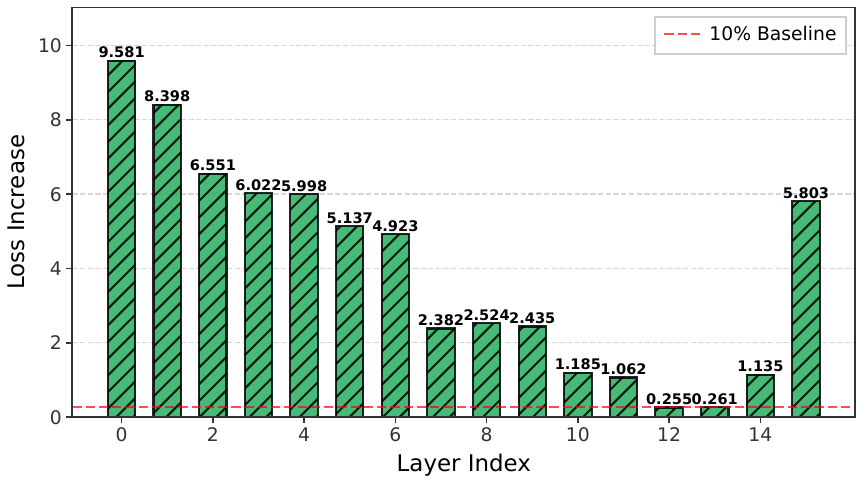}
        \vspace{-5mm}
        \caption{Loss increase}
      \end{subfigure}
      \vspace{-1mm}
        \caption{
        Score visualziation for MQA $G=16$.
        }
        \label{fig:gqa_mqa_score}
\end{figure*}
Visualizations of each score for $G \in \{1, 16\}$ are provided in~\cref{fig:gqa_mha_score,fig:gqa_mqa_score}, respectively.

\newpage

\subsection{Mixture of Experts}
\begin{figure*}[h]
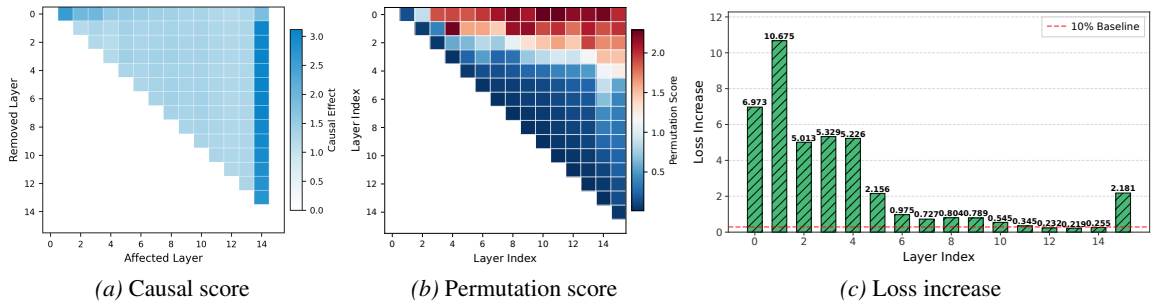

  \centering
      \begin{subfigure}[t]{0.26\textwidth}
        \centering
        \includegraphics[width=\linewidth]{image/score_weight_decay/wd01/causal_effect_heatmap.png}
        \vspace{-5mm}
        \caption{Causal score}
      \end{subfigure}
      \begin{subfigure}[t]{0.26\textwidth}
        \centering
        \includegraphics[width=\linewidth]{image/score_weight_decay/wd01/permutation_score_heatmap.pdf}
        \vspace{-5mm}
        \caption{Permutation score}
      \end{subfigure}
      \begin{subfigure}[t]{0.36\textwidth}
        \centering
        \includegraphics[width=\linewidth]{image/score_weight_decay/wd01/loss_increase_bar_loss_increase.pdf}
        \vspace{-5mm}
        \caption{Loss increase}
      \end{subfigure}
      \vspace{-1mm}
        \caption{
        Score visualziation for Dense-1B.
        }
        \label{fig:moe_score_dense}
\end{figure*}
\begin{figure*}[h]
  \centering
      \begin{subfigure}[t]{0.26\textwidth}
        \centering
        \includegraphics[width=\linewidth]{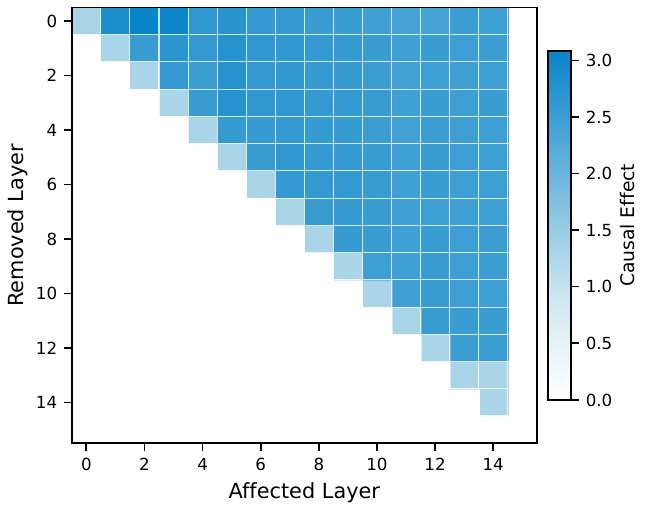}
        \vspace{-5mm}
        \caption{Causal score}
      \end{subfigure}
      \begin{subfigure}[t]{0.26\textwidth}
        \centering
        \includegraphics[width=\linewidth]{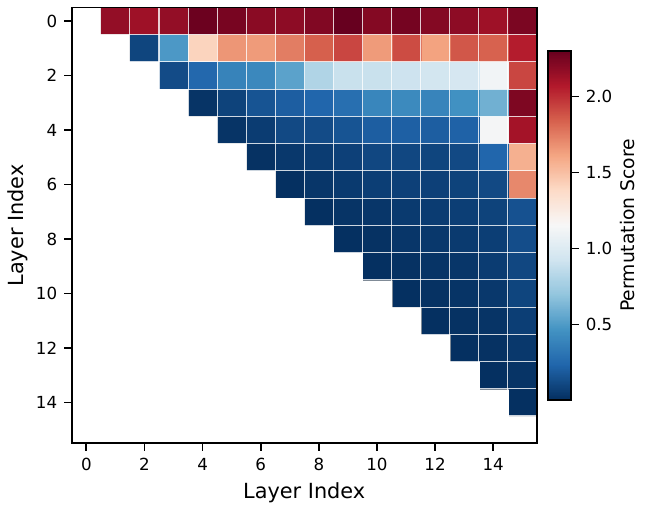}
        \vspace{-5mm}
        \caption{Permutation score}
      \end{subfigure}
      \begin{subfigure}[t]{0.36\textwidth}
        \centering
        \includegraphics[width=\linewidth]{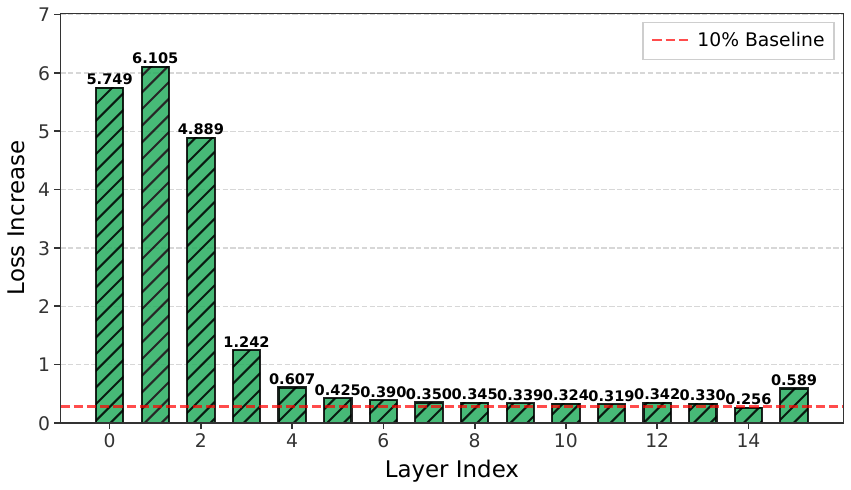}
        \vspace{-5mm}
        \caption{Loss increase}
      \end{subfigure}
      \vspace{-1mm}
        \caption{
        Score visualziation for MoE-7BA1B.
        }
        \label{fig:moe_score_moe}
\end{figure*}
Visualizations of Dense-1B and MoE-7BA1B are provided in~\cref{fig:moe_score_dense,fig:moe_score_moe}, respectively.

\end{document}